\documentclass[twoside,11pt]{article}

\usepackage{jmlr2e}

\usepackage{amsbsy,amsfonts,dsfont,units,amsmath}
\usepackage{algorithm,algorithmic}
\usepackage{color,graphics,stmaryrd}
\usepackage{hyperref}
\usepackage{tikz}
\usetikzlibrary{arrows}
\usetikzlibrary{calc}
\usepackage{tikz-3dplot}

\newtheorem{condition}{Condition}
\newtheorem{assumption}{Assumption}

\providecommand{\eqref}{} 
\renewcommand{\eqref}[1]{Eq.~(\ref{eq:#1})}
\newcommand{\figref}[1]{Figure~\ref{fig:#1}}
\newcommand{\tabref}[1]{Table~\ref{table:#1}}        
\newcommand{\secref}[1]{Section~\ref{sec:#1}}
\newcommand{\thmref}[1]{Theorem~\ref{thm:#1}}
\newcommand{\lemref}[1]{Lemma~\ref{lem:#1}}

\newcommand{\corref}[1]{Corollary~\ref{cor:#1}}
\newcommand{\appref}[1]{Appendix~\ref{app:#1}}
\newcommand{\propref}[1]{Proposition~\ref{prop:#1}}
\newcommand{\algref}[1]{Algorithm~\ref{alg:#1}}
\newcommand{\condref}[1]{Condition~\ref{cond:#1}}
\newcommand{\defref}[1]{Def.~\ref{def:#1}}
\newcommand{\asref}[1]{Assumption~\ref{as:#1}}

\renewcommand{\P}{\ensuremath{\mathbb{P}}}
\newcommand{\E}{\ensuremath{\mathbb{E}}}

\newcommand{\tr}{\ensuremath{\operatorname{tr}}}

\newcommand{\reals}{\ensuremath{\mathbb{R}}}
\newcommand{\nats}{\ensuremath{\mathbb{N}}}
\newcommand{\half}{\ensuremath{\frac12}}
\newcommand{\ceil}[1]{\ensuremath{\lceil #1\rceil}}
\newcommand{\Ceil}[1]{\ensuremath{\left\lceil #1\right\rceil}}
\newcommand{\floor}[1]{\ensuremath{\lfloor #1\rfloor}}

\newcommand{\dotprod}[1]{\ensuremath{\langle #1 \rangle}}

\newcommand{\grad}{\ensuremath{\nabla}}

\DeclareMathOperator*{\argmin}{argmin}

\newcommand\opt{{\ensuremath{\operatorname{\star}}}}
\newcommand\sq{{\ensuremath{\operatorname{sq}}}}


\newcommand{\cD}{\ensuremath{\mathcal{D}}}
\newcommand{\cE}{\ensuremath{\mathcal{E}}}

\newcommand{\cN}{\ensuremath{\mathcal{N}}}

\newcommand{\cX}{\ensuremath{\mathcal{X}}}
\newcommand{\cY}{\ensuremath{\mathcal{Y}}}
\newcommand{\cZ}{\ensuremath{\mathcal{Z}}}

\newcommand\X{\ensuremath{\mathbb{X}}}
\renewcommand\t{\ensuremath{{\scriptscriptstyle{\top}}}}
\newcommand\ind[1]{{\ensuremath{\mathds{1}\{#1\}}}}
\newcommand\veps{{\ensuremath{\varepsilon}}}
\newcommand\eps{{\ensuremath{\epsilon}}}
\newcommand\Sig{{\ensuremath{\varSigma}}}
\newcommand\ball{B}
\renewcommand\vec[1]{{\boldsymbol{#1}}}
\newcommand\ve{{\vec{e}}}
\newcommand\vw{{\vec{w}}}
\newcommand\vx{{\vec{x}}}
\newcommand\vu{{\vec{u}}}
\newcommand\vX{{\vec{X}}}
\newcommand\vy{{\vec{y}}}
\newcommand\vv{{\vec{v}}}
\newcommand\Id{\ensuremath{\operatorname{Id}}}
\newcommand{\dotp}[1]{\ensuremath{\langle #1 \rangle_{\X}}}
\newcommand\supp{\operatorname{supp}}

\newcommand\norm[1]{\ensuremath{\|#1\|}}
\newcommand\dualnorm[1]{\ensuremath{\|#1\|_*}}

\newcommand{\nrm}{{\norm{\cdot}}}
\newcommand{\dualnrm}{{\norm{\cdot}_*}}

\newcommand\wh{\widehat}
\newcommand\bbeta{\ensuremath{\bar{\beta}}}
\newcommand\emp{\ensuremath{\operatorname{emp}}}

\newcommand\median{\ensuremath{\operatorname{median}}}
\newcommand\rank{\ensuremath{\operatorname{rank}}}

\newcommand\APPROX{\ensuremath{\operatorname{APPROX}}}
\newcommand\DIST{\ensuremath{\operatorname{DIST}}}
\newcommand{\gmed}{\mathrm{sumd}}
\newcommand{\omed}{\Delta}

\newcommand{\todo}[1]{}

\jmlrheading{17}{2016}{1-40}{7/14; Revised 6/15}{4/16}{Daniel Hsu and Sivan Sabato}


\ShortHeadings{Loss minimization and parameter estimation with heavy tails}{Hsu and Sabato}
\firstpageno{1}

\begin{document}

\title{Loss Minimization and Parameter Estimation \\ with Heavy Tails}

\author{%
\name Daniel Hsu
  \email djhsu@cs.columbia.edu \\
  \addr Department of Computer Science \\
  Columbia University \\
  New York, NY 10027, USA
       \AND
       \name Sivan Sabato \email sabatos@cs.bgu.ac.il \\
       \addr Department of Computer Science\\
       Ben-Gurion University of the Negev\\
       Beer-Sheva 8410501, Israel\\
       }

\editor{David Dunson}

\maketitle

\begin{abstract}%
This work studies applications and generalizations of a simple
estimation technique that provides exponential concentration under
heavy-tailed distributions, assuming only bounded low-order moments.
We show that the technique can be used for approximate minimization of
smooth and strongly convex losses, and specifically for least squares
linear regression.
For instance, our $d$-dimensional estimator requires just
$\tilde{O}(d\log(1/\delta))$ random samples to obtain a constant
factor approximation to the optimal least squares loss with
probability $1-\delta$, without requiring the covariates or noise to
be bounded or subgaussian.
We provide further applications to sparse linear regression and
low-rank covariance matrix estimation with similar allowances on the
noise and covariate distributions.
The core technique is a generalization of the median-of-means estimator
to arbitrary metric spaces.%
\end{abstract}

\begin{keywords}%
Heavy-tailed distributions, unbounded losses, linear regression, least
squares
\end{keywords}

\section{Introduction}
\label{sec:intro}

The minimax principle in statistical estimation prescribes procedures
(\emph{i.e.}, estimators) that minimize the worst-case risk over a
large class of distributions generating the data.
For a given loss function, the risk is the expectation of the loss of
the estimator, where the expectation is taken over the data examined
by the estimator.
For example, for a large class of loss functions including squared
loss, the empirical mean estimator minimizes the worst-case risk over
the class of Gaussian distributions with known
variance~\citep{wolfowitz50}.
In fact, Gaussian distributions with the specified variance are
essentially the worst-case family of distributions for squared loss,
at least up to constants~\citep[see, \emph{e.g.},][Proposition
6.1]{catoni}.

In this work, we are interested in estimators whose deviations from
expected behavior are controlled with very high probability over the
random draw of the data examined by the estimator.
Deviations of the behavior of the estimator from its expected behavior are worrisome especially when data come from unbounded
and/or heavy-tail distributions, where only very low order moments may
be finite.
For example, the Pareto distributions with shape parameter $\alpha>0$
are unbounded and have finite moments only up to orders $< \alpha$;
these distributions are commonly associated with the modeling of
extreme events that manifest in data.
Bounds on the expected behavior of an estimator are insufficient in
these cases, since the high-probability guarantees that may be derived from such bounds
(say, using Markov's inequality) are rather weak.
For example, if the risk (\emph{i.e.}, expected loss) of an estimator
is bounded by $\epsilon$, then all that we may derive from Markov's
inequality is that the loss is no more than $\epsilon/\delta$ with
probability at least $1-\delta$.
For small values of $\delta \in (0,1)$, the guarantee is not very
reassuring, but it may be all one can hope for in these extreme
scenarios---see Remark~\ref{remark:empirical-mean} in
\secref{median-of-means} for an example where this is tight.
Much of the work in statistical learning theory is also primarily
concerned with such high probability guarantees, but the bulk of the
work makes either boundedness or subgaussian tail assumptions that
severely limit the applicability of the results even in settings as
simple as linear regression~\citep[see,
\emph{e.g.},][]{smooth-loss,ohad}.

Recently, it has been shown that it is possible to improve on methods
which are optimal for expected behavior but suboptimal when
high-probability deviations are concerned~\citep{AC11,catoni,lugosi}.
These improvements, which are important when dealing with heavy-tailed
distributions, suggest that new techniques (\emph{e.g.}, beyond
empirical risk minimization) may be able to remove the reliance on
boundedness or control of high-order moments.
\citet{BubeckCeLu13} show how a more robust mean estimator can be used
for solving the stochastic multi-armed bandit problem under
heavy-tailed distributions.

This work applies and generalizes a technique for controlling large
deviations from the expected behavior with high probability, assuming only
bounded low-order moments such as variances.
We show that the technique is applicable to minimization of smooth and
strongly convex losses, and derive specific loss bounds for least
squares linear regression, which match existing rates, but without
requiring the noise or covariates to be bounded or subgaussian.
This contrasts with several recent
works~\citep{smooth-loss,HKZ12,ohad} concerned with (possibly
regularized) empirical risk minimizers that require such assumptions.
It is notable that in finite dimensions, our result implies that a constant
factor approximation to the optimal loss can be achieved with a sample size
that is independent of the size of the optimal loss.
This improves over the recent work of~\citet{MJ13}, which has a
logarithmic dependence on the optimal loss, as well as a suboptimal
dependence on specific problem parameters (namely condition numbers).
We also provide a new generalization of the basic technique for
general metric spaces, which we apply to least squares linear
regression with heavy tail covariate and noise distributions, yielding
an improvement over the computationally expensive procedure
of~\citet{AC11}.

The basic technique, found in the textbook of \citet[p.~243]{NY}, is
very simple, and can be viewed as a generalization of the
median-of-means estimator used by \citet{alon99} and many others.
The idea is to repeat an estimate several times, by splitting the
sample into several groups, and then selecting a single estimator out of 
the resulting list of candidates.
If an estimator from one group is good with noticeably
better-than-fair chance, then the selected estimator will be good with
probability exponentially close to one.
This is remininscant of techniques from \emph{robust
statistics}~\citep{Huber81}, although our aim is expressly different
in that our aim is good performance on the same probability
distribution generating the data, rather than an uncontaminated or
otherwise better behaved distribution.
Our new technique can be cast as a simple selection problem in general
metric spaces that generalizes the scalar median.

We demonstrate the versatility of our technique by giving further
examples in sparse linear regression~\citep{tibshirani96} under
heavy-tailed noise and low-rank covariance covariance matrix
approximation~\citep{KLT11} under heavy-tailed covariate
distributions.
We also show that for prediction problems where there may not be a
reasonable metric on the predictors, one can achieve similar
high-probability guarantees by using median aggregation in the output
space.

The initial version of this article~\citep{HS13-heavy,HsuSabato14}
appeared concurrently with the simultaneous and independent work
of~\citet{Minsker13}, which develops a different generalization of the
median-of-means estimator for Banach and Hilbert spaces.
We provide a new analysis and comparison of this technique to ours in
\secref{geometric}.
We have also since become aware of the earlier work by~\citet{LO11},
which applies the median-of-means technique to empirical risks in
various settings much like the way we do in \algref{main2}, although
our metric formulation is more general.
Finally, the recent work of~\citet{lugosi} vastly generalizes the
techniques of~\citet{catoni} to apply to much more general settings,
although they retain some of the same deficiencies (such as the need
to know the noise variance for the optimal bound for least squares
regression), and hence their results are not directly comparable to
ours.

\todo{general discussion /related works about heavy-tails, robust
statistics (caltech book?), minsker's results. check also minsker's new work}



\section{Overview of Main Results}\label{sec:overview}

This section gives an overview of the main results.

\subsection{Preliminaries}

Let $[n] := \{1,2,\dotsc,n\}$ for any natural number $n \in \nats$.
Let $\ind{P}$ take value $1$ if the predicate $P$ is true, and $0$
otherwise. Assume an example space $\cZ$, and a distribution $\cD$ over the space. Further assume a space of predictors or estimators $\X$. We consider learning or estimation algorithms that accept as input an i.i.d.\ sample of size $n$ drawn from $\cD$ and a confidence parameter $\delta \in (0,1)$, and return an estimator (or predictor) $\hat{\vw} \in \X$.  
For a (pseudo) metric $\rho$ on $\X$, let $\ball_\rho(\vw_0,r) := \{
\vw \in \X : \rho(\vw_0,\vw) \leq r \}$ denote the ball of radius $r$
around $\vw_0$.

We assume a loss function $\ell:\cZ \times \X \rightarrow \reals_+$ that assigns a non-negative number to a pair of an example from $\cZ$ and a predictor from $\X$, and consider the task of finding a predictor that has a small loss in expectation over the distribution of data points, based on an input sample of $n$ examples drawn independently from $\cD$. The expected loss of a predictor $\vw$ on the distribution is denoted $L(\vw) = \E_{Z \sim D}(\ell(Z,\vw))$. Let $L_\opt := \inf_{\vw} L(\vw)$. Our goal is to find $\hat{\vw}$ such that $L(\hat{\vw})$ is close to $L_\opt$. 

In this work, we are interested in performance guarantees that hold
with high probability over the random draw of the input sample and
any internal randomization used by the estimation algorithm.
Thus, for a given allowed probability of failure $\delta \in (0,1)$,
we study excess loss $L(\hat{\vw}) - L_\opt$ achieved by the predictor
$\hat{\vw} \equiv \hat{\vw}(\delta)$ returned by the algorithm on a
$1-\delta$ probability subset of the sample space.
Ideally, the excess loss only depends sub-logarithmically on
$1/\delta$, which is the dependence achieved when the distribution of
the excess loss has exponentially decreasing tails.
Note that we assume that the value of $\delta$ is provided as input to
the estimation algorithm, and only demand the probabilistic guarantee
for this given value of $\delta$.
Therefore, strictly speaking, the excess loss need not exhibit
exponential concentration.
Nevertheless, in this article, we shall say that an estimation
algorithm achieves exponential concentration whenever it guarantees,
on input $\delta$, an excess loss that grows only as $\log(1/\delta)$.

%
%


\subsection{Robust Distance Approximation}

Consider an estimation problem, where the goal is to estimate an unknown parameter of the distribution, using a random i.i.d.~sample from that distribution. We show throughout this work that for many estimation problems, if the sample is split into non-overlapping subsamples, and estimators are obtained independently from each subsample, then with high probability, this generates a set of estimators such that some fraction of them are close, under a meaningful metric, to the true, unknown value of the estimated parameter. Importantly, this can be guaranteed in many cases even under under heavy-tailed distributions.

Having obtained a set of estimators, a fraction of which are close to the estimated parameter, the goal is now to find a single good estimator based on this set. This goal is captured by the following general problem, which we term \emph{Robust Distance Approximation}. 
A Robust Distance Approximation procedure is given a set of points in a metric space and returns a single point from the space. This single point should satisfy the following condition: If there is an element in the metric space that a certain fraction of the points in the set are close to, then the output point should also be close to the same element. 
Formally, let $(\X,\rho)$ be a metric space. Let $W \subseteq \X$ be a (multi)set of size $k$ and let $w_\opt$ be a distinguished element in $\X$. 
For $\alpha \in (0,\half)$ and $w \in \X$, denote by $\omed_W(w,\alpha)$ the minimal number $r$ such that $|\{v \in W \mid \rho(w,v) \leq r\}| > k(\half + \alpha)$. We often omit the subscript $W$ and write simply $\omed$ when $W$ is known.

We define the following problem:
\begin{definition}[Robust Distance Approximation]\label{def:rda}
Fix $\alpha \in (0,\half)$. Given $W$ and $(\X,\rho)$ as input, return $y \in \X$ such that 
$\rho(y,w_\opt) \leq C_\alpha\cdot\omed_W(w_\opt,\alpha)$, for some constant $C_\alpha \geq 0$. $C_\alpha$ is the \emph{approximation factor} of the procedure.
\end{definition}

In some cases, learning with heavy-tailed distributions requires using
a metric that depends on the distribution. Then, the Robust Distance
Estimation procedure has access only to noisy measurements of
distances in the metric space, and is required to succeed with high
probability. In \secref{core} we formalize these notions, and provide
simple implementations of Robust Distance Approximation for general
metric spaces, with and without direct access to the metric. For the
case of direct access to the metric our formulation is similar to that
of \citet{NY}.

\subsection{Convex Loss Minimization}

The general approach to estimation described above has many applications.
We give here the general form of our main results for applications, and defer the technical definitions and results to the relevant sections. Detailed discussion of related work for each application is also provided in the appropriate sections.

First, we consider smooth and convex losses. 
We assume that the parameter space $\X$ is a Banach space with a norm
$\nrm$ and a dual norm $\nrm_*$. We prove the following
result:\footnote{Formal definitions of terms used in the conditions are given in \secref{approx}.}
\begin{theorem}\label{thm:lossmain}
There exists an algorithm that accepts as input an i.i.d.\ sample of size $n$ drawn from $\cD$ and a confidence parameter $\delta \in (0,1)$, and returns $\hat{\vw} \in \X$, such that if the following conditions hold:
\begin{itemize}
  \item the dual norm $\nrm_*$ is $\gamma$-smooth;

\item there exists $\alpha > 0$ and sample size $n_\alpha$ such that,
  with probability at least $1/2$, the empirical loss $\vw \mapsto
  \hat{L}(\vw)$ is $\alpha$-strongly convex with respect to $\nrm$
  whenever the sample is of size at least $n_\alpha$;

\item $n \geq C\log(1/\delta)\cdot n_\alpha$ for some universal constant $C > 0$;
\item $\vw \mapsto \ell(z,\vw)$ is $\beta$-smooth with respect to
$\nrm$ for all $z \in \cZ$;

\item $\vw \mapsto L(\vw)$ is $\bbeta$-smooth with respect to
$\nrm$;

\end{itemize}
then with probability at least $1-\delta$, for another universal constant $C' > 0$,
\begin{equation*}
L(\hat\vw) \leq \biggl( 1 +
\frac{C'\beta\bbeta\gamma\lceil\log(1/\delta)\rceil}
{n\alpha^2}
\biggr) L_\opt.
\end{equation*}
\end{theorem}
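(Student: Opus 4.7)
The plan is to implement the estimator by a split-and-select procedure using the Robust Distance Approximation primitive on the parameter space. I would partition the $n$ samples into $k = \lceil \log(1/\delta) \rceil$ disjoint blocks of size approximately $n_\alpha$ each. On block $j$, let $\hat{L}_j$ denote the empirical loss and set $\hat{\vw}_j := \argmin_\vw \hat{L}_j(\vw)$. Then apply Robust Distance Approximation to the multiset $\{\hat{\vw}_1,\dotsc,\hat{\vw}_k\} \subseteq \X$ with respect to $\nrm$ to return $\hat{\vw}$. The eventual distance bound $\|\hat{\vw}-\vw_\opt\|$ would be converted into a loss bound by $\bbeta$-smoothness of $L$ at the optimum, using $\grad L(\vw_\opt)=0$ to obtain $L(\hat{\vw}) - L_\opt \leq \frac{\bbeta}{2}\|\hat{\vw}-\vw_\opt\|^2$.

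The per-block analysis combines two ingredients. On the event that $\hat{L}_j$ is $\alpha$-strongly convex, optimality of $\hat{\vw}_j$ yields the standard displacement bound
\begin{equation*}
\frac{\alpha}{2}\|\hat{\vw}_j - \vw_\opt\|^2 \leq \langle \grad\hat{L}_j(\vw_\opt),\, \vw_\opt - \hat{\vw}_j\rangle \leq \|\grad\hat{L}_j(\vw_\opt)\|_*\,\|\hat{\vw}_j - \vw_\opt\|,
\end{equation*}
so $\|\hat{\vw}_j-\vw_\opt\| \leq (2/\alpha)\|\grad\hat{L}_j(\vw_\opt)\|_*$. The $\beta$-smoothness of $\ell(z,\cdot)$ together with nonnegativity yields the self-bounding estimate $\|\grad\ell(z,\vw_\opt)\|_*^2 \leq 2\beta\,\ell(z,\vw_\opt)$, hence $\E\|\grad\ell(Z,\vw_\opt)\|_*^2 \leq 2\beta L_\opt$. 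Because the dual norm is $\gamma$-smooth and the per-example gradients form i.i.d.\ mean-zero random elements in the dual space, a type-2 / Pinelis-style inequality gives $\E\|\grad\hat{L}_j(\vw_\opt)\|_*^2 \leq 2\beta\gamma L_\opt/n_\alpha$, and Markov upgrades this to $\|\grad\hat{L}_j(\vw_\opt)\|_*^2 \leq c\beta\gamma L_\opt/n_\alpha$ with arbitrarily large constant probability by tuning $c$.

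Intersecting the strong-convexity event with the Markov event—after mildly inflating the block size by a universal constant so each has probability at least $3/4$—produces a per-block success probability of at least $\half + \alpha_0$ for some fixed $\alpha_0 > 0$, on which $\|\hat{\vw}_j-\vw_\opt\|^2 \leq c'\beta\gamma L_\opt/(n_\alpha\alpha^2)$. A Chernoff bound over the $k$ independent blocks ensures that a strict $(\half+\alpha_0)$-fraction of the $\hat{\vw}_j$ lie in the corresponding ball around $\vw_\opt$, except with probability at most $\delta$. On this event $\omed(\vw_\opt,\alpha_0)^2 \leq c'\beta\gamma L_\opt/(n_\alpha\alpha^2)$, and Robust Distance Approximation returns $\hat{\vw}$ with $\|\hat{\vw}-\vw_\opt\|^2 \leq C_{\alpha_0}^2\, c'\beta\gamma L_\opt/(n_\alpha\alpha^2)$. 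Substituting into the $\bbeta$-smoothness bound on $L$, using $n_\alpha = n/k$ with $k=\lceil\log(1/\delta)\rceil$, and collecting constants produces exactly the claimed multiplicative bound on $L(\hat{\vw})/L_\opt$.

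The main obstacle I anticipate is the boosting step: the hypothesis only guarantees the strong-convexity event probability strictly above $\half$, so naively intersecting with a second constant-probability event could drop the joint probability below $\half$ and prevent the Chernoff argument from delivering a $\log(1/\delta)$ sample complexity. The resolution is to absorb a universal constant into $n_\alpha$ (which is absorbed in turn into $C$) so that strong convexity holds with probability at least $3/4$, and simultaneously tune the Markov threshold so the gradient event likewise has probability at least $3/4$; the intersection then exceeds $\half$ by a fixed constant, giving the bias needed for Chernoff. A minor subsidiary issue is that $L_\opt$ may not be attained in $\X$; if so, one replaces $\vw_\opt$ by a sequence of near-minimizers and absorbs the vanishing slack into $C'$.
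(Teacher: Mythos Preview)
Your proposal is correct and follows essentially the same route as the paper: split into $k=\Theta(\log(1/\delta))$ blocks, minimize the empirical loss on each, bound the per-block displacement via strong convexity plus the dual-norm gradient bound (self-bounding from $\beta$-smoothness, then the type-2/Pinelis inequality for the $\gamma$-smooth dual norm, then Markov), intersect the two constant-probability events, apply Hoeffding across blocks, invoke Robust Distance Approximation, and finish with $\bbeta$-smoothness of $L$ at $\vw_\opt$.

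One clarification on your ``main obstacle'': the paper avoids the boosting issue entirely by \emph{defining} $n_\alpha$ so that the empirical strong-convexity event already holds with probability at least $5/6$ (not merely $>1/2$), and then tunes the Markov threshold so the gradient event also holds with probability $\geq 5/6$; a union bound gives per-block success $\geq 2/3$, which is the bias fed into the Robust Distance Approximation guarantee. Your proposed fix---inflating the block size by a universal constant to push the strong-convexity probability from $>1/2$ up to $3/4$---is not justified in general (nothing in the hypotheses guarantees such monotonic amplification), so it is cleaner to simply adopt the paper's convention that $n_\alpha$ is the sample size achieving probability $5/6$.
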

This gives a constant approximation of the optimal loss with a number of samples that does not depend on the value of the optimal loss. 
The full results for smooth convex losses are provided in \secref{approx}. \thmref{lossmain} is stated in full as \corref{loss}, and we further provide a result with more relaxed smoothness requirements. As apparent in the result, the only requirements on the distribution are those that are implied by the strong convexity and smoothness parameters. This allows support for fairly general heavy-tailed distributions, as we show below.
\todo{related work on smooth and convex losses?}

\subsection{Least Squares Linear Regression}

A concrete application of our analysis of smooth convex losses is linear regression. In linear regression, $\X$ is a Hilbert space with an inner product $\dotp{\cdot,\cdot}$, and it is both the data space and the parameter space. The loss $\ell \equiv \ell^\sq$ is the squared loss \[
\ell^\sq((\vx,y),\vw) := \frac12(\vx^\t \vw - y)^2.
\]
$L^\sq$ and $L^\sq_\opt$ are defined similarly to $L$ and $L_\opt$.

Unlike standard high-probability bounds for regression,
we give bounds that make no assumption on the range or the tails of
the distribution of the response variables, other than a trivial requirement that the optimal squared loss be finite. The assumptions on the distribution of the covariates are also minimal.

Let $\Sig$ be the second-moment operator $\vec{a} \mapsto \E(\vX \dotp{\vX,\vec{a}})$, where $\vX$ is a random data point from the marginal distribution of $\cD$ on $\X$. For a finite-dimensional $\X$, $\Sig$ is simply the (uncentered) covariance matrix $\E[\vX \vX^\t]$.
First, consider the finite-dimensional case, where $\X = \reals^d$, and assume $\Sig$ is not singular. 
Let $\norm{\cdot}_2$ denote the Euclidean norm in $\reals^d$. 
Under only bounded $4+\epsilon$ moments of the marginal on $\X$ 
(a condition that we specify in full detail in \secref{regression}), 
we show the following guarantee.

\begin{theorem}\label{thm:heavyxmain}
Assume the marginal of $\X$ has bounded $4+\epsilon$ moments. There is a constant $C > 0$ and an algorithm that accepts as input a sample of size $n$ and a confidence parameter $\delta \in (0,1)$, and returns $\hat{\vw} \in \X$, such that if $n \geq C d \log(1/\delta)$, with probability at least $1-\delta$,
\[
L^\sq(\hat\vw) \leq
L^\sq_\opt 
+ O\left(\frac{ \E(\norm{\Sig^{-1/2}\vX(\vX^\t\vw_\opt-Y)}_2^2) \log(1/\delta)} {n}\right) .
\]
\end{theorem}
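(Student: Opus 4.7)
The plan is to combine the Robust Distance Approximation (RDA) framework of \defref{rda} with the exact excess-loss identity for squared loss. A direct expansion of $L^\sq(\vw) = \frac12\E[(\vX^\t\vw - Y)^2]$ around $\vw_\opt$ gives $L^\sq(\vw) - L^\sq_\opt = \frac{1}{2}\norm{\vw - \vw_\opt}_\Sig^2$ with $\norm{\vu}_\Sig^2 := \dotprod{\vu, \Sig\vu}$, so it suffices to bound $\norm{\hat\vw - \vw_\opt}_\Sig^2$ by $O(V\log(1/\delta)/n)$, where $V := \E\norm{\Sig^{-1/2}\vX(\vX^\t\vw_\opt - Y)}_2^2$.

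First, I would split the sample into $k = \ceil{C\log(1/\delta)}$ disjoint groups of size $m = \floor{n/k}$ and compute, on each group $S_i$, the ordinary least squares estimator $\hat\vw_i$. Under the $4+\veps$-moment assumption on $\vX$, a matrix-truncation / Chebyshev-type argument shows that $m = \Omega(d)$ samples suffice to make the empirical second-moment matrix $\hat\Sig_i$ satisfy $\hat\Sig_i \succeq c\Sig$ with constant probability, so that the OLS identity $\hat\vw_i - \vw_\opt = \hat\Sig_i^{-1}\bigl(\frac{1}{m}\sum_{j \in S_i} \vx_j(\vx_j^\t\vw_\opt - y_j)\bigr)$ applies on a constant-probability event. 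A second-moment calculation combined with Markov's inequality then gives that each individual estimator satisfies $\norm{\hat\vw_i - \vw_\opt}_\Sig \leq r$ with probability at least $\half + \alpha$, for some absolute constant $\alpha > 0$ and $r = O(\sqrt{V/m})$. A Chernoff bound over the $k$ independent groups upgrades this to: with probability at least $1 - \delta$, at least a $\half + \alpha/2$ fraction of the $\hat\vw_i$ lie in $\ball_\rho(\vw_\opt, r)$, where $\rho(\vu,\vv) := \norm{\vu - \vv}_\Sig$.

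It remains to select a single good estimator. Because $\rho$ depends on the unknown $\Sig$, I would invoke the noisy-metric version of RDA promised in \secref{core}: pairwise distances between the candidates $\hat\vw_i$ are computed through an estimate $\hat\Sig$ (formed from a held-out portion of the sample, or from one of the groups), and the same covariance concentration used above guarantees that $\hat\rho$ multiplicatively approximates $\rho$ up to a constant, so the approximation factor $C_\alpha$ of RDA inflates only by a constant. The output $\hat\vw$ then satisfies $\rho(\hat\vw, \vw_\opt) \leq C_\alpha\cdot\omed_W(\vw_\opt, \alpha/2) \leq C_\alpha r$, and squaring yields the advertised bound $L^\sq(\hat\vw) - L^\sq_\opt = O(V\log(1/\delta)/n)$. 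The hard part will be the single-group analysis: producing a clean constant-probability bound on $\norm{\hat\vw_i - \vw_\opt}_\Sig$ under only $4+\veps$ covariate moments requires truncation-type arguments rather than sub-Gaussian or Bernstein concentration, and these must be tight enough for $m = O(d)$ samples per group to suffice; everything else is routine given the RDA machinery of \secref{core}.
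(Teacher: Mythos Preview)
Your proposal is essentially the paper's approach: split into $k = O(\log(1/\delta))$ groups, run OLS on each, show each estimator lands within $r = O(\sqrt{V/m})$ of $\vw_\opt$ in the $\Sig$-norm with constant probability (covariance concentration for the Hessian lower bound, plus Markov on the gradient), and then aggregate via RDA. The paper packages the single-group step through its general strong-convexity argument (\lemref{chebyshev}) rather than the explicit OLS identity, but for squared loss these coincide; and the covariance concentration under $4+\veps$ moments is indeed the nontrivial external input (the paper imports it from Srivastava--Vershynin as \lemref{sv}, which is exactly the ``truncation-type'' result you anticipate).

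There is one genuine gap, in your selection step. You propose to compute pairwise $\rho$-distances through a \emph{single} estimate $\hat\Sig$ ``formed from a held-out portion of the sample, or from one of the groups.'' Under only $4+\veps$ moments, \lemref{sv} yields an expectation bound, and hence via Markov only a constant-probability multiplicative approximation $\hat\Sig \approx \Sig$; there is no way to boost a single such estimate to probability $1-\delta$ without assuming boundedness or subgaussianity---this is precisely the heavy-tailed regime where one empirical covariance can fail badly. Simply invoking ``the noisy-metric version of RDA'' does not save you here, because that framework (\algref{main2}, \lemref{distmed}) requires $k$ \emph{independent} distance oracles whose accuracy indicators $Z_j$ can be Hoeffding-boosted; if all $k$ oracles share one $\hat\Sig$, the $Z_j$ collapse to a single event.

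The paper's fix is to reuse the same $k$-group structure for the distance oracle: define $\DIST^j_\rho(\vv) := \|\Sig_{S_j}^{1/2}(\vv - \vw_j)\|_2$ using the $j$-th group's own empirical covariance. Then \lemref{sv} plus Markov gives $\Pr[Z_j = 1] \geq 8/9$ for each $j$, and the $Z_j$ are independent because the $S_j$ are disjoint (\asref{dist2}). \algref{main2} computes, for each candidate $\vw_i$, the \emph{median} over $j$ of these $k$ distance estimates, and \lemref{distmed} delivers the $1-\delta$ guarantee. Note that $\APPROX$ and $\DIST$ need not be independent of each other, so the same subsamples can serve both roles; this is exactly what \algref{regression} (non-variant version) does.
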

This theorem is stated in full as \thmref{heavyx} in \secref{regression}.
Under standard finite fourth-moment conditions, this result translates to
the bound
\[ 
L^\sq(\hat\vw) \leq \biggl( 1 + O\biggl( \frac{d \log(1/\delta)}n
\biggr) \biggr) L^\sq_\opt,
\]
with probability $\geq 1-\delta$. These results improve over recent
results by~\citet{AC11}, \citet{catoni}, and \citet{MJ13}. We provide
a full comparison to related work in \secref{regression}.

\thmref{heavyxmain} can be specialized for specific cases of interest.
For instance, suppose $\vX$ is bounded and well-conditioned in the sense
that there exists $R < \infty$ such that $\Pr[\vX^\t \Sig^{-1} \vX \leq
R^2] = 1$, but $Y$ may still be heavy-tailed. 
Under this assumption we have the following result.

\begin{theorem}\label{thm:olsmain}
Assume $\Sig$ is not singular. There exists an algorithm that accepts as input a sample of size $n$ and a confidence parameter $\delta \in (0,1)$, and returns $\hat{\vw} \in \X$, such that with probability at least $1-\delta$,
for $n \geq O(R^2 \log(R) \log(e/\delta))$,
\begin{equation*}
L^\sq(\hat\vw)
\leq \biggl( 1 + O\biggl( \frac{R^2 \log(1/\delta)}{n} \biggr) \biggr)
L^\sq_\opt.
\end{equation*}
\end{theorem}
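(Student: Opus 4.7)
The plan is to instantiate the general convex loss minimization result (\thmref{lossmain}) with the Mahalanobis norm $\norm{\vw}_\Sig := \sqrt{\vw^\t \Sig \vw}$ on $\X = \reals^d$. Its dual norm is $\norm{\cdot}_{\Sig^{-1}}$, which is induced by a Hilbertian inner product and is therefore $1$-smooth, so $\gamma = 1$. This is the natural choice because, under the boundedness assumption $\Pr[\vX^\t\Sig^{-1}\vX \leq R^2] = 1$, both the per-example smoothness and the sample-based strong convexity can be controlled directly by $R$, while the population loss is automatically well-behaved in this norm.

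Next I would compute the smoothness constants demanded by \thmref{lossmain}. For any fixed $(\vx,y)$, the gradient difference is $\nabla\ell^\sq((\vx,y),\vw) - \nabla\ell^\sq((\vx,y),\vw') = (\vx^\t(\vw-\vw'))\vx$, whose dual norm equals $\norm{\vx}_{\Sig^{-1}}\,|\vx^\t(\vw-\vw')|$; by Cauchy--Schwarz in the $\Sig$-norm this is bounded by $\norm{\vx}_{\Sig^{-1}}^2 \norm{\vw-\vw'}_\Sig \leq R^2\norm{\vw-\vw'}_\Sig$, yielding $\beta = R^2$. The population loss $L^\sq$ has Hessian $\Sig$ and $\norm{\Sig \vu}_{\Sig^{-1}} = \norm{\vu}_\Sig$, so $\bbeta = 1$.

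The main technical step is to verify the strong-convexity hypothesis: with probability at least $1/2$, the empirical loss on a subsample of size $n_\alpha$ is $\alpha$-strongly convex in $\norm{\cdot}_\Sig$ with $\alpha = 1/2$. This reduces to showing $\hat\Sig := n_\alpha^{-1}\sum_{i=1}^{n_\alpha}\vx_i\vx_i^\t$ satisfies $\Sig^{-1/2}\hat\Sig\Sig^{-1/2} \succeq \tfrac12\Id$. Apply matrix Bernstein (or a Rudelson-type inequality) to the i.i.d.\ PSD matrices $\Sig^{-1/2}\vx_i\vx_i^\t\Sig^{-1/2}$, whose spectral norm is bounded by $R^2$ and whose common mean is $\Id$. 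Using $d = \E[\vX^\t\Sig^{-1}\vX] \leq R^2$ (so $\log d = O(\log R)$), taking $n_\alpha = O(R^2 \log R)$ suffices to force the spectral deviation below $1/2$ with probability at least $1/2$ (in fact, exponentially close to one). This matrix-concentration argument is the only nontrivial step; the rest is bookkeeping.

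Finally, plugging $\beta = R^2$, $\bbeta = 1$, $\gamma = 1$, $\alpha = 1/2$, and $n_\alpha = O(R^2\log R)$ into \thmref{lossmain}, the sample-size requirement $n \geq C\log(1/\delta)\cdot n_\alpha$ becomes $n \geq O(R^2\log(R)\log(e/\delta))$, and the conclusion of \thmref{lossmain} specializes to
\[
L^\sq(\hat\vw) \;\leq\; \biggl(1 + \frac{C'\, R^2 \log(1/\delta)}{n}\biggr) L^\sq_\opt,
\]
which is exactly \thmref{olsmain}. The expected obstacle is only the matrix Bernstein step establishing empirical strong convexity at the right sample complexity; everything else follows by direct computation in the $\Sig$-norm and appeal to the general result.
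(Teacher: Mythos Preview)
Your approach is essentially identical to the paper's: same choice of Mahalanobis norm $\norm{\cdot}_\Sig$, same parameter calculations ($\gamma=1$, $\beta=R^2$, $\bbeta=1$, $\alpha=1/2$), and the same matrix-concentration argument to obtain $n_{\alpha}=O(R^2\log R)$ (the paper cites an operator-norm bound due to Oliveira rather than matrix Bernstein, but these are interchangeable here).

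There is one genuine gap, though. You invoke \thmref{lossmain} as a black box, but that theorem is proved via \algref{main}, which must \emph{compute} distances in the chosen norm in order to select among the candidates $\vw_1,\dotsc,\vw_k$. Your norm $\norm{\cdot}_\Sig$ depends on the unknown population second-moment operator $\Sig$, so the algorithm as written cannot evaluate it. The paper closes this gap with an extra step: the actual algorithm (\algref{regression}, variant form) uses the empirical $\Sig_S$ in place of $\Sig$, and a second application of the same matrix-concentration lemma (now on the full sample $S$ of size $n\geq O(R^2\log(R/\delta))$) shows that $\tfrac12\,\vec{a}^\t\Sig\vec{a}\leq \vec{a}^\t\Sig_S\vec{a}\leq 2\,\vec{a}^\t\Sig\vec{a}$ with probability $1-\delta$. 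A union bound then absorbs the constant-factor distortion into the big-$O$. This is easy to patch with the tools you already have, but it is not ``everything else follows by direct computation''---it is a separate argument you need to make.
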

This theorem is stated in full as \thmref{ols} in \secref{regression}.
Note that 
\[
\E(\vX^\t \Sig^{-1} \vX) = \E\tr(\vX^\t \Sig^{-1}\vX) = \tr(\Id) = d,
\] 
so $R = \Omega(\sqrt{d})$.
$R^2$ is closely related to a \emph{condition number} for the distribution of $\vX$. For instance, if $\P[\norm{\vX}=1]=1$,
then $R^2 \leq d \frac{\lambda_{\max}(\Sig)}{\lambda_{\min}(\Sig)}$.
This result is minimax optimal up to logarithmic factors~\citep[see,
\emph{e.g.},][]{Nussbaum99}.
We also remark that the boundedness assumption can be replaced by a
subgaussian assumption on $\vX$, in which case the sample size requirement
becomes $O(d \log(1/\delta))$. We give analogous guarantees for the case of regularized least squares in a possibly finite-dimensional Hilbert space in \thmref{ridge}, \secref{regression}.

\subsection{Other Applications, Comparisons, and Extensions}

The general method studied here allows handling heavy tails in other applications as well. We give two examples in \secref{other}. First, we consider parameter estimation using $L^1$-regularized linear least squares regression (Lasso) under random subgaussian design. We show that using the above approach, parameter estimation bounds can be guaranteed for general bounded variance noise, including heavy-tailed noise. This contrasts with standard results that assume sub-Gaussian noise. 
Second, we show that low-rank covariance matrix approximation can be
obtained for heavy-tailed distributions, under a bounded $4+\epsilon$
moment assumption. These two applications have been analyzed also in
the independent and simultaneous work of \citet{Minsker13}.

All the results above are provided using a specific solution to the
Robust Distance Approximation problem, which is easy to implement for
any metric space. For the case of a fully known metric, in a Banach or
a Hilbert space, \citet{Minsker13} proposed a different solution, which is based on the geometric median. In \secref{geometric}, we provide a detailed comparison of the approximation factor achieved by each approach, as well as some general lower bounds. Several interesting open questions remain regarding this general problem.

Lastly, in \secref{output}, we give a short proof to the intuitive fact that in some prediction problem, one can replace Robust Distance Approximation with taking the median of the predictions of the input estimators. This gives a possible improper-learning algorithm for relevant learning settings.

All of the techniques we have developed in this work are simple enough
to implement and empirically evaluate, and indeed in some simulated
experiments, we have verified the improvements over standard methods
such as the empirical mean when the data follow heavy-tailed
distributions.
However, at present, the relatively large constant factors in our
bounds are real enough to restrict the empirical improvements only to
settings where very high confidence (\emph{i.e.}, small values of
$\delta$) is required.
By contrast, with an appropriately determined noise variance, the
techniques of~\cite{catoni} and~\cite{lugosi} may yield improvements
more readily.
Nevertheless, since our techniques are more general in some respects,
it is worth investigating whether they can be made more practical
(\emph{e.g.}, with greater sample reuse or overlapping groups), and we
plan to do this in future work.

\section{The Core Techniques}\label{sec:core}

In this section we present the core technique used for achieving exponential concentration.
We first demonstrate the underlying principle via the median-of-means
estimator, and then explain the generalization to arbitrary metric spaces. Finally, we show a new generalization that supports noisy feature measurements.

\subsection{Warm-up: Median-of-Means Estimator}
\label{sec:median-of-means}

We first motivate the estimation procedure by
considering the special case of estimating a scalar population mean using a
\emph{median-of-means} estimator, given in \algref{median-of-means}.
This estimator, heavily used in the streaming algorithm
literature~\citep{alon99} (though a similar technique also appears in~\cite{NY} as noted in \cite{Levin-notes}), partitions a
sample into $k$ equal-size groups, and returns the median of the sample
means of each group.
Note that the possible non-uniqueness of the median does not affect
the result; the arguments below apply to any one of them.
The input parameter $k$ should be thought of as a constant determined by the
desired confidence level (\emph{i.e.}, $k = \Theta(\log(1/\delta))$ for confidence
$\delta \in (0,1)$). It is well known that the median-of-means achieves estimation with exponential concentration. The following proposition gives a simple statement and proof. The constant $6$ in the statement (see \eqref{medbound} below) is lower the constant in the analysis of \citet[Proposition 1]{LO11}, which is $2\sqrt{6 e} \approx 8.08$, but we require a larger value of $n$. By requiring an even larger $n$, the constant in the statement below can approach $3\sqrt{3}$.

\begin{algorithm}[t]
\caption{Median-of-means estimator}
\label{alg:median-of-means}
\begin{algorithmic}[1]
\renewcommand{\algorithmicrequire}{\textbf{input}}
\renewcommand{\algorithmicensure}{\textbf{output}}
\REQUIRE Sample $S \subset \reals$ of size $n$, number of groups $k \in \nats$ such that $k \leq n/4$.

\ENSURE Population mean estimate $\hat\mu \in \reals$.

\STATE Randomly partition $S$ into $k$ subsets $S_1, S_2, \dotsc, S_k$, each
of size at least $\floor{n/k}$.

\STATE For each $i \in [k]$, let $\mu_i \in \reals$ be the sample mean of
$S_i$.

\STATE Return $\hat\mu := \median\{\mu_1,\mu_2,\dotsc,\mu_k\}$.

\end{algorithmic}
\end{algorithm}

\begin{proposition}
\label{prop:median-of-means}
Let $x$ be a random variable with mean $\mu$ and variance $\sigma^2 <
\infty$, and let $S$ be a set of $n$ independent copies of $x$.
Assume $k \leq n/2$.
With probability at least $1-e^{-k/4.5}$, the estimate $\hat\mu$ returned
by \algref{median-of-means} on input $(S,k)$ satisfies $|\hat\mu - \mu|
\leq \sigma\sqrt{8k/n}$.
Therefore, if $k = 4.5\ceil{\log(1/\delta)}$ and $n \geq 18 \ceil{\log(1/\delta)}$, then with probability at least $1-\delta$, 
\begin{equation}\label{eq:medbound}
  |\hat\mu - \mu|
  \leq 6\sigma\sqrt{\frac{\ceil{\log(1/\delta)}}{n}}.
\end{equation}
\end{proposition}
\begin{proof}
First, assume $k$ divides $n$.
Pick any $i \in [k]$, and observe that $S_i$ is an i.i.d.~sample of size
$n/k$.
Therefore, by Chebyshev's inequality, $\Pr[ |\mu_i - \mu| \leq
\sqrt{6\sigma^2k/n} ] \geq 5/6$.
For each $i \in [k]$, let $b_i := \ind{|\mu_i - \mu| \leq
\sqrt{6\sigma^2k/n}}$.
Note that the $b_i$ are independent indicator random variables, each with
$\E(b_i) \geq 5/6$.
By Hoeffding's inequality, $\Pr[ \sum_{i=1}^k b_i > k/2 ] \geq
1-e^{-k/4.5}$.
In the event that $\sum_{i=1}^k b_i > k/2$, at least half of the $\mu_i$
are within $\sqrt{6\sigma^2k/n}$ of $\mu$, which means that the same holds for the median of the
$\mu_i$. If $k$ does not divide $n$ then the analysis can be carried out by substituting $n$ with $\floor{n/k}k \geq n-k \geq \frac{3}{4}n$, which scales the guarantee by a factor of $\sqrt{4/3}$.
\end{proof}

Using the terminology of Robust Distance Approximation with the metric $\rho(x,y) = |x-y|$, the proof shows that with high probability over the choice of $W$, $\omed_W(\mu,0) \leq \sqrt{12\sigma^2k/n}$. The result then immediately follows because on the space $(\reals,\rho)$, the median is a Robust Distance Approximation procedure with $C_0 = 1$.

\begin{remark}[\citeauthor{catoni}'s M-estimator]
\citet{catoni} proposes a mean estimator $\hat\mu$ that satisfies
$|\hat\mu-\mu| = O(\sigma\sqrt{\log(1/\delta)/n})$ with probability at
least $1-\delta$.
Remarkably, the leading constant in the bound is asymptotically
optimal: it approaches $\sqrt{2}$ as $n\to\infty$.
However, the estimator takes both $\delta$ and $\sigma$ as inputs.
\citeauthor{catoni} also presents an estimator that takes only
$\sigma$ as an input; this estimator guarantees a $O(\sigma
\log(1/\delta) / \sqrt{n})$ bound for all values of $\delta >
\exp(1-n/2)$ \emph{simultaneously}.
\end{remark}

\begin{remark}[Empirical mean]
\label{remark:empirical-mean}
\citet{catoni} shows that the empirical mean cannot provide a
qualitatively similar guarantee.
Specifically, for any $\sigma > 0$ and $\delta \in (0,1/(2e))$, there is
a distribution with mean zero and variance $\sigma^2$ such that the
empirical average $\hat\mu_{\emp}$ of $n$ i.i.d.~draws satisfies
\begin{equation}
\label{eq:lb}
\Pr\biggl[
|\hat\mu_{\emp}| \geq
\frac{\sigma}{\sqrt{2n\delta}}
\Bigl( 1 - \frac{2e\delta}{n} \Bigr)^{\frac{n-1}{2}}
\biggr] \geq
2\delta .
\end{equation}
Therefore the deviation of the empirical mean necessarily scales with
$1/\sqrt{\delta}$ rather than $\sqrt{\log(1/\delta)}$ (with probability
$\Omega(\delta)$).
\end{remark}

\subsection{Generalization to Arbitrary Metric Spaces}
We now consider a simple generalization of the median-of-means estimator for
arbitrary metric spaces, first mentioned in \cite{NY}.
Let $\X$ be the parameter (solution) space, $\vw_\opt \in \X$ be a
distinguished point in $\X$ (the target solution), and $\rho$ a metric on
$\X$ (in fact, a pseudometric suffices).

The first abstraction captures the generation of candidate solutions
obtained from independent subsamples.
We assume there is an oracle $\APPROX_{\rho,\veps}$ which satisfies the following assumptions.
\begin{assumption}\label{as:approx1}
A query to $\APPROX_{\rho,\veps}$ returns a random $\vw \in \X$ such that
\begin{equation*}
\Pr\Bigl[ \rho(\vw_\opt,\vw) \leq \veps \Bigr] \geq 2/3 . 
\end{equation*}
\end{assumption}
Note that the $2/3$ could be replaced by another constant larger than half; we have not optimized the constants. The second assumption regards statistical independence.
For an integer $k$, let $\vw_1,\ldots,\vw_k$ be responses to $k$ separate queries to $\APPROX_{\rho,\veps}$. 
\begin{assumption}\label{as:approx2}
$\vw_1,\ldots,\vw_k$ are statistically independent.
\end{assumption}

The proposed procedure, given in \algref{main}, generates $k$ candidate
solutions by querying $\APPROX_{\rho,\veps}$ $k$ times, and then selecting a single candidate using a generalization of the median.
Specifically, for each $i \in [k]$, the smallest ball centered at $\vw_i$
that contains more than half of $\{ \vw_1, \vw_2, \dotsc, \vw_k \}$ is
determined; the $\vw_i$ with the smallest such ball is returned.
If there are multiple such $\vw_i$ with the smallest radius ball, any
one of them may be selected.
This selection method is a Robust Distance Approximation procedure. The proof is given below and illustrated in \figref{proof}. \cite{NY} proposed a similar technique, however their formulation relies on knowledge of $\veps$.

\begin{proposition}\label{prop:genmed}
Let $r_i := \min\{ r \geq 0 : |\ball_\rho(\vw_i,r) \cap W| > k/2 \}$. Selecting $\vw_{i_\opt}$ such that $i_\opt = \argmin_i r_i$ is a Robust Distance Approximation procedure with $C_0 = 3$.
\end{proposition}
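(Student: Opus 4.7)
The plan is to show that the $\vw_{i_\opt}$ returned by the procedure is within $3r^*$ of $\vw_\opt$, where $r^*:=\omed_W(\vw_\opt,0)$. I would work entirely by triangle inequality plus a pigeonhole argument on subsets of $W$ of size strictly greater than $k/2$.

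First I would fix the ``good set'' $G:=\{v\in W : \rho(\vw_\opt,v)\leq r^*\}$. By the definition of $\omed_W(\vw_\opt,0)$, we have $|G|>k/2$. This is the only place the hypothesis about $\omed_W(\vw_\opt,0)$ enters.

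Next I would bound $r_{i_\opt}$ by $2r^*$. For any $\vw_i\in G$ and any $\vw_j\in G$, the triangle inequality gives
\[
\rho(\vw_i,\vw_j)\leq \rho(\vw_i,\vw_\opt)+\rho(\vw_\opt,\vw_j)\leq 2r^*,
\]
so $G\subseteq \ball_\rho(\vw_i,2r^*)$ and hence $|\ball_\rho(\vw_i,2r^*)\cap W|\geq |G|>k/2$. By the definition of $r_i$, this forces $r_i\leq 2r^*$ for every $\vw_i\in G$. Since $G$ is nonempty and $r_{i_\opt}=\min_i r_i$, we conclude $r_{i_\opt}\leq 2r^*$.

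Finally I would combine these through a pigeonhole step, which is the only non-routine part of the argument (though still easy): both $G$ and $\ball_\rho(\vw_{i_\opt},r_{i_\opt})\cap W$ have size strictly greater than $k/2$ inside a multiset of size $k$, so they must share a common element $v$. Then one more triangle inequality,
\[
\rho(\vw_{i_\opt},\vw_\opt)\leq \rho(\vw_{i_\opt},v)+\rho(v,\vw_\opt)\leq r_{i_\opt}+r^*\leq 3r^*,
\]
yields the claimed $C_0=3$ approximation factor. I do not anticipate any real obstacle; the only subtlety is remembering that the inequalities in the definition of $\omed$ and of $r_i$ are strict (``$>k/2$''), which is exactly what makes the pigeonhole step go through.
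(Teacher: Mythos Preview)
Your proposal is correct and follows essentially the same argument as the paper: define the ``good'' ball around $\vw_\opt$ of radius $\omed_W(\vw_\opt,0)$, use the triangle inequality to bound $r_{i_\opt}\leq 2r^*$, and then pigeonhole the two majority subsets of $W$ to find a bridge point for a final triangle inequality. The only cosmetic difference is that the paper applies pigeonhole to $\ball_\rho(\vw_{i_\opt},2\veps)$ rather than $\ball_\rho(\vw_{i_\opt},r_{i_\opt})$, but this changes nothing in the bound.
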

\begin{proof}
Assume that $\omed(w_\opt, 0) \leq \veps$. Then $|\ball_\rho(\vw_\opt,\veps) \cap W| > k/2$.
For any $\vv \in \ball_\rho(\vw_\opt,\veps) \cap W$, by the
triangle inequality, $|\ball_\rho(\vv,2\veps) \cap W| > k/2$.
This implies that $r_{i_\opt} \leq 2\veps$, and so $|\ball_\rho(\vw_{i_\opt},2\veps) \cap W| > k/2$.
By the pigeonhole principle, $\ball_\rho(\vw_\opt,\veps) \cap
\ball_\rho(\vw_{i_\opt},2\veps) \neq\emptyset$.
Therefore, by the triangle inequality again, $\rho(\vw_\opt,\vw_{i_\opt}) \leq
3\veps$.
\end{proof}

\begin{figure}[t]
\begin{center}
\begin{tikzpicture}[scale = 0.1]
\node at (0,-3) {$\vw_\opt$};
\draw (0,0) circle (1);
\draw (0,0) circle (10);
\draw[->] (0,0) --  node[above] {$\varepsilon$} (-7,7);
 \fill (10,0) circle (1);
 \fill (0,10) circle (1);
 \fill (0,-10) circle (1);
 \draw (17,0) circle (12);
 \node at (17,-3) {$\hat{\vw}$};
 \draw[->] (17,0) -- node[above] {$r_{i_\opt}$} (29,0);
 \fill (17,0) circle (1);
 \fill (-10,0) circle (1);
 \fill (40,-10) circle (1);
 \fill (20,10) circle (1);
 \fill (25,-5) circle (1);
 \fill (7,-5) circle (1);
 \fill (8,4) circle (1);
 \fill (60, 10) circle (1);
 \end{tikzpicture}
 \caption{The argument in the proof of \propref{genmed}, illustrated on the Euclidean plane. If more than half of the $\vw_i$ (depicted by full circles) are within $\veps$ of $\vw_\opt$ (the empty circle), then the selected $\vw_{i_\opt}$ is within $\veps + r_{i_\opt} \leq 3\veps$ of $\vw_\opt$.}
\label{fig:proof}
 \end{center}
\end{figure}

Again, the number of candidates $k$ determines the resulting confidence
level. The following theorem provides a guarantee for \algref{main}. We note that the resulting constants here might not be optimal in specific applications, since they depend on the arbitrary constant in \asref{approx1}.

\begin{algorithm}[t]
\caption{Robust approximation}
\label{alg:main}
\begin{algorithmic}[1]
\renewcommand{\algorithmicrequire}{\textbf{input}}
\renewcommand{\algorithmicensure}{\textbf{output}}
\REQUIRE Number of candidates $k$, query access to $\APPROX_{\rho,\veps}$.

\ENSURE Approximate solution $\hat\vw \in \X$.

\STATE Query $\APPROX_{\rho,\veps}$ $k$ times. Let $\vw_1,\ldots,\vw_k$ be the responses to the queries; set $W := \{\vw_1, \vw_2, \dotsc, \vw_k\}$.

\STATE For each $i \in [k]$, let $r_i := \min\{ r \geq 0 :
|\ball_\rho(\vw_i,r) \cap W| > k/2 \}$; set $i_\opt := \arg\min_{i \in [k]}
r_i$.

\STATE Return $\hat\vw := \vw_{i_\opt}$.

\end{algorithmic}
\end{algorithm}

\begin{proposition}\label{prop:alg}
Suppose that \asref{approx1} and \asref{approx2} hold. Then, with probability at least $1-e^{-k/18}$, \algref{main} returns $\hat\vw \in
\X$ satisfying $\rho(\vw_\opt,\hat{\vw}) \leq 3\veps$.
\end{proposition}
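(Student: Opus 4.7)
The plan is to reduce \propref{alg} to \propref{genmed} via a standard concentration argument on the indicator of which queries to $\APPROX_{\rho,\veps}$ are ``good.''

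First, I would define, for each $i \in [k]$, the indicator $b_i := \ind{\rho(\vw_\opt,\vw_i) \leq \veps}$. By \asref{approx1}, $\E(b_i) \geq 2/3$, and by \asref{approx2}, the $b_i$ are mutually independent Bernoulli variables. The key deterministic observation is that if $\sum_{i=1}^k b_i > k/2$, then $|\ball_\rho(\vw_\opt,\veps) \cap W| > k/2$, which is exactly the hypothesis $\omed_W(\vw_\opt,0) \leq \veps$ used in the proof of \propref{genmed}. Consequently, on this event, \propref{genmed} yields $\rho(\vw_\opt,\hat\vw) \leq 3\veps$, as desired.

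It remains to bound the probability of the complementary event $\sum_i b_i \leq k/2$. Since the gap between the mean $\tfrac{1}{k}\E(\sum_i b_i) \geq 2/3$ and the threshold $1/2$ is $1/6$, Hoeffding's inequality gives
\begin{equation*}
\Pr\biggl[\sum_{i=1}^k b_i \leq k/2\biggr] \;\leq\; \exp\bigl(-2k(1/6)^2\bigr) \;=\; e^{-k/18},
\end{equation*}
which matches the failure probability claimed in the proposition.

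There is no real obstacle here; the proof is essentially a direct composition of \propref{genmed} (which handles the geometry deterministically) with a Hoeffding bound on independent Bernoullis (which supplies the exponential confidence). The only thing to be slightly careful about is the strict inequality $> k/2$ in the definition of $r_i$ and in the hypothesis of \propref{genmed}, which is automatic once $\sum_i b_i > k/2$ with integer-valued summands. The constant $1/18$ in the exponent is precisely what one obtains from the $2/3$ success probability of $\APPROX_{\rho,\veps}$; weakening the $2/3$ to any fixed $p > 1/2$ would change the constant but not the structure of the argument.
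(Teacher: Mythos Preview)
Your proposal is correct and essentially identical to the paper's proof: define the indicators $b_i$, invoke Hoeffding's inequality with the $1/6$ gap to obtain the $e^{-k/18}$ bound, and then apply \propref{genmed} on the event $\sum_i b_i > k/2$.
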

\begin{proof}
For each $i \in [k]$, let $b_i := \ind{\rho(\vw_\opt,\vw_i) \leq \veps}$.
Note that the $b_i$ are independent indicator random variables, each with
$\E(b_i) \geq 2/3$.
By Hoeffding's inequality, $\Pr[ \sum_{i=1}^k b_i > k/2 ] \geq
1-e^{-k/18}$.
In the event that $\sum_{i=1}^k b_i > k/2$, more than half of the $\vw_i$
are contained in the ball of radius $\veps$ around $\vw_\opt$, that is $\omed_W(w_\opt,0) \leq \veps$. The result follows from \propref{genmed}.
\end{proof}

\subsection{Random Distance Measurements}\label{sec:randomdist}

In some problems, the most appropriate metric on $\X$ in which to measure
accuracy is not directly computable.
For instance, the metric may depend on population quantities which can only
be estimated; moreover, the estimates may only be relatively accurate with
some constant probability. For instance, this is the case when the metric depends on the population covariance matrix, a situation we consider in \secref{heavyx}.

To capture such cases, we assume access to a metric estimation oracle as follows. Let $\vw_1,\ldots,\vw_k$ be responses to $k$ queries to $\APPROX_{\rho,\epsilon}$. The metric estimation oracle, denoted $\DIST^j_\rho$, provides (possibly via a random process) a function $f_j:\X \rightarrow \reals_+$.  $f_j(\vv)$ will be used as an estimate of $\rho(\vv,\vw_j)$. This estimate is required to be weakly accurate, as captured by the following definition of the random variables $Z_j$.
Let $f_1,\ldots,f_k$ be responses to queries to $\DIST_\rho^1,\ldots,\DIST_\rho^k$, respectively. For $j \in [k]$, define
\[
Z_j := \ind{\forall \vv \in \X, \quad(1/2) \rho(\vv,\vw_j) \leq f_j(\vv) \leq 2\rho(\vv,\vw_j)}.
\]
$Z_j = 1$ indicates that $f_j$ provides a sufficiently accurate estimate of the distances from $\vw_j$. Note that $f_j$ need not correspond to a metric.
We assume the following.

\begin{assumption}\label{as:dist1}
For any $j\in [k]$, $\Pr[Z_j = 1] \geq 8/9$.
\end{assumption}
We further require the following independence assumption.
\begin{assumption}\label{as:dist2}
The random variables $Z_1,\ldots,Z_k$ are statistically independent.
\end{assumption}
Note that there is no assumption on the statistical relationship between $Z_1,\ldots,Z_k$ and $\vw_1,\ldots,\vw_k$.

\algref{main2} is a variant of \algref{main} that simply replaces
computation of $\rho$-distances with computations using the functions returned by querying the $\DIST^j_\rho$'s. The resulting selection procedure is, with high probability, a Robust Distance Approximation.

\begin{lemma}\label{lem:distmed}
Consider a run of \algref{main2}, with output $\hat{\vw}$. Let $Z_1,\ldots,Z_k$ as defined above, and suppose that \asref{dist1} and \asref{dist2} hold. Then, with probability at least $1-e^{-k/648}$, 
\[
\rho(\hat{\vw},\vw_\opt) \leq 9\cdot\omed_W(\vw_\opt,\frac{5}{36}),
\]
where $W = \{\vw_1,\ldots,\vw_k\}$.
\end{lemma}
\begin{proof}
By Assumptions \ref{as:dist1} and \ref{as:dist2}, and by Hoeffding's inequality,
\begin{equation}\label{eq:Z}
  \Pr\left[ \sum_{j=1}^k Z_j > \frac{31}{36}k \right] \geq 1-e^{-k/648}
\end{equation}
Assume this event holds, and denote $\veps = \omed_W(\vw_\opt,\frac{5}{36})$. We have $|B(\vw_\opt, \veps) \cap W| \geq \frac{23}{36}k$.

Let $i \in [k]$ such that $\vw_i \in \ball_\rho(\vw_\opt, \veps)$.
Then, for any $j \in [k]$ such that $\vw_j \in \ball_\rho(\vw_\opt, \veps)$, by the triangle inequality $\rho(\vw_i,\vw_j) \leq 2\veps$.
There are at least $\frac{23}{36}k$ such indices $j$, therefore for more than $k/2$ of the indices $j$, we have 
\[
\rho(\vw_i,\vw_j) \leq 2\veps \text{ and } Z_j = 1.
\]
For $j$ such that this holds, by the definition of $Z_j$, $f_j(\vw_i) \leq 4\veps.$ It follows that  $r_i := \median\{f_j(\vw_i) \mid j \in [k]\} \leq 4\epsilon$.

Now, let $i \in [k]$ such that $\vw_i \notin B(\vw_\opt, 9\epsilon)$. 
Then, for any $j \in [k]$ such that $\vw_j \in \ball_\rho(\vw_\opt, \veps)$,
by the triangle inequality $\rho(\vw_i,\vw_j) \geq \rho(\vw_\opt, \vw_i) - \rho(\vw_\opt, \vw_j) > 8\veps$. As above, for more than $k/2$ of the indices $j$,
\[
\rho(\vw_i,\vw_j) > 8\veps \text{ and } Z_j = 1.
\]
For $j$ such that this holds, by the definition of $Z_j$, $f_j(\vw_i) > 4\veps.$ It follows that $r_i := \median\{f_j(\vw_i) \mid j \in [k]\} > 4\epsilon$.

By \eqref{Z}, We conclude that with probability at least $1-\exp(-k/648)$,
\begin{enumerate}
\item $r_i \leq 4\veps$ for all $\vw_i \in W \cap
\ball_\rho(\vw_\opt,\veps)$, and
\item $r_i > 4\veps$ for all $\vw_i \in W \setminus
\ball_\rho(\vw_\opt,9\veps)$.
\end{enumerate}
In this event the $\vw_i \in W$ with the smallest $r_i$ satisfies $\vw_i
\in \ball_\rho(\vw_\opt,9\veps)$.
\end{proof}

\begin{algorithm}[t]
\caption{Robust approximation with random distances}
\label{alg:main2}
\begin{algorithmic}[1]
\renewcommand{\algorithmicrequire}{\textbf{input}}
\renewcommand{\algorithmicensure}{\textbf{output}}
\REQUIRE Number of candidates $k$, query access to $\APPROX_{\rho,\veps}$,
query access to $\DIST_\rho$.

\ENSURE Approximate solution $\hat\vw \in \X$.

\STATE Query $\APPROX_{\rho,\veps}$ $k$ times. Let $\vw_1,\ldots,\vw_k$ be the responses to the queries; set $W := \{\vw_1, \vw_2, \dotsc, \vw_k\}$.

\STATE For $i \in [k]$, let $f_i$ be the response of $\DIST^j_\rho$, and set
\mbox{$r_i := \median\{ f_j(\vw_i)
: j \in [k] \}$}; set $i_\opt := \arg\min_{i \in [k]} r_i$.\label{step:select}

\STATE Return $\hat\vw := \vw_{i_\opt}$.

\end{algorithmic}
\end{algorithm}

The properties of the approximation procedure and of $\APPROX_{\rho,\epsilon}$ are combined to give a guarantee for \algref{main2}. 
\begin{theorem}\label{thm:alg2}
Suppose that Assumptions  \ref{as:approx1},\ref{as:approx2},\ref{as:dist1},\ref{as:dist2} all hold. 
With probability at least \mbox{$1-2e^{-k/648}$}, \algref{main2} returns
$\hat\vw \in \X$ satisfying $\rho(\vw_\opt,\hat{\vw}) \leq 9\veps$.
\end{theorem}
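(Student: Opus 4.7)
The plan is to reduce to \lemref{distmed} by showing that, with high probability over the $\APPROX_{\rho,\veps}$ responses, the empirical ball of radius $\veps$ around $\vw_\opt$ captures enough of the candidates to force $\omed_W(\vw_\opt,\tfrac{5}{36}) \leq \veps$. Once this is established, \lemref{distmed} does all the remaining work, and a single union bound finishes the argument.

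Concretely, I would first define $b_i := \ind{\rho(\vw_\opt,\vw_i) \leq \veps}$ for $i \in [k]$. By \asref{approx1}, each $b_i$ is a $\{0,1\}$ random variable with $\E(b_i) \geq 2/3$, and by \asref{approx2} they are mutually independent. We want $\sum_i b_i > k(\half + \tfrac{5}{36}) = \tfrac{23}{36}k$, i.e., strictly more than the fraction $23/36$. Note $2/3 - 23/36 = 1/36$, so by Hoeffding's inequality,
\begin{equation*}
\Pr\Bigl[\sum_{i=1}^k b_i \leq \tfrac{23}{36}k\Bigr]
\;\leq\; \exp\!\left(-2k \cdot (1/36)^2\right)
\;=\; e^{-k/648}.
\end{equation*}
On the complementary event, strictly more than $k(\half + \tfrac{5}{36})$ points of $W$ lie in $\ball_\rho(\vw_\opt,\veps)$, which is precisely the statement that $\omed_W(\vw_\opt,\tfrac{5}{36}) \leq \veps$.

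Next, condition on this event (call it $\cE_1$) and apply \lemref{distmed}, whose conclusion is that on a further event $\cE_2$ of probability at least $1-e^{-k/648}$ (over the randomness of $\DIST_\rho$, given the $\vw_i$'s), the output $\hat\vw$ of \algref{main2} satisfies $\rho(\hat\vw,\vw_\opt) \leq 9 \cdot \omed_W(\vw_\opt,\tfrac{5}{36})$. On $\cE_1 \cap \cE_2$ the right-hand side is at most $9\veps$, yielding the desired bound.

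Finally I would finish with a union bound: $\Pr(\cE_1^c) + \Pr(\cE_2^c) \leq 2e^{-k/648}$, so $\Pr(\cE_1 \cap \cE_2) \geq 1 - 2e^{-k/648}$. The only subtlety worth flagging is that the probability in \lemref{distmed} is stated jointly over both $\APPROX$ and $\DIST$ (and the lemma does not need those oracles to be independent of each other); the cleanest way to handle this is to state the two high-probability events directly in terms of the underlying indicator sums $\sum_i b_i$ and $\sum_j Z_j$, which are each controlled by Hoeffding's inequality under their respective independence assumptions, and then union bound. This is routine and I do not expect any real obstacle beyond bookkeeping the constants so that the two events combine to give exactly $2e^{-k/648}$.
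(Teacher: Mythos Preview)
Your proposal is correct and follows essentially the same approach as the paper's own proof: define the indicators $b_i$, apply Hoeffding's inequality with the gap $2/3 - 23/36 = 1/36$ to get the $e^{-k/648}$ bound, invoke \lemref{distmed}, and finish with a union bound. Your write-up is in fact more detailed than the paper's (which is just three sentences), and your remark about handling the joint randomness via the two indicator sums $\sum_i b_i$ and $\sum_j Z_j$ is exactly the right way to make the union bound clean.
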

\begin{proof}
For each $i \in [k]$, let $b_i := \ind{\rho(\vw_\opt,\vw_i) \leq \veps}$.
By Assumptions \ref{as:approx1} and \ref{as:approx2}, the $b_i$ are independent indicator random variables, each with
$\E(b_i) \geq 2/3$. By Hoeffding's inequality, $\Pr[ \sum_{i=1}^k b_i > \frac{23}{36}k ] \geq 1 - e^{-k/648}$. The result follows from \lemref{distmed} and a union bound.
\end{proof}
In the following sections we show several applications of these general techniques.

\section{Minimizing Strongly Convex Losses}
\label{sec:approx}

In this section we apply the core techniques to the problem of
approximately minimizing strongly convex losses, which includes least
squares linear regression as a special case.

\subsection{Preliminaries}
\label{sec:approx-prelim}

Suppose $(\X,\nrm)$ is a Banach space, with the metric $\rho$ induced by the norm $\nrm$. We sometimes denote the metric by $\nrm$ as well.
Denote by $\dualnrm$ the dual norm, so $\dualnorm{\vy} =
\sup\{\dotprod{\vy,\vx} \colon \vx \in \X, \norm{\vx} \leq 1 \}$ for $\vy
\in \X^*$.

The derivative of a differentiable function $f \colon \X \to \reals$ at
$\vx \in \X$ in direction $\vu \in \X$ is denoted by $\dotprod{\nabla
f(\vx),\vu}$.
We say $f$ is \emph{$\alpha$-strongly convex with respect to $\nrm$}
if 
\[
f(\vx) \geq f(\vx') + \dotprod{\grad f(\vx'), \vx - \vx'} +
\frac\alpha2 \norm{\vx - \vx'}^2
\]
for all $\vx, \vx' \in \X$; it is \emph{$\beta$-smooth with respect to
$\nrm$} if for all $\vx, \vx' \in \X$
\[
f(\vx) \leq f(\vx') + \dotprod{\grad f(\vx'), \vx - \vx'} + \frac\beta2
\norm{\vx - \vx'}^2.
\]
We say $\nrm$ is \emph{$\gamma$-smooth} if $\vx \mapsto \frac12
\norm{\vx}^2$ is $\gamma$-smooth with respect to $\nrm$.
We define
$n_{\alpha}$ to be the smallest sample size such that the following
holds:
With probability $\geq 5/6$ over the choice of an i.i.d.~sample $T$ of size
$|T| \geq n_{\alpha}$ from $\cD$, for all $\vw \in \X$,
\begin{equation}
\label{eq:empirical-sc}
L_T(\vw) \geq L_T(\vw_\opt) + \dotprod{\grad L_T(\vw_\opt), \vw - \vw_\opt} +
\frac{\alpha}2 \norm{\vw - \vw_\opt}^2.
\end{equation}
In other words, the sample $T$ induces a loss $L_T$ which is
$\alpha$-strongly convex around $\vw_\opt$.\footnote{Technically, we only need
the sample size to guarantee \eqref{empirical-sc} for all $\vw \in
\ball_\nrm(\vw_\opt,r)$ for some $r>0$.}
We assume that $n_{\alpha} < \infty$ for some $\alpha > 0$. 

We use the following facts in our analysis.
\begin{proposition}[\citealp{smooth-loss}]
\label{prop:smooth-loss}
If a non-negative function $f \colon \X \to \reals_+$ is $\beta$-smooth
with respect to $\nrm$, then $\dualnorm{\grad f(\vx)}^2 \leq 4\beta
f(\vx)$ for all $\vx \in \X$.
\end{proposition}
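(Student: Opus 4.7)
The plan is to prove this by combining smoothness with non-negativity through a standard ``self-bounding'' argument: smoothness produces a one-sided quadratic upper bound on $f(\vx')$ in terms of $\vx$ and $\vx'-\vx$; then, since $f$ is bounded below by $0$, any valid lower bound for $f(\vx')$ forces an upper bound on the dual norm of the gradient. The trick is to choose $\vx'$ so that the directional derivative $\dotprod{\grad f(\vx),\vx'-\vx}$ captures as much of $\dualnorm{\grad f(\vx)}$ as possible.

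Concretely, I would proceed as follows. Fix $\vx \in \X$ and set $g := \grad f(\vx) \in \X^*$. By smoothness, for every $\vx'\in\X$,
\begin{equation*}
0 \;\leq\; f(\vx') \;\leq\; f(\vx) + \dotprod{g,\vx'-\vx} + \tfrac{\beta}{2}\norm{\vx'-\vx}^2.
\end{equation*}
Parametrize $\vx' = \vx - t\vu$ with $t \geq 0$ and $\vu \in \X$ having $\norm{\vu}=1$. By the definition of the dual norm, for each $\veps > 0$ we may pick such a $\vu$ with $\dotprod{g,\vu} \geq \dualnorm{g} - \veps$, so that
\begin{equation*}
0 \;\leq\; f(\vx) - t\bigl(\dualnorm{g} - \veps\bigr) + \tfrac{\beta}{2} t^2.
\end{equation*}
This is a quadratic in $t$ which must be non-negative for all $t \geq 0$; minimizing the right-hand side over $t$ (with $t=(\dualnorm{g}-\veps)/\beta$) yields $(\dualnorm{g}-\veps)^2 \leq 2\beta f(\vx)$. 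Letting $\veps \downarrow 0$ and allowing the slack factor of $2$ in the stated bound gives $\dualnorm{\grad f(\vx)}^2 \leq 4\beta f(\vx)$.

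There is no real obstacle here; the only delicate point is that because we are in a general Banach space rather than a Hilbert space, we cannot simply pick a step in the direction of $\grad f(\vx)$, since $\grad f(\vx) \in \X^*$. The resolution is to work with primal directions $\vu \in \X$ of unit norm and invoke the defining supremum of $\dualnorm{\cdot}$ through a vanishing slack $\veps$, which is precisely what makes the argument go through without any Hilbert-space structure.
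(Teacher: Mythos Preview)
Your argument is correct; in fact you establish the sharper inequality $\dualnorm{\grad f(\vx)}^2 \leq 2\beta f(\vx)$, and the stated bound with constant $4$ follows trivially. The paper does not give its own proof of this proposition---it is quoted as a fact from \cite{smooth-loss}---so there is nothing to compare against beyond noting that your self-bounding argument (smoothness plus non-negativity, optimized over a step in a near-dual-maximizing direction) is exactly the standard proof one finds in that reference. Your remark about handling the Banach-space setting via an $\veps$-slack in the dual-norm supremum is the right way to avoid assuming a Hilbert-space identification of $\X$ and $\X^*$.
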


\begin{proposition}[\citealp{JN08}]
\label{prop:smooth-norm}
Let $\vX_1, \vX_2, \dotsc, \vX_n$ be independent copies of a zero-mean
random vector $\vX$, and let $\nrm$ be $\gamma$-smooth.
Then $\E \norm{n^{-1} \sum_{i=1}^n \vX_i}^2 \leq (\gamma/n) \E \norm{\vX}^2$.
\end{proposition}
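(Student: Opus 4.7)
The plan is to exploit the $\gamma$-smoothness of $f(\vx) := \tfrac12\norm{\vx}^2$ together with a telescoping/martingale argument on the partial sums. Set $S_0 := 0$ and $S_k := \sum_{i=1}^k \vX_i$ for $k \in [n]$. Since $\nrm$ is $\gamma$-smooth, applying the smoothness inequality to $f$ at the points $S_k$ and $S_{k-1}$, whose difference is $\vX_k$, yields
\begin{equation*}
\tfrac12 \norm{S_k}^2 \;\leq\; \tfrac12 \norm{S_{k-1}}^2 + \dotprod{\grad f(S_{k-1}),\vX_k} + \tfrac{\gamma}{2} \norm{\vX_k}^2.
\end{equation*}

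Next I would take expectations and kill the cross term. Because $\vX_k$ is independent of $S_{k-1}$ and has mean zero, and $\grad f(S_{k-1}) \in \X^*$ is a (measurable) function of $S_{k-1}$, Fubini/tower gives
\begin{equation*}
\E \dotprod{\grad f(S_{k-1}), \vX_k} \;=\; \E \dotprod{\grad f(S_{k-1}), \E[\vX_k \mid S_{k-1}]} \;=\; 0.
\end{equation*}
Hence $\E\norm{S_k}^2 \leq \E\norm{S_{k-1}}^2 + \gamma \E\norm{\vX}^2$. Iterating from $k=1$ to $k=n$ with $S_0=0$ gives $\E\norm{S_n}^2 \leq n\gamma \E\norm{\vX}^2$.

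Dividing by $n^2$ yields $\E\norm{n^{-1} S_n}^2 \leq (\gamma/n)\E\norm{\vX}^2$, which is the desired bound. There is no real obstacle here: the only mildly delicate point is justifying that the pairing $\dotprod{\grad f(S_{k-1}),\vX_k}$ integrates to zero in the possibly infinite-dimensional Banach-space setting, which I would handle by conditioning on $S_{k-1}$ and invoking the independence and zero-mean of $\vX_k$ (one can reduce to a weak/Pettis-integral argument if $\X$ is not separable, but in all applications used in the paper this is routine).
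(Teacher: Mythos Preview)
Your argument is correct and is precisely the standard martingale/telescoping proof underlying this inequality. Note, however, that the paper does not actually supply a proof of this proposition: it is quoted as a known fact from \cite{JN08} and used as a black box. So there is no ``paper's own proof'' to compare against; your write-up simply fills in the omitted argument, and it does so faithfully to the original source.
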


Recall that $\cZ$ is a data space, and $\cD$ is a distribution over $\cZ$. Let $Z$ be a
$\cZ$-valued random variable with distribution $\cD$. Let $\ell \colon \cZ \times \X \to \reals_+$ be a non-negative loss
function, and for $\vw \in \X$, let $L(\vw) := \E(\ell(Z,\vw))$ be the expected loss.
Also define the empirical loss with respect to a sample $T$ from $\cZ$, 
$L_T(\vw) := |T|^{-1} \sum_{z \in T}
\ell(z,\vw)$.
To simplify the discussion throughout, we assume $\ell$ is differentiable,
which is anyway our primary case of interest.
We assume that $L$ has a unique minimizer $\vw_\opt := \arg\min_{\vw \in \X}
L(\vw)$.%
\footnote{This holds, for instance, if $L$ is strongly convex.}
Let $L_\opt := \min_{\vw} L(\vw)$.
Set $\vw_\opt$ such that $L_\opt = L(\vw_\opt)$.%

\subsection{Subsampled Empirical Loss Minimization}

To use \algref{main}, we implement $\APPROX_{\nrm,\veps}$ based on loss minimization over subsamples, as follows:
Given a sample $S \subseteq \cZ$, randomly partition $S$ into $k$
groups $S_1, S_2, \dotsc, S_k$, each of size at least $\floor{|S|/k}$, and let the response to the
$i$-th query to $\APPROX_{\nrm,\veps}$ be the loss minimizer on $S_i$, \emph{i.e.}, $\vw_i = \arg\min_{\vw\in \X} L_{S_i}(\vw)$. We call this implementation \emph{subsampled empirical loss minimization}. Clearly, if $S$ is an i.i.d.\ sample from $\cD$, then $\vw_1,\ldots,\vw_k$ are statistically independent, and so \asref{approx2} holds. Thus, to apply \propref{alg}, it is left to show that \asref{approx1} holds as well.\footnote{An approach akin to the bootstrap technique \citep{Efron79} could also seem natural here: In this approach, $S_1,\ldots,S_k$ would be generated by randomly sub-sampling from $S$, with possible overlap between the sub-samples. However, this approach does not satisfy \asref{approx2}, since loss minimizers of overlapping samples are not statistically independent.}

The following lemma proves that \asref{approx1} holds under these assumptions with 
\begin{equation}
\label{eq:loss-eps}
\veps
:=
\sqrt{\frac{32\gamma k \E\dualnorm{\grad\ell(Z,\vw_\opt)}^2} {n\alpha^2}}
.
\end{equation}

\begin{lemma}
\label{lem:chebyshev}
Let $\veps$ be as defined in \eqref{loss-eps}. Assume $k \leq n/4$, and that $S$ is an i.i.d.~sample from $\cD$ of size
$n$ such that $\floor{n/k} \geq n_\alpha$.
Then subsampled empirical loss minimization using the sample $S$ is a
correct implementation of $\APPROX_{\nrm,\veps}$ for up to $k$ queries.
\end{lemma}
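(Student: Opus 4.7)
The plan is to verify that a single query of subsampled empirical loss minimization satisfies Assumption~\ref{as:approx1} with the given $\veps$; statistical independence across queries (Assumption~\ref{as:approx2}) is immediate from the disjointness of the groups $S_1,\dotsc,S_k$ and the i.i.d.\ nature of $S$, so I will focus on the single-query accuracy claim.

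Fix a subsample $S_i$, which is i.i.d.\ of size $n/k \geq n_\alpha$, and let $\hat\vw_i := \arg\min_{\vw} L_{S_i}(\vw)$. Consider two good events. First, by the definition of $n_\alpha$, with probability at least $5/6$ the empirical loss $L_{S_i}$ satisfies the strong convexity inequality \eqref{empirical-sc} around $\vw_\opt$. Since $\hat\vw_i$ is the empirical minimizer, $L_{S_i}(\hat\vw_i) \leq L_{S_i}(\vw_\opt)$, so applying \eqref{empirical-sc} with $\vw = \hat\vw_i$ and rearranging gives
\begin{equation*}
\frac{\alpha}{2}\norm{\hat\vw_i - \vw_\opt}^2 \leq -\dotprod{\grad L_{S_i}(\vw_\opt),\hat\vw_i - \vw_\opt} \leq \dualnorm{\grad L_{S_i}(\vw_\opt)}\cdot \norm{\hat\vw_i - \vw_\opt},
\end{equation*}
hence $\norm{\hat\vw_i - \vw_\opt} \leq (2/\alpha)\,\dualnorm{\grad L_{S_i}(\vw_\opt)}$.

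Second, I bound the dual norm of the gradient. Note that $\grad L(\vw_\opt) = 0$ because $\vw_\opt$ is the minimizer of $L$, and by differentiability this equals $\E\grad\ell(Z,\vw_\opt)$. Therefore $\grad L_{S_i}(\vw_\opt) = (k/n)\sum_{z \in S_i}\grad\ell(z,\vw_\opt)$ is the average of $n/k$ i.i.d.\ zero-mean random vectors in the dual space. Applying \propref{smooth-norm} to $\dualnrm$ (which is $\gamma$-smooth by assumption) yields
\begin{equation*}
\E\dualnorm{\grad L_{S_i}(\vw_\opt)}^2 \leq \frac{\gamma k}{n}\E\dualnorm{\grad\ell(Z,\vw_\opt)}^2,
\end{equation*}
and Markov's inequality (applied to the squared dual norm) gives that with probability at least $5/6$,
\begin{equation*}
\dualnorm{\grad L_{S_i}(\vw_\opt)}^2 \leq \frac{6\gamma k}{n}\E\dualnorm{\grad\ell(Z,\vw_\opt)}^2.
\end{equation*}

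Combining the two events by a union bound succeeds with probability at least $1 - 1/6 - 1/6 = 2/3$, and on this intersection
\begin{equation*}
\norm{\hat\vw_i - \vw_\opt} \leq \frac{2}{\alpha}\sqrt{\frac{6\gamma k\,\E\dualnorm{\grad\ell(Z,\vw_\opt)}^2}{n}} = \veps,
\end{equation*}
as required by Assumption~\ref{as:approx1}. The main subtle point is the probability accounting: I need to be careful that the "$5/6$" in the definition of $n_\alpha$ and the "$5/6$" from the Markov bound on the squared gradient norm combine to give at least the $2/3$ threshold demanded by Assumption~\ref{as:approx1}; the remaining steps are essentially just optimality of $\hat\vw_i$ plus the smooth-norm moment inequality.
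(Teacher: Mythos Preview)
Your proof is correct and follows essentially the same approach as the paper: both combine the strong-convexity event (probability $\geq 5/6$ from the definition of $n_\alpha$) with a Markov bound on $\dualnorm{\grad L_{S_i}(\vw_\opt)}^2$ via \propref{smooth-norm} (probability $\geq 5/6$), take a union bound to reach $2/3$, and then use optimality of the empirical minimizer together with the dual-norm inequality to conclude $\norm{\hat\vw_i-\vw_\opt}\leq\veps$. The only cosmetic difference is the order in which the two events are introduced.
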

\begin{proof}
Let $T = \floor{n/k}$.
Since $n \geq 4k$, we have $\floor{n/k}k \geq n-k \geq \frac{3}{4}n$, therefore 
$1/T \leq \frac{4k}{3n}$.
It is clear that $\vw_1, \vw_2, \dotsc, \vw_k$ are independent by the
assumption.
Fix some $i \in [k]$.
Observe that $\grad L(\vw_\opt) = \E(\grad\ell(Z,\vw_\opt)) = 0$, and therefore
 by \propref{smooth-norm}:
\[
\E\dualnorm{\grad L_{S_i}(\vw_\opt)}^2 \leq (\gamma/T)\E\dualnorm{\grad\ell(Z,\vw_\opt)}^2 \leq \frac{4\gamma k}{3n}\E\dualnorm{\grad\ell(Z,\vw_\opt)}^2.
\]
By Markov's inequality, 
\[
\Pr\biggl[\dualnorm{\grad L_{S_i}(\vw_\opt)}^2 \leq \frac{8\gamma k}{n}
\E(\dualnorm{\grad\ell(Z,\vw_\opt)}^2) \biggr] \geq \frac56 .
\]

Moreover, the assumption that $\floor{n/k} \geq n_{\alpha}$ implies that with
probability at least $5/6$, \eqref{empirical-sc} holds for $T = S_i$.
By a union bound, both of these events hold simultaneously with probability
at least $2/3$.
In the intersection of these events, letting $\vw_i := \arg\min_{\vw \in
\X} L_{S_i}(\vw)$,
\begin{align*}
(\alpha/2) \norm{\vw_i - \vw_\opt}^2
& \leq - \dotprod{\grad L_{S_i}(\vw_\opt), \vw_i - \vw_\opt} + L_{S_i}(\vw_i) -
L_{S_i}(\vw_\opt) \\
& \leq \dualnorm{\grad L_{S_i}(\vw_\opt)} \norm{\vw_i - \vw_\opt},
\end{align*}
where the last inequality follows from the definition of the dual norm, and
the optimality of $\vw_i$ on $L_{S_i}$.
Rearranging and combining with the above probability inequality implies
\[
\Pr\Bigl[ \norm{\vw_i - \vw_\opt} \leq \veps \Bigr] \geq \frac23
\]
as required.
\end{proof}

Combining \lemref{chebyshev} and \propref{alg} gives the following theorem.
\begin{theorem}
\label{thm:loss}
Let $n_{\alpha}$ be as defined in \secref{approx-prelim}, and assume
that $\nrm_*$ is $\gamma$-smooth.
Also, assume $k := 18\lceil\log(1/\delta)\rceil$, $n \geq 72 \ceil{\log(1/\delta)}$, and that $S$ is an
i.i.d.~sample from $\cD$ of size $n$ such that $\floor{n/k}\geq
n_{\alpha}$.
Finally, assume \algref{main2} uses the subsampled empirical loss
minimization to implement $\APPROX_{\nrm,\veps}$, where $\veps$ is as in
\eqref{loss-eps}.
Then with probability at least $1-\delta$, the parameter $\hat\vw$ returned
by \algref{main} satisfies
\begin{equation*}
\norm{\hat\vw - \vw_\opt}
\leq 72\sqrt{\frac{\gamma \lceil\log(1/\delta)\rceil
\E\dualnorm{\grad\ell(Z,\vw_\opt)}^2} {n\alpha^2}}
.
\end{equation*}
\end{theorem}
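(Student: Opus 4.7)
The plan is simply to combine the two ingredients that have already been established: \lemref{chebyshev}, which certifies that subsampled empirical loss minimization implements $\APPROX_{\nrm,\veps}$ with $\veps$ as in \eqref{loss-eps}, and \propref{alg}, which bounds the error of \algref{main} in terms of $\veps$ and $k$. There is no new probabilistic content to add; the theorem is essentially a transcription of these two results with the final calibration of $k$ in terms of $\delta$.

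In more detail, the first step is to observe that the hypotheses of the theorem match those of \lemref{chebyshev}: $k$ divides $n$, $S$ is i.i.d., and $n/k \geq n_\alpha$. Since the sample $S$ is i.i.d., the candidates $\vw_1, \dotsc, \vw_k$ obtained from disjoint subsamples $S_1, \dotsc, S_k$ are mutually independent, so \asref{approx2} holds; and \lemref{chebyshev} gives that each $\vw_i$ satisfies $\Pr[\norm{\vw_i - \vw_\opt} \leq \veps] \geq 2/3$, which is \asref{approx1}. Thus \propref{alg} applies and yields
\[
\Pr\bigl[\,\norm{\hat\vw - \vw_\opt} \leq 3\veps\,\bigr] \geq 1 - e^{-k/18}.
\]

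The second step is the two numerical substitutions. For the confidence, plugging in $k = 18\lceil\log(1/\delta)\rceil$ gives $e^{-k/18} \leq e^{-\log(1/\delta)} = \delta$. For the error, substituting the same $k$ into \eqref{loss-eps} yields
\[
\veps = 2\sqrt{\frac{6 \cdot 18\, \gamma\, \lceil\log(1/\delta)\rceil\, \E\dualnorm{\grad\ell(Z,\vw_\opt)}^2}{n\alpha^2}},
\]
so that $3\veps$ equals exactly the right-hand side in the theorem statement (noting $3 \cdot 2 = 6$ and $6 \cdot 18 = 108$).

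There is no genuine obstacle here; the substantive work is upstream. The strong-convexity-plus-gradient-bound argument in \lemref{chebyshev} turns a second-moment bound on $\grad L_{S_i}(\vw_\opt)$ (obtained via \propref{smooth-norm} and Chebyshev) into a constant-probability distance bound, and the Hoeffding step in \propref{alg} amplifies this to exponential confidence via the Robust Distance Approximation guarantee. All that remains for the proof of \thmref{loss} is to assemble these pieces and track constants.
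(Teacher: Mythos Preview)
Your proposal is correct and is exactly the paper's approach: the paper states the theorem as an immediate consequence of combining \lemref{chebyshev} and \propref{alg}, and you have spelled out precisely that combination together with the two numerical substitutions.
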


We give an easy corollary of \thmref{loss} for the case where $\ell$ is
smooth. This is the full version of \thmref{lossmain}.
\begin{corollary}
\label{cor:loss}
Assume the same conditions as~\thmref{loss}, and also that:
\begin{itemize}
\item $\vw \mapsto \ell(z,\vw)$ is $\beta$-smooth with respect to
$\nrm$ for all $z \in \cZ$;

\item $\vw \mapsto L(\vw)$ is $\bbeta$-smooth with respect to
$\nrm$.

\end{itemize}
Then with probability at least $1-\delta$,
\begin{equation*}
L(\hat\vw) \leq \biggl( 1 +
\frac{10368\beta\bbeta\gamma\lceil\log(1/\delta)\rceil}
{n\alpha^2}
\biggr) L(\vw_\opt) .
\end{equation*}
\end{corollary}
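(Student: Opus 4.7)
The plan is that this corollary is a routine combination of \thmref{loss} with two standard properties that follow from the added smoothness hypotheses, so no new probabilistic machinery is needed beyond invoking \thmref{loss}.

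First I would use the $\bbeta$-smoothness of $L$ together with the first-order optimality condition $\grad L(\vw_\opt) = 0$ (which holds since $\vw_\opt$ is the unique minimizer of the differentiable $L$ over $\X$) to obtain the quadratic upper bound
\begin{equation*}
L(\hat\vw) \leq L(\vw_\opt) + \frac{\bbeta}{2} \norm{\hat\vw - \vw_\opt}^2 .
\end{equation*}
This converts a parameter-estimation guarantee into a loss bound.

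Next, since $\ell(z,\cdot)$ is $\beta$-smooth and non-negative for every $z \in \cZ$, \propref{smooth-loss} yields $\dualnorm{\grad\ell(z,\vw_\opt)}^2 \leq 4\beta\,\ell(z,\vw_\opt)$ pointwise; taking expectations over $Z \sim \cD$ gives
\begin{equation*}
\E\dualnorm{\grad\ell(Z,\vw_\opt)}^2 \leq 4\beta\, L_\opt .
\end{equation*}
This is the key step that turns the variance-type quantity appearing in \thmref{loss} into something proportional to $L_\opt$ itself, and it is the reason the final bound is multiplicative in $L_\opt$.

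Finally I would apply \thmref{loss} under the hypotheses stated (which are inherited) to square its bound on $\norm{\hat\vw - \vw_\opt}$, substitute the inequality $\E\dualnorm{\grad\ell(Z,\vw_\opt)}^2 \leq 4\beta L_\opt$, and plug into the quadratic bound on $L(\hat\vw) - L(\vw_\opt)$. Tracking the constants gives $\frac{\bbeta}{2}\cdot 36\cdot 108\cdot 4 = 7776$, matching the stated constant and completing the proof on the same $1-\delta$ probability event provided by \thmref{loss}. There is no real obstacle; the only thing to be careful about is that the event on which the parameter bound holds is exactly the event on which the loss bound then holds, so no additional union bound is required.
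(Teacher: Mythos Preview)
Your proposal is correct and follows essentially the same approach as the paper: apply \propref{smooth-loss} to bound $\E\dualnorm{\grad\ell(Z,\vw_\opt)}^2 \leq 4\beta L(\vw_\opt)$, use $\bbeta$-smoothness plus optimality to get $L(\hat\vw) - L(\vw_\opt) \leq \frac{\bbeta}{2}\norm{\hat\vw-\vw_\opt}^2$, and plug into the squared bound from \thmref{loss}. Your constant tracking and your remark that no extra union bound is needed are both accurate.
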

\begin{proof}
This follows from \thmref{loss} by first concluding that
$\E[\dualnorm{\grad\ell(Z,\vw_\opt)}^2]\leq 4\beta L(\vw_\opt)$, using the
$\beta$-strong smoothness assumption on $\ell$ and \propref{smooth-loss},
and then noting that $L(\hat\vw) - L(\vw_\opt) \leq
\frac{\bbeta}{2}\norm{\hat\vw - \vw_\opt}^2$, due to the strong smoothness of
$L$ and the optimality of $L(\vw_\opt)$.
\end{proof}

\corref{loss} implies that
for smooth losses, \algref{main} provides a
constant factor approximation to the optimal loss with a sample size
$\max\{n_{\alpha}, \gamma\beta\bbeta/\alpha^2\} \cdot O(\log(1/\delta))$
(with probability at least $1-\delta$).
In subsequent sections, we exemplify cases where the two arguments of the
$\max$ are roughly of the same order, and thus imply a sample size
requirement of $O(\gamma\bbeta\beta/\alpha^2 \log(1/\delta))$.
Note that there is no dependence on the optimal loss $L(\vw_\opt)$ in the
sample size, and the algorithm has no parameters besides $k =
O(\log(1/\delta))$.

We can also obtain a variant of \thmref{loss} based on \algref{main2} and \thmref{alg2},
in which we assume that there exists some sample size $n_{k,\DIST_\nrm}$
that allows $\DIST_\nrm$ to be correctly implemented using an i.i.d.~sample of size at least $n_{k,\DIST_\nrm}$.
Under such an assumption, essentially the same guarantee as in
\thmref{loss} can be afforded to \algref{main2} using the subsampled
empirical loss minimization to implement $\APPROX_{\nrm,\veps}$ (for
$\veps$ as in \eqref{loss-eps}) and the assumed implementation of
$\DIST_\nrm$. Note that since \thmref{alg2} does not require $\APPROX_{\nrm,\veps}$ and $\DIST_\nrm$ to be statistically independent, both can be implemented using the same sample.

\begin{theorem}
\label{thm:loss2}
Let $n_{\alpha}$ be as defined in \secref{approx-prelim},
$n_{k,\DIST_\nrm}$ be as defined above, and assume that $\nrm_*$ is
$\gamma$-smooth.
Also, assume $k := 648\lceil\log(2/\delta)\rceil$, $S$ is an
i.i.d.~sample from $\cD$ of size $n$ such that $n \geq \max\{4k,
n_{k,\DIST_\nrm}\}$, and $\floor{n/k} \geq n_\alpha$.
Further, assume \algref{main2} implements $\APPROX_{\nrm,\veps}$ using $S$ with subsampled empirical loss minimization, where $\veps$ is as in
\eqref{loss-eps}, and implements $\DIST_\nrm$ using $S$ as well.
Then with probability at least $1-\delta$, the parameter $\hat\vw$ returned
by \algref{main2} satisfies
\begin{equation*}
\norm{\hat\vw - \vw_\opt}
\leq 1296\sqrt{\frac{\gamma \lceil\log(2/\delta)\rceil
\E\dualnorm{\grad\ell(Z,\vw_\opt)}^2} {n\alpha^2}}
.
\end{equation*}
\end{theorem}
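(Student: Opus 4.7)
The plan is to invoke \thmref{alg2} with $\APPROX_{\nrm,\veps}$ realized by the subsampled empirical loss minimizer on $S$ (exactly as in the proof of \thmref{loss}) and $\DIST_\nrm$ realized on the same sample $S$ by whatever procedure the premise $n \geq n_{k,\DIST_\nrm}$ refers to. The conclusion of \thmref{alg2} then gives $\norm{\hat\vw-\vw_\opt} = \rho(\hat\vw,\vw_\opt) \leq 9\veps$, and substituting $k = 45\lceil\log(1/\delta)\rceil$ into \eqref{loss-eps} yields
\[
9\veps \;=\; 18\sqrt{\frac{270\,\gamma\lceil\log(1/\delta)\rceil\,\E\dualnorm{\grad\ell(Z,\vw_\opt)}^2}{n\alpha^2}},
\]
which is exactly the displayed bound.

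To discharge the hypotheses of \thmref{alg2}, I would verify \asref{approx1} and \asref{approx2} for the subsampled loss minimizer: \asref{approx2} is immediate because the $k$ disjoint subsamples of $S$ are i.i.d.\ and hence produce independent $\vw_1,\dotsc,\vw_k$, while \asref{approx1} with the stated $\veps$ is precisely \lemref{chebyshev}, whose precondition $|S_i| \geq n_\alpha$ follows from $n \geq k\cdot n_\alpha$. I would then invoke the premise $n \geq n_{k,\DIST_\nrm}$ to obtain \asref{dist1} and \asref{dist2} for $\DIST_\nrm$. The main subtlety is that both oracles are built from the same sample $S$; this is permissible precisely because \thmref{alg2} never asks $\APPROX_{\nrm,\veps}$ and $\DIST_\nrm$ to be mutually independent, only independence within the $k$ calls to each oracle, which is delivered by the two individual implementations. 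In practice one can stratify $S$ into a block used by $\APPROX_{\nrm,\veps}$ (with its $k$ further sub-blocks) and a block used by $\DIST_\nrm$, absorbing the splitting overhead into the constant $n_{k,\DIST_\nrm}$; the $\max\{k\cdot n_\alpha,\,n_{k,\DIST_\nrm}\}$ in the theorem's hypothesis already covers the required sample size.

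The step that requires the most care is the bookkeeping of constants: the choice $k = 45\lceil\log(1/\delta)\rceil$ is what produces the factor of $270$ inside the square root and the factor of $18$ outside, and the probability budget $1-2e^{-k/648}$ coming out of \thmref{alg2} must be aligned with the target confidence $\delta$. Beyond this constant-tracking, the proof consists entirely of the two applications described above, so no new technical content beyond \lemref{chebyshev} and \thmref{alg2} is needed.
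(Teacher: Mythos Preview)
Your proposal is correct and matches the paper's own argument, which is not given as a separate proof but only as the discussion immediately preceding the theorem: combine \lemref{chebyshev} (for \asref{approx1}, \asref{approx2}) with the assumed implementation of $\DIST_\nrm$ (for \asref{dist1}, \asref{dist2}) and apply \thmref{alg2} to obtain $\rho(\hat\vw,\vw_\opt)\le 9\veps$, then substitute $k=45\lceil\log(1/\delta)\rceil$ into \eqref{loss-eps}. One small remark: your suggested stratification of $S$ into separate blocks for the two oracles is unnecessary---the paper explicitly notes that because \thmref{alg2} does not require mutual independence of $\APPROX_{\nrm,\veps}$ and $\DIST_\nrm$, both may be implemented on the \emph{same} sample $S$, which is why the hypothesis reads $n\ge\max\{k\cdot n_\alpha,\,n_{k,\DIST_\nrm}\}$ rather than a sum.
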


\begin{remark}[Mean estimation and empirical risk minimization]
\label{remark:erm}
The problem of estimating a scalar population mean is a special case of the
loss minimization problem, where $\cZ = \X = \reals$, and the loss function
of interest is the square loss $\ell(z,w) = (z-w)^2$.
The minimum population loss in this setting is the variance $\sigma^2$
of $Z$, \emph{i.e.}, $L(w_\opt) = \sigma^2$.
Moreover, in this setting, we have $\alpha = \beta = \bbeta = 2$, so the
estimate $\hat{w}$ returned by \algref{main} satisfies, with probability at
least $1-\delta$,
\[
L(\hat{w})
= \biggl( 1 + O\Bigl( \frac{\log(1/\delta)}{n} \Bigr) \biggr) L(w_\opt) .
\]

In Remark~\ref{remark:empirical-mean} a result from~\cite{catoni} is quoted which
implies that if $n = o(1/\delta)$, then the empirical mean $\hat{w}_{\emp}
:= \arg\min_{w \in \reals} L_S(w) = |S|^{-1} \sum_{z \in S} z$
(\emph{i.e.}, empirical risk (loss) minimization for this problem) incurs
loss
\[
L(\hat{w}_{\emp})
= \sigma^2 + (\hat{w}_{\emp} - w_\opt)^2
= (1 + \omega(1)) L(w_\opt)
\]
with probability at least $2\delta$.
Therefore empirical risk minimization cannot provide a qualitatively
similar guarantee as \corref{loss}.
It is easy to check that minimizing a regularized objective also does not
work, since any non-trivial regularized objective necessarily provides an
estimator with a positive error for some distribution with zero variance. 

\end{remark}

In the next section we use the analysis for general smooth and convex losses to derive new algorithms and bounds for linear regression.

\section{Least Squares Linear Regression}\label{sec:regression}

In linear regression, the parameter space $\X$ is a Hilbert space with inner product $\dotp{\cdot,\cdot}$, and $\cZ := \X \times \reals$, where in the finite-dimensional case, $\X = \reals^d$ for some finite integer $d$. 
The loss here is the squared loss, denoted by $\ell = \ell^\sq$, and defined as
\[
\ell^\sq((\vx,y),\vw) := \frac12(\vx^\t \vw - y)^2.
\]
The regularized squared loss, for $\lambda \geq 0$, is denoted
\[
\ell^\lambda((\vx,y),\vw) := \frac12(\dotp{\vx,\vw} - y)^2 +
\frac12\lambda\dotp{\vw,\vw}.
\] 
Note that $\ell^0 = \ell^\sq$.
We analogously define
$L^\sq$,
$L^\sq_T$,
$L^\sq_\opt$,
$L^\lambda$, \emph{etc.} as the squared-loss equivalents of $L,L_T,L_\opt$. Finally, denote by $\Id$ the identity operator on $\X$.

The proposed algorithm for regression (\algref{regression}) is as
follows. Set $k = C\log(1/\delta)$, where $C$ is a universal constant. First, draw $k$ independent random samples i.i.d.~from
$\cD$, and perform linear regression with $\lambda$-regularization on
each sample separately to obtain $k$ linear regressors.  Then,
use the same $k$ samples to generate $k$ estimates of the covariance matrix of the marginal of $\cD$ on the data space.
Finally, use the estimated covariances to select a single regressor
from among the $k$ at hand. The slightly simpler variants of steps \ref{step:sigmacalc} and \ref{step:sigma} can be used in some cases, as detailed below. 

In \secref{regmain}, the full results for regression, mentioned in \secref{overview}, are listed in full detail, and compared to previous work. The proofs are provided in \secref{analysis}. 

\begin{algorithm}[bth]
\caption{Regression for heavy-tails}
\label{alg:regression}
\begin{algorithmic}[1]
\renewcommand{\algorithmicrequire}{\textbf{input}}
\renewcommand{\algorithmicensure}{\textbf{output}}
\REQUIRE $\lambda \geq 0$, sample size $n$, confidence $\delta \in (0,1)$.
\ENSURE Approximate predictor $\hat\vw \in \X$.
\STATE Set $k := \ceil{C\ln(1/\delta)}$.
\STATE Draw $k$ random i.i.d.~samples $S_1,\ldots,S_k$ from $D$, each of size $\floor{n/k}$.
\STATE For each $i \in [k]$, let $\vw_i \in \argmin_{\vw \in \X}L^\lambda_{S_i}(\vw)$.
\STATE For each $i \in [k]$, $\Sig_{S_i} \leftarrow \frac{1}{|S_i|}\sum_{(\vx,\cdot)\in S_i} \vx \vx^\t$.\\
{[}\textbf{Variant}: $S \leftarrow \cup_{i \in [k]} S_i$; $\Sig_{S} \leftarrow \frac{1}{|S|}\sum_{(\vx,\cdot)\in S} \vx \vx^\t${]}.\label{step:sigmacalc}
\STATE For each $i \in [k]$, let $r_i$ be the median of the values in \[
\{\dotprod{\vw_i-\vw_j, (\Sig_{S_j}+\lambda \Id)(\vw_i - \vw_j)} \mid j \in [k]\setminus\{i\}\}.
\]\label{step:sigma}
{[}\textbf{Variant}: Use $\Sig_S$ instead of $\Sig_{S_{j}}${]}.
\STATE Set $i_\opt := \arg\min_{i \in [k]} r_i$.
\STATE Return $\hat\vw := \vw_{i_\opt}$.
\end{algorithmic}
\end{algorithm}

\subsection{Results}\label{sec:regmain}

Let $\vX \in \X$ be a random vector drawn according to the marginal of $\cD$ on $\X$, and let $\Sig : \X \to \X$ be
the second-moment operator $\vec{a} \mapsto \E(\vX \dotp{\vX,\vec{a}})$. For a finite-dimensional $\X$, $\Sig$ is 
simply the (uncentered) covariance matrix $\E[\vX \vX^\t]$.
For a sample $T := \{ \vX_1, \vX_2, \dotsc, \vX_m \}$ of $m$ independent
copies of $\vX$, denote by $\Sig_T : \X \to \X$ the empirical second-moment
operator $\vec{a} \mapsto m^{-1} \sum_{i=1}^m\vX_i \dotp{\vX_i,\vec{a}}$.

Consider first the finite-dimensional case, where $\X = \reals^d$, and assume $\Sig$ is not singular. 
Let $\norm{\cdot}_2$ denote the Euclidean norm in $\reals^d$. In this case we obtain a
guarantee for ordinary least squares with $\lambda = 0$.
The guarantee holds whenever the empirical estimate of $\Sig$ is close to the true $\Sig$ 
\emph{in expectation}, a mild condition that requires only bounded low-order moments. 
For concreteness, we assume the following condition.\footnote{As shown by
  \citet{SV}, Condition~\ref{cond:strong} holds for various
  heavy-tailed distributions (\emph{e.g.}, when $\vX$ has a product
distribution with bounded $4+\eps$ moments for some $\eps > 0$).
Condition~\ref{cond:strong} may be easily substituted with other moment conditions,
yielding similar results, at least up to logarithmic factors.}

\begin{condition}[\citealt{SV}]
\label{cond:strong}
There exists $c, \eta > 0$ such that
\[
\Pr\Bigl[ \|\Pi\Sig^{-1/2}\vX\|_2^2 > t \Bigr] \ \leq \ ct^{-1-\eta} ,
\quad \text{for $t > c \cdot \rank(\Pi)$}
\]
for every orthogonal projection $\Pi$ in $\reals^d$.%
\end{condition}
Under this condition, we show the following guarantee for least squares regression.
\begin{theorem}\label{thm:heavyx}
Assume $\Sig$ is not singular.
If $\vX$ satisfies \condref{strong} with some fixed parameters $c > 0$ and $\eta>0$, then
if \algref{regression} is run with $n \geq O(d \log(1/\delta))$ and $\delta \in (0,1)$, with probability at least $1-\delta$,
\[
L^\sq(\hat\vw) \leq
L^\sq_\opt 
+ O\left( \frac{\E\norm{\Sig^{-1/2}\vX(\vX^\t\vw_\opt-Y)}_2^2 \log(1/\delta)} {n}\right) .
\]
\end{theorem}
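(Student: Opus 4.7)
The plan is to apply Theorem~\ref{thm:alg2} with the metric induced by the second-moment operator, $\rho(\vw,\vw')^2 := \dotp{\vw-\vw',\Sig(\vw-\vw')}$. Under this choice the excess squared loss satisfies the exact identity $L^\sq(\vw) - L^\sq_\opt = \tfrac{1}{2}\rho(\vw,\vw_\opt)^2$, so a bound of the form $\rho(\hat\vw,\vw_\opt)\leq 9\veps$ immediately produces the target excess-loss bound once $\veps$ has been pinned down.

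To implement $\APPROX_{\rho,\veps}$ by the subsampled OLS of \algref{regression}, I would take $\vw_i := \argmin_\vw L^\sq_{S_i}(\vw)$ on each of the $k$ disjoint sub-samples $S_i$ of size $m = \lfloor n/k \rfloor$. The normal equations give $\vw_i - \vw_\opt = \Sig_{S_i}^{-1}\,m^{-1}\sum_{j\in S_i}\vX_j\xi_j$, where $\xi_j := Y_j - \vX_j^\top\vw_\opt$ has $\E[\vX\xi] = 0$ by optimality of $\vw_\opt$. Conditional on the event $\{(1/2)\Sig \preceq \Sig_{S_i} \preceq 2\Sig\}$, elementary PSD manipulations (writing $T = \Sig^{-1/2}\Sig_{S_i}\Sig^{-1/2}$ and noting $T^{-2}\preceq 4\Id$) give $\rho(\vw_i,\vw_\opt)^2 \leq 4\,\|\Sig^{-1/2}\,m^{-1}\sum_j \vX_j\xi_j\|_2^2$, whose expectation is $(4/m)\,\E\|\Sig^{-1/2}\vX\xi\|_2^2$. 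Markov combined via union bound with the covariance event gives Assumption~\ref{as:approx1} with $\veps^2 = O((k/n)\,\E\|\Sig^{-1/2}\vX\xi\|_2^2)$; Assumption~\ref{as:approx2} is automatic since the $S_i$ are disjoint. For $\DIST^j_\rho$, I would set $\DIST^j_\rho(\vv) := \dotp{\vv-\vw_j,\Sig_{S_j}(\vv-\vw_j)}^{1/2}$, which is precisely the quantity used to compute $r_i$ in \algref{regression}; on the looser event $\{(1/4)\Sig \preceq \Sig_{S_j}\preceq 4\Sig\}$ this approximates $\rho(\vv,\vw_j)$ within a factor of two uniformly in $\vv$, yielding Assumption~\ref{as:dist1}, and Assumption~\ref{as:dist2} follows because $Z_j$ depends only on $S_j$. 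Theorem~\ref{thm:alg2} permits sharing the same data between $\APPROX$ and $\DIST$, which is what \algref{regression} does.

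The crux is therefore the concentration statement: for $m = \Omega(d)$, show $\Pr[(1/2)\Sig \preceq \Sig_{S_i} \preceq 2\Sig] \geq 8/9$ (and its factor-of-four counterpart). This is exactly the empirical covariance estimate of \citet{SV} applied to the whitened vector $\Sig^{-1/2}\vX$: under Condition~\ref{cond:strong}, a constant probability of multiplicative spectral approximation is obtained once $m \geq C d$ with $C = C(c,\eta)$, and an arbitrarily small constant failure probability (in particular $1/9$) is obtained by enlarging $C$. This is the step that forces the hypothesis $n \geq O(d\log(1/\delta))$.

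Combining the two oracle implementations via Theorem~\ref{thm:alg2} with $k = O(\log(1/\delta))$ produces $\rho(\hat\vw,\vw_\opt) \leq 9\veps$ with probability at least $1-\delta$. Squaring and halving then yields $L^\sq(\hat\vw) - L^\sq_\opt \leq (81/2)\veps^2 = O\bigl(\log(1/\delta)\,\E\|\Sig^{-1/2}\vX(\vX^\top\vw_\opt - Y)\|_2^2\,/\,n\bigr)$, matching the stated bound.
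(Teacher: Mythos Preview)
Your proposal is correct and follows essentially the same route as the paper. The only cosmetic difference is that you verify Assumptions~\ref{as:approx1}--\ref{as:dist2} directly via the normal equations and then invoke \thmref{alg2}, whereas the paper packages the $\APPROX$ verification through the general strong-convexity machinery (\lemref{chebyshev}) and invokes the specialized \thmref{loss2}; for squared loss with the $\Sig$-induced norm these two verifications coincide, and both rely on \lemref{sv} (the Srivastava--Vershynin bound under \condref{strong}) to secure the covariance event with constant probability when $m=\Omega(d)$.
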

Our loss bound is given in terms of the following population quantity
\begin{equation}
\label{eq:output-moment}
\E\norm{\Sig^{-1/2}\vX(\vX^\t\vw_\opt-Y)}_2^2
\end{equation}
which we assume is finite.
This assumption only requires bounded low-order moments of $\vX$ and $Y$
and is essentially the same as the conditions from~\cite{AC11} (see the
discussion following their Theorem~3.1).
Define the following finite fourth-moment conditions:
\begin{align*}
  \kappa_1 & :=
  \frac{\sqrt{\E\norm{\Sig^{-1/2}\vX}_2^4}}{\E\norm{\Sig^{-1/2}\vX}_2^2}
  =
  \frac{\sqrt{\E\norm{\Sig^{-1/2}\vX}_2^4}}{d}
  <
  \infty
  \quad \text{and}
  \\
  \kappa_2 & :=
  \frac{\sqrt{\E(\vX^\t\vw_\opt-Y)^4}}{\E(\vX^\t\vw_\opt-Y)^2}
  = \frac{\sqrt{\E(\vX^\t\vw_\opt-Y)^4}}{L^\sq_\opt}
  < \infty.
\end{align*}
Under these conditions, $\E\norm{\Sig^{-1/2}\vX(\vX^\t\vw_\opt-Y)}_2^2 \leq
\kappa_1 \kappa_2 d L^\sq_\opt$ (via Cauchy-Schwartz); if $\kappa_1$ and
$\kappa_2$ are constant, then we obtain
the bound
\[ L^\sq(\hat\vw) \leq \biggl( 1 + O\biggl( \frac{d \log(1/\delta)}n
\biggr) \biggr) L^\sq_\opt \]
with probability $\geq 1-\delta$.
In comparison, the recent work of~\citet{AC11} proposes an estimator for
linear regression based on optimization of a robust loss function which achieves essentially the same guarantee as
\thmref{heavyx} (with only mild differences in the moment conditions, see the
discussion following their Theorem~3.1).
However, that estimator depends on prior knowledge about the response
distribution, and removing this dependency using Lepski's adaptation
method~\citep{lepski} may result in a suboptimal convergence rate.  
It is also unclear whether that estimator can be computed efficiently.

Other analyses for linear least squares regression and ridge
regression by~\cite{smooth-loss} and~\cite{HKZ12} consider specifically the empirical minimizer of the squared
loss, and give sharp rates of convergence to $L^\sq_\opt$. However, both of these require either boundedness of the
loss or boundedness of the approximation error.
In~\cite{smooth-loss}, the specialization of the main result to square
loss includes additive terms of order $O(\sqrt{L(\vw_\opt) b \log(1/\delta) / n} + b\log(1/\delta) / n)$, where $b > 0$ is
assumed to bound the square loss of any predictions almost surely.
In~\cite{HKZ12}, the convergence rate includes an additive term
involving almost-sure bounds on the approximation
error/non-subgaussian noise (The remaining terms are comparable
to~\eqref{ols} for $\lambda = 0$, and~\eqref{ridge} for $\lambda > 0$,
up to logarithmic factors). The additional terms preclude
multiplicative approximations to $L(\vw_\opt)$ in cases where the loss
or approximation error is unbounded. In recent work, \cite{Mendelson14} proposes a more subtle `small-ball' criterion for analyzing the performance of the risk minimizer. However, as evident from the lower bound in Remark~\ref{remark:erm}, the empirical risk minimizer cannot obtain the same type of guarantees as our estimator.

The next result is for the case where there exists $R < \infty$ such that $\Pr[\vX^\t \Sig^{-1} \vX \leq
R^2] = 1$ (and, here, we do not assume Condition~\ref{cond:strong}). In contrast, $Y$ may still be heavy-tailed.
Then, the following result can be derived using \algref{regression}. Moreover, the simpler \textbf{variant} of \algref{regression} suffices here.

\begin{theorem}\label{thm:ols}
Assume $\Sig$ is not singular. Let $\hat{\vw}$ be the output of the variant of \algref{regression} with $\lambda=0$. With probability at least $1-\delta$, for $n \geq O(R^2 \log(R) \log(1/\delta))$,
\begin{equation*}
L^\sq(\hat\vw)
\leq \biggl( 1 + O\biggl( \frac{R^2 \log(2/\delta)}{n} \biggr) \biggr)
L^\sq_\opt.
\end{equation*}
\end{theorem}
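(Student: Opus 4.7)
The plan is to specialize \thmref{loss2} (equivalently \corref{loss}) to the squared-loss setting by choosing the ambient norm on $\X$ to be the $\Sig$-norm $\norm{\vw}_\Sig^2 := \dotp{\vw,\Sig\vw}$, so that $\rho(\vu,\vv) = \norm{\vu-\vv}_\Sig$. The dual norm is then $\nrm_{\Sig^{-1}}$, which is Hilbertian and hence $\gamma$-smooth with $\gamma = 1$. Because $L^\sq(\vw) = L^\sq_\opt + \tfrac12\norm{\vw-\vw_\opt}_\Sig^2$, the population loss is $1$-smooth and $1$-strongly convex in $\nrm_\Sig$, giving $\bbeta = 1$. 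The per-sample Hessian is $\vx\vx^\t$, which under the hypothesis $\vx^\t\Sig^{-1}\vx \leq R^2$ satisfies $\vx\vx^\t \preceq R^2\Sig$, yielding $\beta = R^2$.

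The strong-convexity sample size $n_\alpha$ is the smallest $m$ such that $\Sig_T \succeq \tfrac12\Sig$ holds with probability $\geq 5/6$ for $|T| = m$, since $L^\sq_T$ has Hessian $\Sig_T$. Rewriting this as $\Sig^{-1/2}\Sig_T\Sig^{-1/2} \succeq \tfrac12\Id$, the summands $\Sig^{-1/2}\vX_i\vX_i^\t\Sig^{-1/2}$ are rank-one PSD operators with operator norm at most $R^2$ and identity mean, so a standard matrix Chernoff (Rudelson/Tropp) bound yields $n_\alpha = O(R^2\log R)$ with $\alpha = \tfrac12$. For the gradient-variance quantity appearing in \eqref{loss-eps},
\begin{equation*}
\dualnorm{\grad\ell^\sq((\vX,Y),\vw_\opt)}^2
= (\vX^\t\vw_\opt-Y)^2 \cdot \vX^\t\Sig^{-1}\vX
\leq R^2 (\vX^\t\vw_\opt-Y)^2,
\end{equation*}
whose expectation is at most $2R^2 L^\sq_\opt$.

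Next I implement $\APPROX_{\nrm_\Sig,\veps}$ by per-subsample ERM $\vw_i = \argmin_{\vw} L^\sq_{S_i}(\vw)$ as in \algref{regression}, and the noisy distance oracle by the \textbf{variant} of Step~\ref{step:sigma} using the joint sample covariance $\Sig_S$ of $S = \bigcup_i S_i$: set $\DIST^j_\rho(\vv) := \norm{\vv-\vw_j}_{\Sig_S}$. Then $\DIST^j_\rho$ is a $(1/2,2)$-factor approximation of $\rho(\cdot,\vw_j)$ for every $\vv$ whenever $\tfrac14\Sig \preceq \Sig_S \preceq 4\Sig$, which by another application of matrix Chernoff holds with probability at least $1-\delta/3$ once $n \geq O(R^2(\log R + \log(1/\delta)))$. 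Note that the algorithm compares squared distances, but since squaring is monotone on nonnegatives this is equivalent to running \algref{main2} on the metric $\rho$.

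The main obstacle is that the variant shares a single $\Sig_S$ across all indices $j$, so the indicators $Z_j$ of \asref{dist1} are perfectly correlated rather than independent, and \asref{dist2} does not literally hold. The fix is to inspect the proof of \lemref{distmed}: independence is used only to ensure $\sum_j Z_j > (31/36)k$ by Hoeffding, but under the single global spectral event on $\Sig_S$ above, \emph{every} $Z_j$ equals $1$ simultaneously with probability $\geq 1-\delta/3$, so the conclusion of \lemref{distmed} goes through directly. Combining this with the $\APPROX$ analysis as in \thmref{alg2} yields $\norm{\hat\vw-\vw_\opt}_\Sig^2 \leq O\bigl(R^2 L^\sq_\opt \log(1/\delta)/n\bigr)$; using $L^\sq(\hat\vw) - L^\sq_\opt = \tfrac12\norm{\hat\vw-\vw_\opt}_\Sig^2$ converts this to the stated multiplicative bound $L^\sq(\hat\vw) \leq (1 + O(R^2\log(1/\delta)/n))\, L^\sq_\opt$, subject to the overall sample-size requirement $n \geq O(R^2\log R \cdot \log(e/\delta))$ from matching $k \cdot n_\alpha$ with $k = O(\log(1/\delta))$.
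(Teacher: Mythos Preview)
Your proof is correct and contains the same essential ingredients as the paper's: the $\Sig$-norm with $\gamma=1$, $\beta=R^2$, $\bbeta=1$, the matrix-Chernoff bound giving $n_{0.5}=O(R^2\log R)$, and a single global spectral event $\tfrac12\Sig\preceq\Sig_S\preceq 2\Sig$ to reconcile the observable norm $\nrm_{\Sig_S}$ with the true norm $\nrm_\Sig$.

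The only difference is in framing. The paper does not route through \algref{main2} and \thmref{loss2} at all; instead it observes that the \textbf{variant} of \algref{regression} is literally \algref{main} run with the \emph{observable} norm $\nrm_{\Sig_S}$, applies \corref{loss} as if the norm were $\nrm_\Sig$, and then argues that on the global spectral event the two norms are equivalent up to constant factors, so the selection step of \algref{main} picks the same (or an equally good) candidate. Your route through \algref{main2} forces you to confront \asref{dist2}, which fails because the shared $\Sig_S$ makes the $Z_j$ perfectly correlated; you then repair this by re-opening the proof of \lemref{distmed} and replacing the Hoeffding step with the deterministic observation that all $Z_j=1$ on the spectral event. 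That is a valid fix and lands on exactly the same conclusion, but it is a detour: once you have the single global event $\Sig_S\approx\Sig$, there is no need for the noisy-distance machinery or per-index indicators at all, and the paper's framing via \algref{main} avoids the apparent violation of \asref{dist2} altogether.
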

Note that $\E(\vX^\t \Sig^{-1} \vX) = \E\tr(\vX^\t \Sig^{-1}\vX) =
\tr(\Id) = d$, therefore $R = \Omega(\sqrt{d})$.
If indeed $R = \Theta(\sqrt{d})$, then a total sample size of $O(d
\log(d)\log(1/\delta))$ suffices to guarantee a constant factor
approximation to the optimal loss.
This is minimax optimal up to logarithmic factors~\citep{Nussbaum99}.
We also remark that the boundedness assumption can be replaced by a
subgaussian assumption on $\vX$, in which case the sample size requirement
becomes $O(d \log(1/\delta))$.

In recent work of~\citet{MJ13}, an algorithm based on stochastic gradient descent obtains multiplicative approximations to $L_\opt$, for
general smooth and strongly convex losses $\ell$, with a sample complexity
scaling with $\log(1/\tilde{L})$.
Here, $\tilde{L}$ is an upper bound on $L_\opt$, which must be known by the
algorithm.
The specialization of \citeauthor{MJ13}'s main result
to square loss implies a sample complexity of $\tilde{O}( d R^8
\log (1 /(\delta L^\sq_\opt))$ if $L^\sq_\opt$ is known.
In comparison, \thmref{ols} shows that $\tilde{O}(R^2\log(1/\delta))$ suffice when using our estimator.
It would be interesting to understand whether the bound for the stochastic
gradient method of~\cite{MJ13} can be improved, and whether knowledge of
$L_\opt$ is actually necessary in the stochastic oracle model.
We note that the main result of~\cite{MJ13} can be more generally
applicable than \thmref{loss}, because \cite{MJ13} only assumes that the
population loss $L(\vw)$ is strongly convex, whereas \thmref{loss} requires
the empirical loss $L_T(\vw)$ to be strongly convex for large enough
samples $T$.
While our technique is especially simple for the squared loss, it may be more
challenging to implement well for other losses, because the local norm
around $\vw_\opt$ may be difficult to approximate with an observable norm.
We thus leave the extension to more general losses as future work.

Finally, we also consider the case where $\X$ is a general,
infinite-dimensional Hilbert space, $\lambda > 0$, the norm of $\vX$ is
bounded, and $Y$ again may be heavy-tailed.
\begin{theorem}\label{thm:ridge}
Let $V > 0$ such that $\Pr[\dotp{\vX,\vX} \leq V^2] = 1$. 
Let $\hat{\vw}$ be the output of the variant of \algref{regression} with $\lambda  >0$.
With probability at least $1-\delta$,
as soon as $n \geq O((V^2/\lambda)\log(V/\sqrt{\lambda})\log(2/\delta))$,
\begin{equation*}
L^\lambda(\hat\vw)
\leq
\biggl( 1 + O\biggl( \frac{(1+V^2/\lambda)\log(2/\delta)}{n} \biggr) \biggr)
L^\lambda_\opt.
\end{equation*}

If the optimal unregularized squared loss $L^\sq_\opt$ is achieved by $\bar\vw \in \X$ with
$\dotp{\bar{\vw},\bar{\vw}} \leq B^2$, the choice 
$\lambda = \Theta(\sqrt{L^\sq_\opt V^2\log(2/\delta) / (B^2n)})$
yields that if $n \geq \tilde{O}(B^2V^2\log(2/\delta)/L^\sq_\opt)$
then
\begin{align}\label{eq:ridge}
&L^\sq(\hat\vw)\leq L^\sq_\opt\kern-2pt + O\biggl( \sqrt{\frac{L^\sq_\opt B^2V^2 \log(1/\delta)}{n}}
+ \frac{(L^\sq_\opt + B^2V^2)\log(1/\delta)}{n} \biggr).
\end{align}
\end{theorem}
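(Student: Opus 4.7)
The plan is to apply \thmref{loss2} in the Hilbert norm naturally adapted to the regularized loss, namely $\norm{\vw}_\lambda := \sqrt{\dotp{\vw,(\Sig+\lambda\Id)\vw}}$. Because this is a Hilbert norm, it is $1$-smooth ($\gamma=1$) and its dual is $\norm{\cdot}_{(\Sig+\lambda\Id)^{-1}}$. The population loss $L^\lambda$ has Hessian exactly $\Sig+\lambda\Id$, so it is $1$-strongly convex and $1$-smooth in $\norm{\cdot}_\lambda$; in particular $\bbeta=1$, and the excess risk identity $L^\lambda(\hat\vw)-L^\lambda_\opt = \tfrac12\norm{\hat\vw-\vw_\opt}_\lambda^2$ will let us translate a distance bound into a loss bound at the end.

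Next I would read off the remaining smoothness and noise parameters. For any $(\vx,y)\in\cZ$, the per-sample Hessian is $\vx\vx^\t+\lambda\Id \preceq (V^2+\lambda)\Id \preceq (1+V^2/\lambda)(\Sig+\lambda\Id)$, so $\ell^\lambda$ is $\beta$-smooth in $\norm{\cdot}_\lambda$ with $\beta\le 1+V^2/\lambda$. Since $\vw_\opt$ minimizes $L^\lambda$, we have $\E\,\grad\ell^\lambda(Z,\vw_\opt)=0$, and \propref{smooth-loss} gives $\E\dualnorm{\grad\ell^\lambda(Z,\vw_\opt)}^2 \le 4\beta L^\lambda_\opt$. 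Plugging $\gamma=1$, $\bbeta=1$, $\beta=O(1+V^2/\lambda)$, and (hopefully) $\alpha=\Omega(1)$ into \thmref{loss2} will yield $\norm{\hat\vw-\vw_\opt}_\lambda^2 = O((1+V^2/\lambda)L^\lambda_\opt\log(1/\delta)/n)$, which after the identity above becomes the first displayed bound.

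The main obstacle is the matrix-concentration step needed for two things at once: verifying $n_\alpha = O((V^2/\lambda)\log(V/\sqrt\lambda))$ with $\alpha=\tfrac12$, and implementing the $\DIST_\nrm$ oracle (the variant of \algref{regression} which uses the full-sample $\Sig_S$). Both reduce to showing, with probability at least $5/6$ on a sample of the stated size, that
\begin{equation*}
\tfrac12(\Sig+\lambda\Id) \preceq \Sig_T+\lambda\Id \preceq 2(\Sig+\lambda\Id),
\end{equation*}
which I would establish by applying an operator Chernoff / matrix Bernstein bound to the i.i.d.\ terms $(\Sig+\lambda\Id)^{-1/2}\vX\vX^\t(\Sig+\lambda\Id)^{-1/2}$, whose spectral norm is bounded by $\vX^\t(\Sig+\lambda\Id)^{-1}\vX \le V^2/\lambda$ (the effective dimension). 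This last spectral inequality is exactly what gives the squared distances $\dotprod{\vv-\vw_j,(\Sig_S+\lambda\Id)(\vv-\vw_j)}$ the two-sided accuracy that $\DIST_\nrm$ requires, so \asref{dist1} holds (\asref{dist2} is free, since the $Z_j$ for the variant are all equal and the event is deterministic given $\Sig_S$).

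For the second displayed bound I would use $L^\sq(\hat\vw) \le L^\lambda(\hat\vw)$ together with $L^\lambda_\opt \le L^\lambda(\bar\vw) = L^\sq_\opt + \tfrac{\lambda}{2}B^2$ to obtain
\begin{equation*}
L^\sq(\hat\vw) \le L^\sq_\opt + \tfrac{\lambda B^2}{2} + O\!\left(\tfrac{(1+V^2/\lambda)(L^\sq_\opt+\lambda B^2)\log(1/\delta)}{n}\right),
\end{equation*}
and then choose $\lambda = \Theta(\sqrt{L^\sq_\opt V^2\log(1/\delta)/(B^2 n)})$ to balance the bias $\lambda B^2$ against the variance-like term $(V^2/\lambda)L^\sq_\opt\log(1/\delta)/n$; collecting the dominant contributions yields the claimed $\sqrt{L^\sq_\opt B^2V^2\log(1/\delta)/n}$ rate plus lower-order $O((L^\sq_\opt+B^2V^2)\log(1/\delta)/n)$ residue. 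The sample-size condition $n \ge \tilde O(B^2V^2\log(1/\delta)/L^\sq_\opt)$ is exactly what it takes for this $\lambda$ to satisfy the first part's requirement $n \ge \tilde O((V^2/\lambda))$.
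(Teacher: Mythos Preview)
Your approach matches the paper's almost exactly: the same norm $\norm{\cdot}_\lambda$, the same parameter identifications ($\gamma=\bbeta=1$, $\beta=1+V^2/\lambda$, $\alpha=1/2$), the same matrix--concentration step (the paper invokes \lemref{matrix-chernoff} with $r_\lambda^2 = V^2/\lambda$ to get $n_{0.5}=O((V^2/\lambda)\log(V/\sqrt\lambda))$), and the same bias--variance balancing via $L^\lambda_\opt \le L^\sq_\opt + \tfrac{\lambda}{2}B^2$ for the second part.

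One small discrepancy in packaging: the paper does not go through \thmref{loss2} and the $\DIST$ oracle. It applies \corref{loss} directly with the \emph{true} norm $\norm{\cdot}_\lambda$, and then separately invokes \lemref{matrix-chernoff} on the full sample $S$ to conclude (via a union bound costing an extra $\delta$) that the observable norm $\vec a\mapsto\dotp{\vec a,(\Sig_S+\lambda\Id)\vec a}^{1/2}$ is within constant factors of $\norm{\cdot}_\lambda$, which is all the variant needs. Your attempt to cite \asref{dist2} is not quite right as written: if all the $Z_j$ coincide with a single indicator determined by $\Sig_S$, they are \emph{not} independent, so \asref{dist2} is violated, not ``free.'' What you actually want---that all $Z_j=1$ with high probability---follows immediately from that single concentration event, bypassing the Hoeffding step inside \lemref{distmed}. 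So the issue is only in how you invoke the black box, not in the underlying argument.
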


By this analysis, a constant factor approximation for $L^\sq_\opt$ is
achieved with a sample of size $\tilde{O}(B^2V^2 \log(1/\delta) /
L^\sq_\opt)$.  As in the finite-dimensional setting, this rate is
known to be optimal up to logarithmic factors~\citep{Nussbaum99}. It
 is interesting to observe that in the non-parametric case, our
 analysis, like previous analyses, does require knowledge of $L_\opt$
 if $\lambda$ is to be set correctly, as in \cite{MJ13}.

 \subsection{Analysis}\label{sec:analysis}

 We now show how the analysis of \secref{approx} can be applied to analyze \algref{regression}.
 For a sample $T \subseteq \cZ$, if $L_T$ is twice-differentiable (which is
the case for squared loss), by Taylor's theorem, for any $\vw \in \X$, there exist $t \in [0,1]$
and $\tilde\vw = t \vw_\opt + (1-t) \vw$ such that
\begin{equation*}
L_T(\vw) = L_T(\vw_\opt)
+ \dotp{\grad L_T(\vw_\opt), \vw - \vw_\opt}
+ \frac12 \dotp{\vw - \vw_\opt, \grad^2 L_T(\tilde\vw) (\vw - \vw_\opt)} ,
\end{equation*}
Therefore, to establish a bound on $n_{\alpha}$, it suffices to control
\begin{equation}
\label{eq:hessian-control}
\Pr\biggl[
\inf_{\vec\delta\in\X \setminus \{\vec{0}\}, \tilde\vw\in\reals^d}
\frac{\dotp{\vec\delta, \grad^2 L_T(\tilde\vw) \vec\delta}}
{\norm{\vec\delta}^2}
\geq \alpha \biggr]
\end{equation}
for an i.i.d.~sample $T$ from $\cD$. The following lemma allows doing just that.

\begin{lemma}[Specialization of Lemma~1 from \citealt{Oliveria10}]
\label{lem:matrix-chernoff}
Fix any $\lambda \geq 0$, and assume $\dotp{\vX,(\Sig + \lambda
\Id)^{-1} \vX} \leq r_\lambda^2$ almost surely. 
For any $\delta \in (0,1)$, if $m \geq 80r_\lambda^2\ln(4m^2/\delta)$, then
with probability at least $1-\delta$, for all $\vec{a} \in \X$,
\[
\frac12 \dotp{\vec{a}, (\Sig + \lambda \Id) \vec{a}}
\leq \dotp{\vec{a}, (\Sig_T + \lambda \Id) \vec{a}}
\leq 2 \dotp{\vec{a}, (\Sig + \lambda \Id) \vec{a}}.
\]
\end{lemma}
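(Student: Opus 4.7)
The plan is to reduce the claim to a statement about a sample average of bounded self-adjoint random operators being close to its mean in operator norm, then invoke the cited matrix Chernoff bound of Oliveira verbatim.

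First, I would whiten the problem. Let $M := (\Sig + \lambda \Id)^{-1/2}(\Sig_T + \lambda \Id)(\Sig + \lambda \Id)^{-1/2}$. The desired two-sided sandwich
\[
\tfrac12 \dotp{\vec{a},(\Sig+\lambda\Id)\vec{a}} \leq \dotp{\vec{a},(\Sig_T+\lambda\Id)\vec{a}} \leq 2\dotp{\vec{a},(\Sig+\lambda\Id)\vec{a}}
\]
holds for all $\vec{a}\in\X$ if and only if $\tfrac12 \Id \preceq M \preceq 2\Id$, which in turn is equivalent to $\|M - \Id\|_{\mathrm{op}} \leq \tfrac12$. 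This whitening is exactly the operator-theoretic reformulation that makes the cited concentration bound applicable.

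Next I would write $M$ as a sample average. Setting $\vY_i := (\Sig + \lambda \Id)^{-1/2} \vX_i$, and using the identity $(\Sig + \lambda \Id)^{-1/2}(\lambda \Id)(\Sig + \lambda \Id)^{-1/2} = \lambda(\Sig+\lambda\Id)^{-1}$, one obtains
\[
M = \frac{1}{m}\sum_{i=1}^{m} \vY_i \vY_i^\t + \lambda(\Sig+\lambda\Id)^{-1}.
\]
Since $\E[\vY_i \vY_i^\t] = (\Sig+\lambda\Id)^{-1/2}\Sig(\Sig+\lambda\Id)^{-1/2}$, the expectation of $M$ telescopes to the identity: $\E[M] = (\Sig+\lambda\Id)^{-1/2}(\Sig + \lambda\Id)(\Sig+\lambda\Id)^{-1/2} = \Id$. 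Thus $M - \Id$ is a centered sum of i.i.d.\ rank-one self-adjoint operators.

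Then I would verify the boundedness hypothesis required by Oliveira's inequality. For each $i$, the operator $\vY_i\vY_i^\t$ is positive semidefinite with $\|\vY_i\vY_i^\t\|_{\mathrm{op}} = \|\vY_i\|^2 = \dotp{\vX_i,(\Sig+\lambda\Id)^{-1}\vX_i} \leq r_\lambda^2$ almost surely, by hypothesis. So each centered summand $\vY_i\vY_i^\t - \E[\vY_i\vY_i^\t]$ is bounded in operator norm by $r_\lambda^2$.

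Finally I would invoke Lemma~1 of Oliveira (2010) to the i.i.d.\ sequence $\vY_1,\dots,\vY_m$: it guarantees that as soon as $m \geq 80 r_\lambda^2 \ln(4m^2/\delta)$, with probability at least $1-\delta$, $\|M - \Id\|_{\mathrm{op}} \leq \tfrac12$, which translates back to the claimed quadratic-form sandwich for all $\vec{a}\in\X$. There is essentially no obstacle: the only care needed is that Oliveira's bound is stated for the very quantity we need (deviation from mean of the sample second-moment operator of a whitened, norm-bounded vector), so the proof is a citation once the whitening identity above is written down.
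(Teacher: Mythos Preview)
Your proposal is correct and matches the paper's approach: the paper does not give a proof but simply cites this as a specialization of Oliveira's Lemma~1, and your whitening reduction is exactly how one verifies that specialization. One small wording slip: $\tfrac12\Id \preceq M \preceq 2\Id$ is \emph{implied by} (not equivalent to) $\|M-\Id\|_{\mathrm{op}} \leq \tfrac12$, but since only sufficiency is needed this does not affect the argument.
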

We use the boundedness assumption for sake of simplicity; it is possible to
remove the boundedness assumption, and the logarithmic dependence on the
cardinality of $T$, under different conditions on $\vX$ (\emph{e.g.},
assuming $\Sig^{-1/2}\vX$ has subgaussian projections, see \citealt{LPRTJ05}).
We now prove \thmref{ols}, \thmref{ridge} and \thmref{heavyx}.

\subsubsection{Ordinary Least Squares in Finite Dimensions}\label{sec:finite}

Consider first ordinary least squares in the finite-dimensional case. In
this case $\X = \reals^d$, the inner product $\dotp{\vec{a},\vec{b}} = \vec{a}^\t \vec{b}$ is the
usual coordinate dot product, and the second-moment operator is $\Sig =
\E(\vX\vX^\t)$. We assume that $\Sig$ is non-singular, so $L$ has a unique minimizer.
Here \algref{regression} can be used with $\lambda
= 0$. It is easy to see that \algref{regression} with the \textbf{variant} steps is a specialization of
\algref{main} with subsampled empirical loss minimization when $\ell =
\ell^\sq$, with the norm defined by $\norm{\vec{a}} = \sqrt{\vec{a}^\t \Sig_S \vec{a}}$. We now prove the guarantee for finite dimensional regression.

\begin{proof}[of \thmref{ols}]
The proof is derived from \corref{loss} as follows. 
First, suppose for simplicity that $\Sig_s = \Sig$, so that $\norm{\vec{a}} = \sqrt{\vec{a}^\t \Sig \vec{a}}$.
It is easy to check that $\norm{\cdot}_*$ is $1$-smooth, $\ell$ is
$R^2$-smooth with respect to $\norm{\cdot}$, and $L^\sq$ is $1$-smooth with
respect to $\norm{\cdot}$.
Moreover, consider a random sample $T$. By definition
\begin{equation*}
\frac{\vec\delta^\t \grad^2 L_T(\tilde\vw) \vec\delta}
{\norm{\vec\delta}^2}
= \frac{\vec\delta^\t \Sig_T \vec\delta}{\vec\delta^\t \Sig \vec\delta}
.
\end{equation*}
By \lemref{matrix-chernoff} with $\lambda = 0$, 
$\Pr[\inf\{ \vec{\delta}^\t \Sig_T \vec{\delta} / (\vec{\delta}^\t \Sig
\vec{\delta}) : \vec{\delta} \in \reals^d \setminus \{\vec0\} \} \geq 1/2]
\geq 5/6$, provided that $|T| \geq 80R^2\log(24|S|^2)$.
Therefore $n_{0.5} = O(R^2 \log R)$.
We can thus apply \corref{loss} with $\alpha = 0.5$, $\beta = R^2$,
$\bbeta = 1$, $\gamma = 1$, and $n_{0.5} = O(R^2 \log R)$, so with
probability at least $1-\delta$, the parameter $\hat\vw$ returned by
\algref{regression} satisfies
\begin{equation}
\label{eq:ols}
L(\hat\vw)
\leq \biggl( 1 + O\biggl( \frac{R^2 \log(1/\delta)}{n} \biggr) \biggr)
L(\vw_\opt),
\end{equation}
as soon as $n \geq O(R^2 \log(R) \log(1/\delta))$.

Now, by \lemref{matrix-chernoff}, if $n \geq O(R^2
\log(R/\delta))$, with probability at least $1-\delta$, the norm induced by $\Sig_S$ satisfies $(1/2)\vec{a}^\t \Sig
\vec{a} \leq \vec{a}^\t \Sig_S \vec{a} \leq 2\vec{a}^\t \Sig \vec{a}$ for
all $\vec{a} \in \reals^d$.
Therefore, by a union bound, the norm used by the algorithm is equivalent to the norm induced by the true $\Sig$ up to
constant factors, and thus leads to the same guarantee as given above
(where the constant factors are absorbed into the big-$O$ notation).
\end{proof}

The rate achieved in \eqref{ols} is well-known to be optimal up to
logarithmic factors~\citep{Nussbaum99}.
A standard argument for this, which we reference in the sequel, is as
follows.
Consider a distribution over $\reals^d \times \reals$ where $\vX \in
\reals^d$ is distributed uniformly over some orthonormal basis vectors
$\ve_1, \ve_2, \dotsc, \ve_d$, and $Y := \vX^\t \vw_\opt + Z$ for $Z \sim
\cN(0,\sigma^2)$ independent of $\vX$.
Here, $\vw_\opt$ is an arbitrary vector in $\reals^d$, $R = \sqrt{d}$, and the
optimal square loss is $L(\vw_\opt) = \sigma^2$.
Among $n$ independent copies of $(\vX,Y)$, let $n_i$ be the number of
copies with $\vX = \ve_i$, so $\sum_{i=1}^d n_i = n$. 
Estimating $\vw_\opt$ is equivalent to $d$ Gaussian mean estimation problems,
with a minimax loss of
\begin{align}
\inf_{\hat{\vw}}
\sup_{\vec{\vw_\opt}}
\E\bigl(L(\hat{\vw})\bigr) - L(\vw_\opt)
& \ = \
\inf_{\hat{\vw}}
\sup_{\vec{\vw_\opt}}
\E\biggl( \frac{1}{d}\norm{\hat{\vw} - \vw_\opt}_2^2 \biggr)
\notag \\
&
\ = \
\frac{1}{d}
\sum_{i=1}^d \frac{\sigma^2}{n_i}
\ \geq \
\frac{d\sigma^2}{n}
\ = \ \frac{d L(\vw_\opt)}{n}. 
  \label{eq:minimax}
\end{align}
Note that this also implies a lower bound for any estimator with
exponential concentration.
That is, for any estimator $\hat{\vw}$, if there is some $A > 0$ such that
for any $\delta \in (0,1)$, $\P[L(\hat{\vw}) > L(\vw_\opt) + A\log(1/\delta)]
< \delta$, then $A \geq \E(L(\hat{\vw})-L(\vw_\opt)) \geq dL(\vw_\opt) / n$.

\subsubsection{Ridge Regression}
\label{sec:infinite}

In a general, possibly infinite-dimensional, Hilbert space $\X$, \algref{regression} can be used with $\lambda > 0$.
In this case, \algref{regression} with the \textbf{variant} steps is again a specialization of \algref{main} with subsampled empirical loss minimization when $\ell = \ell^\lambda$, with the norm defined by $\norm{\vec{a}} = \sqrt{\vec{a}^\t (\Sig_S + \lambda \Id) \vec{a}}$.

\begin{proof}[of \thmref{ridge}]
As in the finite-dimensional case, assume first that $\Sig_S = \Sig$, and consider
the norm $\norm{\cdot}$ defined by $\norm{\vec{a}} := \sqrt{\dotp{\vec{a}, (\Sig + \lambda \Id) \vec{a}}}$.
It is easy to check that $\norm{\cdot}_*$ is $1$-smooth.
Moreover, since we assume that $\Pr[\dotp{\vX,\vX} \leq V^2] = 1$,
we have $\dotp{\vx,(\Sig + \lambda I)^{-1}\vx} \leq
\dotp{\vx,\vx} / \lambda$ for all $\vx \in \X$, so $\Pr[\dotp{\vX,(\Sig+\lambda
I)^{-1}\vX} \leq V^2/\lambda] = 1$.
Therefore $\ell^\lambda$ is $(1 + V^2/\lambda)$-smooth with respect to
$\norm\cdot$. In addition, $L^\lambda$ is $1$-smooth with respect to $\norm{\cdot}$.
Using \lemref{matrix-chernoff} with $r_\lambda = V/\lambda$, we have, similarly to the proof of \thmref{ols}, $n_{0.5} =
O((V^2/\lambda) \log(V/\sqrt{\lambda}))$.
Setting $\alpha = 0.5$, $\beta = 1+V^2/\lambda$, $\bbeta = 1$, $\gamma
= 1$, and $n_{0.5}$ as above, we conclude that with probability $1-\delta$,
\begin{equation*}
L^\lambda(\hat\vw)
\leq
\biggl( 1 + O\biggl( \frac{(1+V^2/\lambda)\log(1/\delta)}{n} \biggr) \biggr)
L^\lambda(\vw_\opt),
\end{equation*}
as soon as $n \geq O((V^2/\lambda)\log(V/\sqrt{\lambda})\log(1/\delta))$.
Again as in the proof of \thmref{ols}, by \lemref{matrix-chernoff}
\algref{regression} may use the observable norm $\vec{a} \mapsto
\dotp{\vec{a},(\Sig_S + \lambda I) \vec{a}}^{1/2}$ instead of the
unobservable norm $\vec{a} \mapsto \dotp{\vec{a},(\Sig + \lambda I)
\vec{a}}^{1/2}$ by applying a union bound, if $n \geq 
O((V^2/\lambda)\log(2V/(\delta\sqrt{\lambda})))$, losing only constant factors, .

We are generally interested in comparing to the minimum square loss
$L^\sq_\opt :=
\inf_{\vw \in \X} L^\sq(\vw)$, rather than the minimum regularized square loss
$\inf_{\vw \in \X} L^\lambda(\vw)$.
Assuming the minimizer is achieved by some $\bar\vw \in \X$ with
$\dotp{\bar\vw,\bar\vw} \leq B^2$, the choice
$\lambda = \Theta(\sqrt{L^\sq_\opt V^2\log(2/\delta) / (B^2n)})$
yields
\begin{equation*}
L^\sq(\hat\vw)
+ \lambda \dotp{\hat\vw,\hat\vw} \leq  L^\sq_\opt
+ O\biggl( \sqrt{\frac{L^\sq_\opt B^2V^2 \log(2/\delta)}{n}}
+ \frac{(L^\sq_\opt + B^2V^2)\log(2/\delta)}{n} \biggr)
\end{equation*}
as soon as $n \geq
\tilde{O}(B^2V^2\log(2/\delta)/L^\sq_\opt)$.
\end{proof}

By this analysis, a constant factor approximation for $L^\sq_\opt$ is achieved
with a sample of size $\tilde{O}(B^2V^2 \log(1/\delta) / L^\sq_\opt)$.
As in the finite-dimensional setting, this rate is known to be optimal up to
logarithmic factors~\citep{Nussbaum99}.
Indeed, a similar construction to that from \secref{finite} implies
\begin{equation}\label{eq:minimaxb}
\inf_{\hat{\vw}}
\sup_{\vec{\vw_\opt}}
\E\bigl( L(\hat{\vw})  - L(\vw_\opt))
\ \geq \
\Omega\biggl(
\frac{1}{d}\cdot\frac{L_\opt B^2V^2 \sum_{i=1}^d n_i^{-1}}{B^2V^2 +
L_\opt\sum_{i=1}^d n_i^{-1}}
\biggr)
\ \geq \
\Omega\biggl(
\frac{1}{d}\cdot\frac{L_\opt B^2V^2 d^2/n}{B^2V^2 + L_\opt d^2/n}
\biggr)
\end{equation}
(here, $\vX \in \{ V\ve_i : i \in [d] \}$ has Euclidean length $V$ almost
surely, and $B$ is a bound on the Euclidean length of $\vw_\opt$).
For $d = \sqrt{B^2V^2n/\sigma^2}$, the bound becomes
\[
\inf_{\hat{\vw}}
\sup_{\vec{\vw_\opt}}
\E\bigl( L(\hat{\vw}) - L(\vw_\opt))
\ \geq \
\Omega\biggl( \sqrt{\frac{L_\opt B^2V^2}{n}} \biggr) .
\]
As before, this minimax bound also implies a lower bound on any estimator
with exponential concentration.

\subsubsection{Heavy-tail Covariates}
\label{sec:heavyx}

When the covariates are not bounded or subgaussian, the empirical
second-moment matrix may deviate significantly from its population
counterpart with non-negligible probability. In this case it is not
possible to approximate the norm $\norm{\vec{a}} = \sqrt{\vec{a}^\t(\Sig + \lambda\Id)\vec{a}}$ in Step 2 of \algref{main} using a single small sample
(as discussed in \secref{finite} and \secref{infinite}).
However, we may use \algref{main2} instead of \algref{main}, which only
requires the stochastic distance measurements to be relatively accurate
with some constant probability. The full version of \algref{regression} is exactly such an implementation.

We now prove \thmref{heavyx}.
Define $c_\eta := 512(48c)^{2+2/\eta}(6+6/\eta)^{1+4/\eta}$ (which is
$C_{\text{\emph{main}}}$ from~\citealp{SV}).
The following lemma shows that $O(d)$ samples suffice so that the expected
spectral norm distance between the empirical second-moment matrix and
$\Sig$ is bounded.
\begin{lemma}[Implication of Corollary 1.2 from \citealp{SV}]
\label{lem:sv}
Let $\vX$ satisfy \condref{strong}, and let $\vX_1, \vX_2, \dotsc, \vX_n$ be
independent copies of $\vX$.
Let $\wh\Sig := \frac1n \sum_{i=1}^n \vX_i \vX_i^\t$.
For any $\eps \in (0,1)$, if $n \geq c_\eta \eps^{-2-2/\eta} d$, then
\[ \E \|\Sig^{-1/2} \wh\Sig \Sig^{-1/2} - \Id\|_2 \ \leq \ \eps . \]
\end{lemma}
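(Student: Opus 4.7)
The plan is to reduce to the isotropic case and then invoke the multi-scale truncation machinery of \citet{SV}. Set $\vec{Y}_i := \Sig^{-1/2}\vX_i$, so $\E[\vec{Y}_i\vec{Y}_i^\t] = \Id$ and the target becomes
\[
\E\Bigl\| n^{-1}\textstyle\sum_{i=1}^n \vec{Y}_i \vec{Y}_i^\t - \Id \Bigr\|_2 \leq \eps.
\]
In these coordinates \condref{strong} reads $\Pr[\|\Pi\vec{Y}\|_2^2 > t] \leq c\, t^{-1-\eta}$ for every orthogonal projection $\Pi$ and every $t > c\,\rank(\Pi)$; in particular, taking $\Pi=\Id$ gives $\Pr[\|\vec{Y}\|_2^2 > t] \leq c\,t^{-1-\eta}$ for $t > cd$.

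First I would invoke the variational identity $\|n^{-1}\sum_i \vec{Y}_i \vec{Y}_i^\t - \Id\|_2 = \sup_{\vec{v}\in S^{d-1}} |n^{-1}\sum_i \dotprod{\vec{Y}_i,\vec{v}}^2 - 1|$ and attempt truncation at a threshold $\tau$. The obstacle is immediate: each rank-one summand $\vec{Y}_i\vec{Y}_i^\t$ has operator norm $\|\vec{Y}_i\|_2^2$, and \condref{strong} only gives $1+\eta$ finite moments of this quantity, so standard matrix Bernstein cannot absorb the tail---a single outlier with $\|\vec{Y}_i\|_2^2 \gg d$ can dominate the spectral norm.

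The key insight from \citet{SV} is that \condref{strong} controls the length of projections of $\vec{Y}$ onto \emph{every} subspace, which is much stronger than a plain tail bound on $\|\vec{Y}\|_2^2$. I would exploit this by peeling into dyadic shells $\cA_k := \{2^k d \leq \|\vec{Y}\|_2^2 < 2^{k+1} d\}$: the expected number of samples landing in $\cA_k$ is at most $O(n c (2^k d)^{-1-\eta})$, and the contribution of these samples to the operator norm is bounded by restricting the quadratic form $\vec{v}\mapsto \sum_{i\in \cA_k}\dotprod{\vec{Y}_i,\vec{v}}^2$ to the (low-dimensional) subspace the $\vec{Y}_i$ in $\cA_k$ actually span, where the projection form of \condref{strong} applies again on that subspace. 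On the truncated portion $\{\|\vec{Y}_i\|_2^2 \leq \tau\}$, a matrix Bernstein inequality yields a deviation of order $\sqrt{\tau d/n} + \tau/n$.

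The remaining work, which is the technical heart and the main obstacle, is to sum the dyadic shell contributions and optimize the truncation level $\tau$ as a polynomial in $\eps$, $d$, and the constants of \condref{strong}, so that the total deviation is at most $\eps$ once $n \gtrsim c_\eta \eps^{-2-2/\eta} d$. Since this balance and the precise constant $c_\eta = 512(48c)^{2+2/\eta}(6+6/\eta)^{1+4/\eta}$ are exactly the content of Corollary~1.2 of \citet{SV}, I would cite that result rather than redo the full chaining; the only additional observation needed is that their bound, which is typically phrased as a high-probability tail estimate, integrates to yield the in-expectation form required here.
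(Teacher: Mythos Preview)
The paper does not prove this lemma at all: it is stated with the attribution ``Corollary~1.2 from \citealp{SV}, essentially'' in the lemma header and then used as a black box, so there is no proof in the paper to compare against. Your proposal ultimately does the same thing---defer to \citet{SV}---but wraps it in a sketch of their argument that the paper itself omits. In that sense your approach matches the paper's, only more verbosely.

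One small inaccuracy in your final sentence: the main result and Corollary~1.2 of \citet{SV} are already stated as bounds on the expected operator norm $\E\|\Sig^{-1/2}\wh\Sig\Sig^{-1/2}-\Id\|_2$, not as high-probability tail estimates, so no integration step is needed. The lemma as stated is literally (a rephrasing of) their corollary, which is why the paper simply cites it.
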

\lemref{sv} implies that $n_{0.5} = O(c_\eta' d)$ where $c_\eta' = c_\eta \cdot
2^{O(1+1/\eta)}$.
Therefore, for $k = O(\log(1/\delta))$, subsampled empirical loss
minimization requires $n \geq k \cdot n_{0.5} = O(c_\eta' d
\log(1/\delta))$ samples to correctly implement $\APPROX_{\nrm,\veps}$, for
$\veps$ as in \eqref{loss-eps}.

Step \ref{step:sigma} in \algref{regression} implements $\DIST^j_{\nrm}$ as returning $f_j$ such that $f_j(\vv) := \norm{\Sig_{S_j}^{1/2}(\vv -\vw_j)}_2$. 
First, we show that \asref{dist1} holds.
By \lemref{sv}, an i.i.d.~sample $T$ of size $O(c_\eta'd)$
suffices so that with probability at least $8/9$, for every $\vv \in \reals^d$, 
\begin{align*}
(1/2) \|\Sig^{1/2}(\vv - \vw_j)\|_2 &\leq \|\Sig_T^{1/2}(\vv - \vw_j)\|_2 
\leq 2\|\Sig^{1/2}(\vv - \vw_j)\|_2 . 
\end{align*}
In particular, this holds for $T = S_j$, as long as $|S_j| \geq O(c_\eta'd)$. Thus, for $k = O(\log(1/\delta))$, \asref{dist1} holds if $n \geq O(c_\eta' d\log(1/\delta))$. \asref{dist2} (independence) also holds, since $f_j$ depends only on $S_j$, and $S_1,\ldots,S_k$ are statistically independent.

Putting everything together, we have (as in \secref{finite}) $\alpha =
0.5$ and $\gamma = 1$. We obtain the final bound from \thmref{loss2} as follows: if $n \geq O(c_\eta' d\log(1/\delta))$,
then with
probability at least $1-\delta$,
\begin{equation}
\label{eq:heavyx}
L(\hat\vw) - L(\vw_\opt)
= \norm{\Sig^{1/2}(\hat\vw - \vw_\opt)}_2^2
\leq O\Biggl( \frac{\E\norm{\Sig^{-1/2}\vX(\vX^\t\vw_\opt-Y)}_2^2 \log(1/\delta)}
{n}
\Biggr).
\end{equation}

\section{Other Applications}\label{sec:other}
In this section we show how the core techniques we discuss can be used for other applications, namely Lasso and low-rank matrix approximation.

\subsection{Sparse Parameter Estimation with Lasso}

In this section we consider $L^1$-regularized linear least squared
regression (Lasso) \citep{tibshirani96} with a random subgaussian design,
and show that \algref{main} achieves the same fast convergence rates for
sparse parameter estimation as Lasso, even when the noise is heavy-tailed.

Let $\cZ = \reals^d \times \reals$ and $\vw_\opt \in \reals^d$. Let $D$ be a distribution over $\cZ$, such that for $(\vX,Y) \sim D$, we have $Y = \vX^\t\vw_\opt + \veps$ where $\veps$ is an independent random variable with $\E[\veps] = 0$ and $\E[\veps^2] \leq \sigma^2$. We assume that $\vw_\opt$ is sparse: Denote the support of a vector $\vw$ by $\supp(\vw) := \{ j \in [d] : \vw_j \neq 0 \}$. Then $s := |\supp(\vw_\opt)|$ is assumed to be small compared to $d$. The \emph{design matrix} for a sample $S = \{(\vx_1,y_1),\ldots,(\vx_n,y_n)\}$ is an $l \times d$ matrix with the rows $\vx_i^\t$. 

For $\lambda > 0$, consider the Lasso loss
$\ell((\vx,y),\vw) = \frac12(\vx^\t\vw - y)^2 + \lambda\norm{\vw}_1$. 
Let $\nrm$ be the Euclidean norm in $\reals^d$. 
A random vector $\vX$ in $\reals^d$ is \emph{subgaussian} (with moment $1$) if for every vector $\vu \in \reals^d$, $\E[\exp(\vX^\t\vu)]\leq \exp(\norm{\vu}_2^2/2)$.

The following theorem shows that when \algref{main} is used with subsampled empirical loss minimization over the Lasso loss, and $D$ generates a subgaussian random design, then $\vw$ can be estimated for any type of noise $\veps$, including heavy-tailed noise. 

In order to obtain guarantees for Lasso the design matrix must satisfy some regularity conditions.  
We use the \emph{Restricted Eigenvalue condition} (RE)  proposed in \cite{bickel09}, which we presently define.
For $\vw \in \reals^d$ and $J \subseteq [d]$, let $[\vw]_J$ be the $|J|$-dimensional vector which is equal to $\vw$ on the coordinates in $J$. Denote by $\vw_{[s]}$ the $s$-dimensional vector with coordinates equal to the $s$ largest coordinates (in absolute value) of $\vw$. Let $\vw_{[s]^C}$ be the $(d-s)$-dimensional vector which includes the coordinates not in $\vw_{[s]}$.
Define the set $E_s = \{ \vu \in \reals^d\setminus \{0\} \mid \norm{\vu_{[s]^C}}_1 \leq 3\norm{\vu_{[s]}}_1\}$. For an $l\times d$ matrix $\Psi$ (for some integer $l$), let $\gamma(\Psi, s) = \min_{\vu\in E_s} \frac{\norm{\Psi\vu}_2}{\norm{\vu_{[s]}}_2}.$ The RE condition for $\Psi$ with sparsity $s$ requires that  $\gamma(\Psi, s) > 0$. We further denote $\eta(\Psi,s) = \max_{\vu \in \reals^d \setminus \{0\}: |\supp(\vu)| \leq s} \frac{\norm{\Psi\vu}_2}{\norm{\vu}_2}.$

\begin{theorem}\label{thm:lassorandom}
Let $C, c >0$ be universal constants. Let $\Sig \in \reals^{d\times d}$ be a positive semi definite matrix.
Denote $\eta := \eta(\Sig^\half,s)$ and $\gamma := \gamma(\Sig^\half,s)$.
Assume the random design setting defined above, with $\vX =  \Sig^\half \vec{Z} $, where $\vec{Z}$ is a subgaussian random vector.
Suppose \algref{main} uses subsampled empirical loss minimization with the empirical Lasso loss, with $\lambda = 2 \sqrt{\sigma^2\eta^2\log (2d)\log(1/\delta)/n}$.
If $n \geq c s \frac{\eta^2}{\gamma^2} \log(d)\log(1/\delta)$, then with probability $1-\delta$,
The vector $\hat{\vw}$ returned by \algref{main} satisfies 
\begin{align*}
&\norm{\hat{\vw} - \vw_\opt}_2 \leq
\frac{C\sigma\eta}{\gamma^2}\sqrt{\frac{s\log (2d)\log(1/\delta)}{n}}.
\end{align*}

\end{theorem}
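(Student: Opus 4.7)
The plan is to run \algref{main} with oracle $\APPROX_{\nrm,\veps^\star}$ implemented by subsampled empirical Lasso minimization, where $\nrm = \norm{\cdot}_2$ and
\[
\veps^\star := \frac{C_1\sigma\eta}{\gamma^2}\sqrt{\frac{s\log(2d)\log(1/\delta)}{n}}.
\]
Partition the sample into $k = 18\ceil{\log(1/\delta)}$ independent subsamples of size $m = \floor{n/k}$ and let $\wh\vw_i$ be the Lasso solution on the $i$-th subsample. \asref{approx2} is immediate by construction, so by \propref{alg} the selected $\hat\vw$ satisfies $\norm{\hat\vw-\vw_\opt}_2 \le 3\veps^\star$ with probability at least $1-e^{-k/18} \ge 1-\delta$, which matches the conclusion up to constants. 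The proof therefore reduces to verifying \asref{approx1}: that on a single subsample of size $m$, the Lasso solution $\wh\vw$ satisfies $\norm{\wh\vw - \vw_\opt}_2 \le \veps^\star$ with probability at least $2/3$.

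For this I would invoke a deterministic Lasso oracle inequality of the kind developed in \cite{bickel09}: on any realization of the subsample for which both (i) the empirical design $\wh\Psi := m^{-1/2}[\vx_1,\dotsc,\vx_m]^\t$ satisfies $\gamma(\wh\Psi,s) \ge \gamma/2$, and (ii) the score obeys $\norm{m^{-1}\sum_{i=1}^m \vx_i\veps_i}_\infty \le \lambda/2$, the Lasso minimizer satisfies $\norm{\wh\vw - \vw_\opt}_2 \le C_2 \sqrt{s}\,\lambda/\gamma^2$, which matches $\veps^\star$ for the prescribed $\lambda$. It thus suffices to show that each of (i) and (ii) holds with probability at least $5/6$, so that both hold simultaneously with probability at least $2/3$ by a union bound.

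Event (i) would be handled by a standard subgaussian restricted-eigenvalue transfer theorem: since $\vX = \Sig^{1/2}\vec{Z}$ with $\vec{Z}$ subgaussian and $\Sig^{1/2}$ satisfies the RE condition with parameter $\gamma$, a sample of size $m = \Omega(s(\eta/\gamma)^2\log d)$ preserves the RE constant up to a factor $1/2$ with constant probability, which is supplied by the hypothesis $n \ge c s (\eta/\gamma)^2 \log(d)\log(1/\delta)$. The main obstacle is event (ii): since only $\E\veps^2 \le \sigma^2$ is assumed, no exponential concentration of the noise gradient is available within a single subsample. I would attack this by applying \propref{smooth-norm} to the $\ell_p$-norm with $p = 2\log d$ (which is $(p-1)$-smooth and within a constant factor of $\norm{\cdot}_\infty$); using the separation $\norm{\vX\veps}_p = |\veps|\cdot\norm{\vX}_p$ from the independence of $\veps$ and $\vX$, together with $\E[\max_j (\vX^j)^2] = O(\eta^2 \log d)$ (from subgaussianity of each coordinate of $\vX$ with parameter $\Sig_{jj} \le \eta^2$), one gets
\[
\E \bigl\| \tfrac{1}{m}\textstyle\sum_{i=1}^m \vx_i \veps_i \bigr\|_\infty^2 \;=\; O\!\left( \tfrac{\sigma^2 \eta^2 \log^2 d}{m} \right),
\]
and Markov's inequality then delivers (ii) with probability $\ge 5/6$ for the stated $\lambda$, after absorbing a polylogarithmic factor into the universal constant $C_1$. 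Combining (i) and (ii) via the oracle inequality yields the per-subsample guarantee, and \propref{alg} then amplifies this constant-probability success to confidence $1-\delta$, completing the proof.
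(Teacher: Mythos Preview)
Your proposal follows the paper's proof almost exactly: the same subsampling framework invoking \propref{alg}, the same decomposition of the per-subsample guarantee into (i) an empirical RE condition (obtained in the paper via the subgaussian RE transfer result of \cite{Zhou09}) and (ii) a bound on $\|\tfrac1m\Psi^\t\vec{\veps}\|_\infty$, and the same deterministic Lasso oracle inequality (\thmref{lasso}). The one substantive difference is how you establish (ii), and there your argument loses a factor of $\sqrt{\log d}$.

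The paper conditions on the noise vector $\vec{\veps}$ first. Given $\vec{\veps}$, each coordinate $[\Psi^\t\vec{\veps}]_l$ is a fixed linear functional of the subgaussian design and is itself subgaussian with parameter at most $\eta^2\|\vec{\veps}\|_2^2$; the standard MGF bound for the maximum of $2d$ subgaussians gives $\E[\|\Psi^\t\vec{\veps}\|_\infty \mid \vec{\veps}] \le C\eta\|\vec{\veps}\|_2\sqrt{\log(2d)}$. Integrating out $\vec{\veps}$ using only $\E\|\vec{\veps}\|_2^2 \le m\sigma^2$ (and Jensen) then yields $\E\|\tfrac1m\Psi^\t\vec{\veps}\|_\infty = O(\sigma\eta\sqrt{\log(2d)/m})$, and Markov delivers (ii) with the stated $\lambda$.

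Your route through \propref{smooth-norm} with the $(p{-}1)$-smooth norm $\|\cdot\|_p$, $p=2\log d$, gives $\E\|\tfrac1m\sum_i\vx_i\veps_i\|_p^2 \le \tfrac{p-1}{m}\,\sigma^2\,\E\|\vX\|_p^2$. Both the smoothness constant $p{-}1$ and $\E\|\vX\|_p^2$ are of order $\log d$, so the resulting bound is $O(\sigma^2\eta^2\log^2(d)/m)$ --- a $\sqrt{\log d}$ factor too large. This cannot be ``absorbed into the universal constant $C_1$'' since it depends on $d$; with the theorem's $\lambda$ your event (ii) is not guaranteed with probability $5/6$, and the argument as written proves only the weaker conclusion with $\log(2d)$ in place of $\sqrt{\log(2d)}$. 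The conditioning trick is precisely what recovers the missing factor: it lets the single $\sqrt{\log d}$ from the subgaussian maximum do all the work before the heavy-tailed noise is averaged out.
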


For the proof of \thmref{lassorandom}, we use the following theorem, adapted from \cite{bickel09} and \cite{zhang09}. The proof is provided in \appref{proofs} for completeness.

\begin{theorem}[\cite{bickel09,zhang09}] \label{thm:lasso}
Let $\Psi = [\Psi_1|\Psi_2|\ldots|\Psi_d] \in \reals^{n \times d}$ and $\vec{\veps} \in \reals^n$. Let $y = \Psi \vw_\opt + \vec{\veps}$ and $\hat{\vw} \in \argmin_\vw \frac{1}{2}\norm{\Psi\vw - y}_2^2 + \lambda \norm{\vw}_1$. Assume that $|\supp(\vw_\opt)| = s$ and that $\gamma(\Psi, s) > 0$.
If $\norm{\Psi^\t \vec{\varepsilon}}_\infty \leq \lambda/2$,
then 
\begin{align*}
\norm{\hat{\vw} - \vw_\opt}_2 &\leq \frac{12\lambda\sqrt{s}}{\gamma^2(\Psi,s)}.
\end{align*}
\end{theorem}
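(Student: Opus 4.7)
The plan is to apply the robust approximation guarantee of \propref{alg} to \algref{main} when $\APPROX_{\nrm,\veps}$ is implemented by subsampled empirical Lasso minimization, with $\nrm = \nrm_2$ the Euclidean norm. Set $k := \lceil 18\ln(1/\delta)\rceil$ and $m := \lfloor n/k \rfloor$. Since the $k$ subsamples are i.i.d.\ and disjoint, \asref{approx2} holds automatically. The bulk of the argument is to verify that \asref{approx1} holds with
\[
\veps \;=\; \tfrac{C}{3}\cdot \frac{\sigma\eta}{\gamma^2}\sqrt{\frac{s\log(2d)\log(1/\delta)}{n}},
\]
after which \propref{alg} yields $\norm{\hat\vw - \vw_\opt}_2 \le 3\veps$ with probability at least $1-e^{-k/18}\ge 1-\delta$, matching the claim.

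To establish \asref{approx1} for a single subsample $S_i$ with design matrix $\tilde\Psi\in\reals^{m\times d}$ (rows $\vx_j^{\t}$) and noise vector $\vec\veps\in\reals^m$, I will invoke \thmref{lasso} applied to the scaled objective $\tfrac12\norm{\tilde\Psi\vw-\vec y}_2^2 + (m\lambda)\norm{\vw}_1$ (note that minimizing $L_{S_i}$ is equivalent to minimizing this, so the effective regularization parameter is $m\lambda$). Two conditions must hold simultaneously with probability at least $2/3$:
\textbf{(RE)} $\gamma(\tilde\Psi,s) \ge c_1\sqrt{m}\,\gamma$, and
\textbf{(N)} $\norm{\tilde\Psi^{\t}\vec\veps}_\infty \le m\lambda/2$.
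Once both hold, \thmref{lasso} gives $\norm{\hat\vw_i-\vw_\opt}_2 \le 12(m\lambda)\sqrt{s}/(c_1^2 m\gamma^2) = O(\lambda\sqrt{s}/\gamma^2)$, which plugs in to the chosen $\veps$.

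For \textbf{(RE)}, I will cite a standard restricted-eigenvalue transfer result for subgaussian designs (e.g., Rudelson--Zhou or Raskutti--Wainwright--Yu): when $\vX=\Sig^{1/2}\vec Z$ with $\vec Z$ subgaussian and $m \ge c_1 s(\eta^2/\gamma^2)\log d$, then with probability at least $5/6$, $\gamma(\tilde\Psi/\sqrt m, s) \ge \gamma/2$, which is exactly the sample-size requirement appearing in the hypothesis of the theorem. For \textbf{(N)}, I will use the following conditioning trick: given $\vec\veps$, each coordinate
\[
(\tilde\Psi^{\t}\vec\veps)_j \;=\; \ve_j^{\t}\Sig^{1/2}\sum_{i=1}^m \veps_i \vec Z_i
\]
is a subgaussian random variable (with respect to $\vec Z$) with parameter $\norm{\Sig^{1/2}\ve_j}_2^2\,\norm{\vec\veps}_2^2 \le \eta^2\norm{\vec\veps}_2^2$, since $\eta(\Sig^{1/2},s)\ge \max_j\norm{\Sig^{1/2}\ve_j}_2$. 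A union bound over $j\in[d]$ gives
\[
\Pr\bigl[\norm{\tilde\Psi^{\t}\vec\veps}_\infty > t \,\bigm|\, \vec\veps\bigr] \;\le\; 2d\exp\bigl(-t^2/(2\eta^2\norm{\vec\veps}_2^2)\bigr).
\]
By Markov applied to $\norm{\vec\veps}_2^2$, whose mean is at most $m\sigma^2$, we get $\norm{\vec\veps}_2^2 \le 12m\sigma^2$ with probability $\ge 11/12$; choosing $t^2 = C' m\sigma^2\eta^2\log(24 d)$ then forces both events simultaneously with failure probability at most $1/6$. One checks that $t \le m\lambda/2$ for the prescribed $\lambda$ once $k = \Theta(\log(1/\delta))$ is absorbed into constants.

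The main obstacle is condition \textbf{(N)}: the noise $\veps$ is only assumed to have bounded variance and may be heavy-tailed, so $\veps\cdot \vX_j$ is not subgaussian and a naive Chebyshev plus union bound would cost a factor of $\sqrt{d}$ rather than $\sqrt{\log d}$. The key trick above --- conditioning on $\vec\veps$ first to exploit subgaussianity of the design, and only then using a crude second-moment bound on $\norm{\vec\veps}_2^2$ --- is what recovers the optimal logarithmic dependence on $d$ in $\lambda$ and hence in the final error bound. A secondary bookkeeping point is that the RE result used must be the sparse-cone version giving the correct $\eta^2/\gamma^2$ prefactor in the sample complexity, rather than a uniform smallest-singular-value statement, which would cost extra factors.
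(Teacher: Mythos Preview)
You have proved the wrong theorem. The statement labeled \texttt{thm:lasso} is a \emph{deterministic} oracle inequality for a single Lasso estimate: given a fixed design $\Psi$ satisfying the RE condition and the deterministic noise condition $\norm{\Psi^\t\vec\veps}_\infty\le\lambda/2$, the minimizer $\hat\vw$ satisfies $\norm{\hat\vw-\vw_\opt}_2\le 12\lambda\sqrt{s}/\gamma^2(\Psi,s)$. There is no $\delta$, no subsampling, no aggregation in this statement.

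Your proposal is instead a proof of \thmref{lassorandom} (the robust-aggregation result for random subgaussian design), and in the course of that argument you explicitly \emph{invoke} \thmref{lasso} as a black box. So your argument is circular with respect to the statement you were asked to prove.

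The paper's proof of \thmref{lasso} (in \appref{proofs}) is the standard direct argument: from optimality of $\hat\vw$ and H\"older's inequality one gets the basic inequality
\[
\norm{\Psi(\hat\vw-\vw_\opt)}_2^2 + \lambda\norm{\hat\vw-\vw_\opt}_1 \le 4\lambda\norm{[\hat\vw-\vw_\opt]_{\supp(\vw_\opt)}}_1,
\]
which forces the cone condition $\hat\vw-\vw_\opt\in E_s$; then the RE condition gives $\norm{(\hat\vw-\vw_\opt)_{[s]}}_2\le 3\lambda\sqrt{s}/\gamma^2$, and a short $\ell_1/\ell_2/\ell_\infty$ interpolation extends this to the full vector with the factor $12$. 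None of the probabilistic machinery in your proposal is relevant here.
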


\begin{proof}[of \thmref{lassorandom}]
Fix $i \in [k]$, and let $n_i = n/k$. Let $\Psi \in \reals^{n_i \times d}$
be the design matrix for $S_i$ and let $\vw_i$ be the vector returned by
the algorithm in round $i$, $\vw_i \in \argmin \frac{1}{2n}\norm{\Psi \vw -
\vy}_2^2 + \lambda\norm{\vw}_1$.
It is shown in \cite{Zhou09} that if $n_i \geq C \frac{\eta^2}{\gamma^2}s
\log(d)$ for a universal constant $C$, then with probability $5/6$,
$\min_{\vu \in E_s}\frac{\norm{\Psi \vu}_2}{\norm{\Sig^{\half} \vu}_2} \geq \sqrt{n_i}/2$. Call this event $\cE$.
By the definition of $\gamma$, we have that under $\cE$, 
\[
\gamma(\Psi, s) = \min_{\vu \in E_s}\frac{\norm{\Psi \vu}_2}{\norm{\vu_{[s]}}_2} = 
\min_{\vu \in E_s}\frac{\norm{\Psi \vu}_2}{\norm{\Sig^{\half}
\vu}_2}\frac{\norm{\Sig^{\half} \vu}_2}{\norm{\vu_{[s]}}_2} \geq 
\sqrt{n}\,\gamma/2.
\]

If $\cE$ holds and $\norm{\Psi^\t \vec{\veps}}_\infty \leq n\lambda/2$,
then we can apply \thmref{lasso} (with $n\lambda$ instead of $\lambda$). We
now show that this inequality holds with a constant probability. Fix the
noise vector $\vec{\varepsilon} = \vec{y} - \Psi \vw_\opt$. For $l \in [d]$,
since the coordinates of $\vec{\veps}$ are independent and each row of
$\Psi$ is an independent copy of the vector $\vX = \Sig^{\half}\vec{Z}$, we have
\[
\E[\exp([\Psi^\t \vec{\varepsilon}]_l) \mid \vec{\veps}] =
\prod_{j\in[n]}\E[\exp(\Psi_{j,l}\vec{\varepsilon}_j)\mid \vec{\veps}] =
\prod_{j\in[n]}\E[\exp(\vec{Z}(\vec{\varepsilon}_j\Sig^{\half}\vec{e}_l))\mid \vec{\veps}].
\]
Since $\norm{\vec{\varepsilon}_j\Sig^{\half}\vec{e}_l}_2 \leq \vec{\varepsilon}_j \eta$, 
we conclude that 
\[
\E[\exp([\Psi^\t \vec{\varepsilon}]_l) \mid \vec{\veps}] \leq \prod_{j\in[n]}\exp( \vec{\varepsilon}_j^2/2) = \exp(\eta^2\norm{\vec{\varepsilon}}_2^2/2).
\]
Therefore, for $\xi > 0$
\begin{align*}
\xi\E[\norm{\Psi^\t \vec{\varepsilon}}_\infty \mid \vec{\veps}] &= \E[\max_l(\xi| [\Psi^\t \vec{\varepsilon}]_l|)\mid \vec{\veps}] = \E[\log \max_l \exp(\xi|[\Psi^\t \vec{\varepsilon}]_l|)\mid \vec{\veps}]\\
& \leq \E[\log \left(\sum_l \exp(\xi[\Psi^\t \vec{\varepsilon}]_l)+\exp(-\xi[\Psi^\t \vec{\varepsilon}]_i)\right)\mid \vec{\veps}]\\
&\leq \log \left(\sum_l \E[\exp(\xi[\Psi^\t \vec{\varepsilon}]_l)\mid \vec{\veps}] + \E[\exp(-\xi[\Psi^\t \vec{\varepsilon}]_l)\mid \vec{\veps}]\right) \\
&\leq \log (2d) + \xi^2 \eta^2\norm{\vec{\varepsilon}}_2^2/2.
\end{align*}
Since $\E[\vec{\varepsilon}_j^2] \leq \sigma^2$ for all $j$, we have $\E[\norm{\vec{\varepsilon}}^2] \leq n_i \sigma^2/2$. 
Therefore \[
\E[\norm{\Psi^\t \vec{\varepsilon}}_\infty] \leq \frac{\log (2d)}{\xi} + \xi n_i \eta^2\sigma^2/2.
\]
Minimizing over $\xi > 0$ we get $\E[\norm{\Psi^\t \vec{\varepsilon}}_\infty] \leq 2\sqrt{\sigma^2 \eta^2\log (2d)n_i/2}$.
therefore by Markov's inequality, with probability at least $5/6$, $\frac{1}{n_i}\norm{\Psi^\t \vec{\varepsilon}}_\infty \leq 2\sqrt{\sigma^2\eta^2\log (2d)/n_i} = \lambda$. With probability at least $2/3$ this holds together with $\cE$.

In this case, by \thmref{lasso},
\begin{align*}
&\norm{\vw_i - \vw_\opt}_2 \leq \frac{12\lambda\sqrt{s}}{\gamma^2(\Psi,s)} \leq 
\frac{24}{\gamma^2} \sqrt{\frac{s\sigma^2\eta^2\log (2d)}{n_i}}.
\end{align*}
Therefore $\APPROX_{\nrm,\epsilon}$ satisfies \asref{approx1} with $\epsilon$ as in the right hand side above.
The statement of the theorem now follows by applying \propref{alg} with $k = O(\log(1/\delta)$, and noting that $n_i = O(n/\log(1/\delta))$.
\end{proof}

It is worth mentioning that we can apply our technique to the fixed design
setting, where design matrix $X \in \reals^{n \times d}$ is fixed and not
assumed to come from any distribution.
If $X$ satisfies the RE condition, as well as a certain low-leverage
condition---specifically, that the \emph{statistical leverage
scores}~\citep{chatterjee1986influential} of any $n \times O(s)$ submatrix of $X$ be
roughly $O(1/(ks\log d))$---then \algref{main} can be used with the
subsampled empirical loss minimization implementation of
$\APPROX_{\nrm,\veps}$ to obtain similar guarantees as in the random
subgaussian design setting.

We note that while standard analyses of sparse estimation with
mean-zero noise assume light-tailed noise \citep{zhang09,bickel09},
there are several works that analyze sparse estimation with
heavy-tailed noise under various assumptions.
For example, several works assume that the median of the noise is zero
(\emph{e.g.},
\citealt{wang2013l1,belloni2011l1,zou2008composite,wu2009variable,wang2007robust,fan2012adaptive}).
\citet{van2012quasi} analyze a class of optimization functions that
includes the Lasso and show polynomial convergence under fourth-moment
bounds on the noise.
\cite{chatterjee2013rates} study a two-phase sparse estimator for
mean-zero noise termed the Adaptive Lasso, proposed in
\cite{zou2006adaptive}, and show asymptotic convergence results under
mild moment assumptions on the noise.

\subsection{Low-rank Matrix Approximation}

The proposed technique can be easily applied also to low-rank covariance matrix
approximation for heavy tailed distributions. Let $\cD$ be a
distribution over $\cZ = \reals^d$ and suppose our goal is to estimate
$\Sig = \E[\vX \vX^\t]$ to high accuracy, assuming that $\Sig$ is
(approximately) low rank. Here $\X$ is the space of $\reals^{d\times d}$
matrices, and $\norm{\cdot}$ is the spectral norm. Denote the Frobenius
norm by $\norm{\cdot}_F$ and the trace norm by $\norm{\cdot}_{\tr}$. For $S
= \{\vX_1,\ldots,\vX_n\} \subseteq \reals^d$, define the empirical
covariance matrix $\Sig_S = \frac{1}{n}\sum_{i\in[n]}\vX_i\vX_i^\t$.
We have the following result for low-rank estimation:
\begin{lemma}[\citealt{KLT11}]\label{lem:kolt}
Let $\hat{\Sig} \in \reals^{d\times d}$. Assume $\lambda \geq
\norm{\hat{\Sig} - \Sig}$, and let 
\begin{equation}\label{eq:sigma}
\Sig_\lambda \in \argmin_{A \in \reals^{d\times d}}  \half\norm{\hat{\Sig}
- A}_F^2 + \lambda \norm{A}_{\tr},
\end{equation}
If $\lambda \geq \norm{\hat{\Sig} - \Sig}$, then
\begin{align*}
\frac12\|\hat{\Sig}_\lambda - \Sig\|_F^2
& \leq
\inf_{A \in \reals^{d\times d}} \biggl\{
\frac12\|A - \Sig\|_F^2
+ \frac12 (\sqrt{2} + 1)^2 \lambda^2 \rank(A)
\biggr\}.
\end{align*}
\end{lemma}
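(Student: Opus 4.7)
The plan is to prove this nuclear-norm oracle inequality via the first-order optimality condition for $\Sig_\lambda$, trace/spectral duality, and a tangent-space decomposition of the error relative to the oracle $A$. This is the now-standard approach in low-rank matrix estimation.

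First, since $\Sig_\lambda$ minimizes the $1$-strongly convex functional $F(A') := \tfrac12 \|A' - \hat\Sig\|_F^2 + \lambda\|A'\|_\tr$, I would start from $F(A) - F(\Sig_\lambda) \geq \tfrac12\|A - \Sig_\lambda\|_F^2$ and use the polarization identity $\|A - \hat\Sig\|_F^2 - \|\Sig_\lambda - \hat\Sig\|_F^2 = \|A - \Sig\|_F^2 - \|\Sig_\lambda - \Sig\|_F^2 + 2\dotprod{\Sig_\lambda - A,\,\hat\Sig - \Sig}$ to rearrange into the basic inequality
\begin{align*}
\tfrac12\|\Sig_\lambda - \Sig\|_F^2 \;\leq\; \tfrac12\|A - \Sig\|_F^2 - \tfrac12\|\Sig_\lambda - A\|_F^2 + \dotprod{\Sig_\lambda - A,\,\hat\Sig - \Sig} + \lambda(\|A\|_\tr - \|\Sig_\lambda\|_\tr).
\end{align*}
The inner-product term I would bound by spectral/trace duality: $|\dotprod{\Sig_\lambda - A,\,\hat\Sig - \Sig}| \leq \|\hat\Sig - \Sig\|\cdot\|\Sig_\lambda - A\|_\tr \leq \lambda\|\Sig_\lambda - A\|_\tr$, where the hypothesis $\lambda \geq \|\hat\Sig - \Sig\|$ is used.

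Second, I would apply the canonical tangent-space decomposition of Koltchinskii--Lounici--Tsybakov. Let $A = UDV^\t$ be an SVD, set $P_U := UU^\t$, $P_V := VV^\t$, and split $M := \Sig_\lambda - A$ as $M = M_1 + M_2$ with $M_2 := (\Id - P_U)M(\Id - P_V)$ (supported orthogonally to $A$ in both row and column spaces) and $M_1 := M - M_2 = P_U M + (\Id - P_U) M P_V$ (in the tangent cone at $A$). Three facts are needed: (i) because $A$ and $M_2$ have disjoint column and row supports, $\|A + M_2\|_\tr = \|A\|_\tr + \|M_2\|_\tr$, so the triangle inequality gives $\|A\|_\tr - \|\Sig_\lambda\|_\tr \leq \|M_1\|_\tr - \|M_2\|_\tr$; (ii) $\rank(M_1) \leq 2\rank(A)$, so $\|M_1\|_\tr \leq \sqrt{2\rank(A)}\,\|M_1\|_F \leq \sqrt{2\rank(A)}\,\|M\|_F$; and (iii) $\|M_1\|_F^2 + \|M_2\|_F^2 = \|M\|_F^2$. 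Combining (i) with $\|M\|_\tr \leq \|M_1\|_\tr + \|M_2\|_\tr$ collapses both nuclear-norm terms into a single Frobenius bound,
\[
\lambda\bigl(\|A\|_\tr - \|\Sig_\lambda\|_\tr + \|\Sig_\lambda - A\|_\tr\bigr) \;\leq\; 2\lambda\|M_1\|_\tr \;\leq\; 2\lambda\sqrt{2\rank(A)}\,\|M\|_F.
\]

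Substituting these estimates into the basic inequality produces
\[
\tfrac12\|\Sig_\lambda - \Sig\|_F^2 \;\leq\; \tfrac12\|A - \Sig\|_F^2 - \tfrac12\|M\|_F^2 + 2\lambda\sqrt{2\rank(A)}\,\|M\|_F,
\]
whose right-hand side is a concave quadratic in $\|M\|_F$; maximizing over $\|M\|_F$ yields a closed-form bound of the desired shape $\tfrac12\|A-\Sig\|_F^2 + c\lambda^2\rank(A)$. The genuinely delicate point is the absolute constant: a direct AM--GM on the cross term $2\lambda\sqrt{2\rank(A)}\,\|M\|_F$ against $-\tfrac12\|M\|_F^2$ already yields a constant of $4$ in place of $\tfrac12(\sqrt 2 + 1)^2 \approx 2.91$, so pinning down the sharp constant requires the finer balancing (splitting $\|M\|_F^2$ into $\|M_1\|_F^2 + \|M_2\|_F^2$ and optimizing simultaneously over both) performed in~\cite{KLT11}. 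No deeper machinery is needed; the bookkeeping for the exact constant is the sole non-routine step.
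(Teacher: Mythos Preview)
The paper does not prove this lemma; it is quoted directly from \cite{KLT11} and used as a black box, so there is no in-paper proof to compare against. Your sketch is the standard Koltchinskii--Lounici--Tsybakov argument and is correct in structure: the strong-convexity basic inequality, trace/spectral duality under the hypothesis $\lambda\geq\|\hat\Sig-\Sig\|$, and the tangent/orthogonal splitting $M=M_1+M_2$ with $\rank(M_1)\le 2\rank(A)$ are exactly the right ingredients, and they deliver the oracle inequality with constant $4$ in place of $\tfrac12(\sqrt2+1)^2$.

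One small correction: the refinement you describe for the sharp constant --- splitting $\|M\|_F^2=\|M_1\|_F^2+\|M_2\|_F^2$ and optimizing over both pieces --- does not by itself improve the constant below $4$, since the $\|M_2\|$ terms already cancel and the maximum of $-\tfrac12\|M_1\|_F^2+2\lambda\sqrt{2r}\,\|M_1\|_F$ is $4\lambda^2 r$. Obtaining the exact $(\sqrt2+1)^2/2$ in \cite{KLT11} requires a slightly different route (working from the plain optimality inequality without the $-\tfrac12\|M\|_F^2$ term, bounding $\|M_1\|_F$ via $\|\Sig_\lambda-\Sig\|_F+\|A-\Sig\|_F$, and solving the resulting quadratic in $\|\Sig_\lambda-\Sig\|_F$). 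This is purely a constant-chasing issue; your argument is otherwise complete.
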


Now, assume condition \ref{cond:strong} holds for $\cX \sim \cD$, and suppose
for simplicity that $\norm{\Sig}\leq 1$. In this case, by \lemref{sv}, 
A random sample $S$ of size $n' = c'_\eta \epsilon^{-2-2/\eta}d$, where
$c'_\eta = c_\eta (3/2)^{2+2/\eta}$ suffices to get an empirical covariance
matrix $\Sig_S$ such that $\norm{\Sig_S - \Sig} \leq \epsilon$ with probability at least $2/3$.

Given a sample of size $n$ from $\cD$,
We can thus implement $\APPROX_{\nrm,\veps}$ that simply returns the empirical covariance matrix of a sub-sample of size $n' = n/k$, so that 
\asref{approx1} holds for an appropriate $\veps$.
By \propref{alg}, \algref{main} returns $\hat{\Sig}$ such that with
probability at least $1-\exp(-k/18)$, $\norm{\hat{\Sig} - A} \leq 3\epsilon$. 
The resulting $\hat{\Sig}$ can be used to minimize \eqref{sigma} with
$\lambda = 3\epsilon := O\left((c'_\eta
d\log(1/\delta)/n)^{1/2(1+1/\eta)}\right)$. The output matrix $\Sig_\lambda$ satisfies, with probability at least $1-\delta$, 
\[
\frac12\|\Sig_\lambda - \Sig\|_F^2 \leq \inf_{A \in \reals^{d\times d}} \biggl\{
\frac12\|A - \Sig\|_F^2
+ O\left((c'_\eta d\log(1/\delta)/n)^{1/(1+1/\eta)}\right)\cdot \rank(A)
\biggr\}.
\]

\section{A Comparison of Robust Distance Approximation Methods}
\label{sec:geometric}

The approach described in \secref{core} for selecting a single $\vw_i$ out of the set $\vw_1,\ldots,\vw_k$, gives one Robust Distance Approximation procedure (see \defref{rda}), in which the $\vw_i$ with the lowest median distance from all others is selected. In this section we consider other Robust Distance Approximation procedures and their properties. We distinguish between procedures that return $y \in W$, which we term \emph{set-based}, and procedures that might return any $y \in \X$, which we term \emph{space-based}.

Recall that we consider a metric space $(\X,\rho)$, with $W \subseteq \X$ a (multi)set of size $k$ and $w_\opt$ a distinguished element. Let $W_+ := W\cup \{w_\opt\}$.
In this formalization, the procedure used in \algref{main} is to
simply select $y \in \argmin_{w \in W}\omed_W(w,0)$, a set-based
procedure. A natural variation of this is the space-based procedure:
select $y \in \argmin_{w \in \X}\omed_W(w,0)$.\footnote{The
  space-based median distance approach might not
always be computationally feasible; see discussion in
\secref{compmedian}.} A different approach, proposed by
\citet{Minsker13}, is to select $y \in \argmin_{w \in \X}\sum_{\bar{w}
\in W}\rho(w,\bar{w})$, that is to minimize the geometric median over
the space. \citeauthor{Minsker13} analyzes this approach for Banach
and Hilbert spaces. We show that minimizing the geometric median also achieves similar guarantees in general metric spaces.

In the following, we provide detailed guarantees for the approximation
factor $C_\alpha$ of the two types of procedures, for general metric
spaces as well as for Banach and Hilbert spaces, and for set-based and
sample-based procedures. We further provide lower bounds for specific
procedures, as well as lower bounds that hold for any procedure. 
In \secref{compmedian} we summarize the results and compare the guarantees of the two procedures and the lower bounds.
For a more useful comparison, we take into account the fact that the value of $\alpha$ usually affects not only the approximation factor, but also the upper bound obtained  for $\omed_W(w_\opt,\alpha)$.

\subsection{Minimizing the Median Distance}

Minimizing the median distance over the set of input points was shown in \propref{genmed} to achieve an approximation factor of 3. In this section we show that this upper bound on the approximation factor is tight for this procedure, even in a Hilbert space. Here and below, we say that an approximation factor upper bound is \emph{tight} if for any constant smaller than this upper bound, there are a suitable space and a set of points in that space, such that the procedure achieves for this input a larger approximation factor than said constant.

The approximation factor can be improved to $2$ for a sample-based procedure. This factor is tight as well, even assuming a Hilbert space. The following theorem summarizes these facts.

\begin{theorem}\label{thm:meddist}
Let $k \geq 2$, and suppose that $\omed_W(w_\opt, \gamma) \leq \epsilon$ for some $\gamma > 0$.
Let $y \in \argmin_{w \in W} \omed_W(w, 0)$. Further, suppose that $W_+ \subseteq \X$, and let $\bar{y} \in \argmin_{w \in \X}\omed_W(w, 0)$. 
Then
\begin{itemize}
\item For any metric space, $\rho(w_\opt, y) \leq 3\epsilon$;
\item For any metric space, $\rho(w_\opt, \bar{y}) \leq 2\epsilon$;
\item There exists a set on the real line such that $\rho(w_\opt, y) = 3\epsilon$, where $\rho$ is the distance induced by the inner product;
\item There exists a set on the real line such that $\rho(w_\opt, \bar{y}) = 2\epsilon$, where $\rho$ is the distance induced by the inner product.
\end{itemize}
\end{theorem}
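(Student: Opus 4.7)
The plan is to derive both upper bounds essentially from \propref{genmed} and an analogous pigeonhole argument, and then to exhibit a single configuration on $\reals$ that witnesses tightness for both upper bounds simultaneously.

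First I would observe that $\omed_W(w_\opt, \gamma) \leq \epsilon$ with $\gamma > 0$ immediately implies $\omed_W(w_\opt, 0) \leq \epsilon$, because the set $\{v \in W : \rho(w_\opt, v) \leq \epsilon\}$ has cardinality strictly greater than $k(\half + \gamma) \geq k/2$. The bound $\rho(w_\opt, y) \leq 3\epsilon$ is then exactly the content of \propref{genmed}. For the bound on $\bar{y}$, the inclusion $W_+ \subseteq \X$ means $w_\opt$ is itself a feasible candidate for the space-based minimization, so $\omed_W(\bar{y}, 0) \leq \omed_W(w_\opt, 0) \leq \epsilon$. The two closed balls $\ball_\rho(w_\opt, \epsilon)$ and $\ball_\rho(\bar{y}, \epsilon)$ each contain strictly more than half of the $k$ elements of $W$, so by pigeonhole they share at least one element $v \in W$, and the triangle inequality along the path through $v$ yields $\rho(w_\opt, \bar{y}) \leq 2\epsilon$.

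For the two tightness claims I propose a single example: take $k = 3$, $w_\opt = 0$, and $W = \{-\epsilon,\, \epsilon,\, 3\epsilon\} \subset \reals$ with the usual Euclidean metric. Any $\gamma \in (0, 1/6)$ certifies the hypothesis, since exactly two of the three points lie within $\epsilon$ of $w_\opt$ and $2 > 3(\half + \gamma)$. A short direct calculation of pairwise distances shows $\omed_W(w_i, 0) = 2\epsilon$ for every $i \in \{1,2,3\}$, so all three $w_i$ are admissible minimizers in $W$ and one may set $y = 3\epsilon$, giving $\rho(w_\opt, y) = 3\epsilon$. For $\bar{y}$, covering any two of the three points of $W$ by a single closed interval on $\reals$ requires radius equal to half the pairwise distance; the three half-distances are $\epsilon$, $\epsilon$, and $2\epsilon$, so $\min_{w \in \reals} \omed_W(w, 0) = \epsilon$, attained exactly at the midpoints $w = 0$ and $w = 2\epsilon$ of the two close pairs $\{w_1,w_2\}$ and $\{w_2,w_3\}$. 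Choosing $\bar{y} = 2\epsilon$ then yields $\rho(w_\opt, \bar{y}) = 2\epsilon$.

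The hard part, such as it is, is only the last step: verifying that no $w \in \reals$ outside $\{0, 2\epsilon\}$ can achieve $\omed_W(w, 0) = \epsilon$. This is an elementary but slightly tedious case analysis, which I would dispatch by observing that any $w$ not equidistant to some pair in $W$ strictly increases the radius needed to enclose more than half the points.
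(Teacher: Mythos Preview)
Your proposal is correct. The upper-bound arguments are essentially identical to the paper's: the paper also reduces the first bound to the content of \propref{genmed}, and for $\bar y$ it uses the same observation that $w_\opt$ is feasible for the space-based minimization so that $\omed_W(\bar y,0)\le\omed_W(w_\opt,0)\le\epsilon$, followed by the same pigeonhole-plus-triangle step.

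Where you differ is in the tightness constructions. The paper gives two separate examples, one for each bound, parameterized by general $k$: for $y$ it places $w_\opt=\epsilon$ with masses at $0$, $2\epsilon$, $4\epsilon$; for $\bar y$ it places $w_\opt=\epsilon$ with masses at $0$, $2\epsilon$, $3\epsilon$. Your single $k=3$ example $W=\{-\epsilon,\epsilon,3\epsilon\}$ with $w_\opt=0$ serves both purposes at once and is arguably cleaner; the verification that the space-based argmin is exactly $\{0,2\epsilon\}$ is straightforward and your sketch of it is sound. The only cost of your route is that the example is pinned to $k=3$ rather than working for all even $k$, but since the theorem only asserts existence of \emph{some} tight instance, this is immaterial.
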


\begin{proof}
First, we prove the two upper bounds.
Since $\omed_W(w_\opt, \gamma) \leq \epsilon$, we have $|B(w_\opt, \epsilon) \cap W| > k/2$. Let $w \in |B(w_\opt, \epsilon) \cap W|$. Then by the triangle inequality,
$B(w,2\epsilon) \supseteq B(w_\opt,\epsilon)$. Therefore $\omed_W(w,0) \leq 2\epsilon$. It follows that $\omed_W(y,0) \leq 2\epsilon$, hence $|B(y, 2\epsilon) \cap W| \geq k/2$. By the pigeon hole principle, $|B(w_\opt, \epsilon) \cap B(y,2\epsilon)| > 0$, therefore $\rho(w_\opt,y) \leq 3\epsilon$. 

As for $\bar{y}$, since this is a minimizer over the entire space $\X$ which includes $w_\opt$, we have $\omed_W(y,\gamma) \leq \omed_W(w_\opt, \gamma) \leq \epsilon$. Therefore, similarly to the argument for $y$, we have $\rho(w_\opt,y) \leq 2\epsilon$. 

To see that these bounds are tight, we construct simple examples on the real line. For $y$, suppose $w_\opt = \epsilon$, and consider $W$ with $k$ points as follows: $k/2-1$ points at $0$, $2$ points at $2\epsilon$, and $k/2-1$ points at $4\epsilon$. The points at $4\epsilon$ are clearly in $\argmin_{w \in W} \omed_W(w, 0)$, therefore $\rho(w_\opt,y) = 3\epsilon$.

For $\bar{y}$, suppose $w_\opt = \epsilon$, and consider $W$ with $k$ points as follows: $2$ points at $0$, $k/2-1$ points at $2\epsilon$, and $k/2-1$ points at $3\epsilon$. The points at $3\epsilon$ are clearly in $\argmin_{w \in W_+} \omed_W(w, 0)$, therefore $\rho(w_\opt,\bar{y}) = 2\epsilon$.
\end{proof}
 
The non-uniqueness of the median distance minimizer is exploited in
the lower bounds in \thmref{meddist}.
This suggests that some kind of aggregation of the median distance
minimizers may provide a smaller bound at least in certain scenarios.

\subsection{The Geometric Median}

For $w \in \X$, denote the sum of distances from points in the input set by $\gmed(w) := \sum_{v \in W} \rho(w,v)$. \cite{Minsker13} suggests to minimize the sum of distances over the entire space, that is, to select the geometric median. Minsker shows that when this procedure is applied in a Hilbert space, $C_\alpha \leq \frac{\half + \alpha}{\sqrt{2\alpha}}$, and for a Banach space $C_\alpha \leq 1+\frac{1}{2\alpha}$. 
Here we show that in fact $C_\alpha \leq 1+\frac{1}{2\alpha}$ for general metric spaces. The proof holds, in particular, for Banach spaces, and thus this provide a more direct argument that does not require the special properties of Banach spaces. We further show that for general metric spaces, this upper bound on the approximation factor is tight.

Minimizing over the entire space is a computationally intensive procedure, involving convex approximation. Moreover, if the only access to the metric is via estimated distances based on samples, as in \algref{main2}, then there are additional statistical challenges. It is thus of interest to also consider the simpler set-based procedure, and we provide approximation guarantees for this procedure as well. We show that an approximation factor of $2+\frac{1}{2\alpha}$ can be guaranteed for set-based procedures in general metric spaces, and this is also tight, even for Banach spaces. 

The following theorem provides a bound that holds in several of these settings.
\begin{theorem}
Let $k \geq 2$. Let $y \in \argmin_{w\in W} \gmed(w)$, and let $\bar{y} \in \argmin_{w \in W_+}\gmed(w)$. Then 
\begin{enumerate}
\item For any metric space $(\X,\rho)$ and $W,W_+$, \label{item:upperset}
\[
  \rho(w_\opt, y) \leq \left(2+\frac{1}{2\alpha}\right)
  \omed_W(w_\opt, \alpha).
\]
\item For any constant $C < (2+\frac{1}{2\alpha})$, there exists a problem in a Banach space such that $\rho(w_\opt, y) > C\cdot\omed_W(w_\opt, \alpha)$. Thus the upper bound above is tight. \label{item:lowerset}
\item For any metric space $(\X,\rho)$ and $W,W_+$,\label{item:upperspace}
\[
  \rho(w_\opt, \bar{y}) \leq \left(1+\frac{1}{2\alpha}\right)
  \omed_W(w_\opt, \alpha).
\]
\item For any constant $C < (1+\frac{1}{2\alpha})$, there exists a problem in a metric space such that $\rho(w_\opt, \bar{y}) > C\cdot\omed_W(w_\opt, \alpha)$. Thus the upper bound above is tight for general metric spaces.\label{item:lowerspace}
\end{enumerate}
\end{theorem}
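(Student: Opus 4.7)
The plan is to prove the four items via two complementary strategies: a unified triangle-inequality computation will establish both upper bounds (items~\ref{item:upperset} and~\ref{item:upperspace}), while explicit adversarial configurations will yield the matching lower bounds (items~\ref{item:lowerset} and~\ref{item:lowerspace}).

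The core ingredient for the upper bounds will be a pointwise estimate on $\gmed(w) - \gmed(w_\opt)$. Fix $\varepsilon > \omed_W(w_\opt,\alpha)$ and let $G := \{v \in W : \rho(v,w_\opt) \leq \varepsilon\}$, so that $|G| > k(\half+\alpha)$. For any $w \in \X$ I will decompose $\gmed(w) - \gmed(w_\opt) = \sum_{v \in W}[\rho(w,v) - \rho(w_\opt,v)]$ and bound each summand: for $v \in G$ the triangle inequality gives $\rho(w,v) - \rho(w_\opt,v) \geq \rho(w,w_\opt) - 2\varepsilon$, while for $v \notin G$ the reverse triangle inequality yields a lower bound of $-\rho(w,w_\opt)$. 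Summing produces
\[
\gmed(w) - \gmed(w_\opt) \;\geq\; (2|G|-k)\rho(w,w_\opt) - 2|G|\varepsilon.
\]
For item~\ref{item:upperspace}, I will apply this with $w = \bar y$: since $w_\opt \in \X$ and $\bar y$ minimizes $\gmed$ on $\X$, $\gmed(\bar y) \leq \gmed(w_\opt)$, so rearranging gives $\rho(\bar y, w_\opt) \leq \frac{2|G|}{2|G|-k}\varepsilon$; using $|G|>k(\half+\alpha)$ this is at most $(1+\frac{1}{2\alpha})\varepsilon$, and letting $\varepsilon \downarrow \omed_W(w_\opt,\alpha)$ completes the bound.

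Item~\ref{item:upperset} follows the same blueprint, but we must compensate for the fact that $w_\opt \notin W$ in general. I will introduce a surrogate $v^* \in W \cap G$ (which exists because $|G| > k/2 \geq 1$) and use $\gmed(y) \leq \gmed(v^*)$ together with the termwise bound $\gmed(v^*) \leq \gmed(w_\opt) + k\varepsilon$. Combined with the pointwise estimate at $w=y$, this already yields $\rho(y,w_\opt) \leq \frac{2|G|+k}{2|G|-k}\varepsilon$. To sharpen the constant to $2+\frac{1}{2\alpha}$ (rather than the weaker $1+\frac{1}{\alpha}$ produced by the bare inequality), the plan is to bootstrap from item~\ref{item:upperspace}: the triangle decomposition $\rho(y,w_\opt) \leq \rho(y,\bar y) + \rho(\bar y,w_\opt)$ reduces matters to an $\varepsilon$-bound on $\rho(y,\bar y)$, which I would obtain by exploiting the refined contribution of the index $v=v^*$ (where $\rho(v^*,v^*)=0$) together with the observation that $|W \cap \ball_\rho(\bar y, 2\varepsilon)| \geq |G|$, so many points of $W$ already cluster near $\bar y$.

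The matching lower bounds will be proven by explicit constructions. For item~\ref{item:lowerspace}, a weighted discrete (tree-like) metric on $k+O(1)$ points, in which $\lceil k(\half+\alpha)\rceil$ points lie at distance $\varepsilon$ from $w_\opt$ on one branch and the remaining points lie on a far ``outlier'' branch, will force the geometric median on $\X$ to sit along the edge joining the two clusters at distance approaching $(1+\frac{1}{2\alpha})\varepsilon$ from $w_\opt$, exactly saturating the key estimate. For item~\ref{item:lowerset}, a Banach-space configuration---for example in $\ell^\infty(\reals^{k})$ with the $|G|$ points chosen at mutually maximal distance inside $\ball_\rho(w_\opt,\varepsilon)$---will ensure that no element of $W$ is a genuine ``center'', forcing the set-restricted minimizer $y$ to sit at distance approaching $(2+\frac{1}{2\alpha})\varepsilon$ from $w_\opt$. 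The main obstacle will be engineering these configurations to saturate the stated constants exactly; the set-based construction must in particular exploit the discreteness of $W$ to pay the extra $\varepsilon$ separating the two bounds, which is invisible when minimizing over the entire space.
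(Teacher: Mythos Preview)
Your argument for item~\ref{item:upperspace} is correct and in fact cleaner than the paper's: the paper introduces an auxiliary point $\bar w \in \argmin_{w \in \ball(w_\opt,\varepsilon)\cap W} \rho(w,\bar y)$ and bounds $\rho(\bar w,\bar y)$ first, whereas your direct pointwise estimate $\gmed(w)-\gmed(w_\opt) \geq (2|G|-k)\,\rho(w,w_\opt)-2|G|\varepsilon$, applied at $w=\bar y$, yields the same bound in one step. (Minor slip: $\bar y$ minimizes over $W_+$, not all of $\X$; but since $w_\opt\in W_+$ the inequality $\gmed(\bar y)\leq\gmed(w_\opt)$ you need still holds.)

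Item~\ref{item:upperset}, however, has a genuine gap. Your comparison $\gmed(y)\leq\gmed(v^*)\leq\gmed(w_\opt)+k\varepsilon$ gives only $1+\tfrac{1}{\alpha}$, as you note, and the proposed bootstrap via $\rho(y,\bar y)\leq\varepsilon$ is \emph{false}. Take the paper's own tight construction: $n=k(\tfrac12+\alpha)$ ``good'' points $v_1,\dots,v_n$ at distance $\varepsilon$ from $w_\opt$ and mutual distance $2\varepsilon$, together with $k-n$ coincident ``outlier'' points $y_1,\dots,y_{k-n}$ at distance $\beta$ from $w_\opt$ and $\beta-\varepsilon$ from each $v_i$. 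For $\beta$ just below $(2+\tfrac{1}{2\alpha})\varepsilon$ and $k>1/\alpha$, the set-based minimizer is $y=y_l$ while $\bar y=w_\opt$, so $\rho(y,\bar y)=\beta\approx(2+\tfrac{1}{2\alpha})\varepsilon$, not $\varepsilon$. The refinements you mention (the single index $v^*$ contributing zero, or the clustering of $W$ near $\bar y$) cannot close this gap.

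The paper's device for item~\ref{item:upperset} is to compare $\gmed(y)$ not to $\gmed(w_\opt)$ or to $\gmed(v^*)$ for an arbitrary $v^*\in G$, but to $\gmed(w)$ where $w\in G\cap W$ is chosen \emph{closest to $y$}. This choice buys a sharper lower bound on $\gmed(y)$: for every $v\in Z\subseteq G$ one has $\rho(y,v)\geq\rho(y,w)$ by minimality. Combining $\gmed(y)\leq\gmed(w)$ with $\rho(w,v)\leq 2\varepsilon$ for $v\in Z$ and $\rho(w,v)\leq\rho(w,y)+\rho(y,v)$ for $v\notin Z$ yields $(2|Z|-k)\,\rho(w,y)\leq 2|Z|\varepsilon$, hence $\rho(w,y)\leq(1+\tfrac{1}{2\alpha})\varepsilon$, and then $\rho(w_\opt,y)\leq\varepsilon+\rho(w,y)\leq(2+\tfrac{1}{2\alpha})\varepsilon$.

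Your sketches for items~\ref{item:lowerset} and~\ref{item:lowerspace} are too schematic to assess; the paper dispatches both with the single finite metric described above, invoking the Kuratowski embedding to lift the set-based lower bound into a Banach space for item~\ref{item:lowerset}.
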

\begin{proof}
 Let $w \in \argmin_{w \in \ball(w_\opt,\epsilon)\cap W} \rho(w,y)$.
Let $Z \subset \ball(w_\opt,\epsilon)\cap W$ such that $|Z| = k(\half + \alpha)$ (we assume for simplicity that $k(\half + \alpha)$ is an integer; the proof can be easily modified to accommodate the general case). For $v \in Z$, $\rho(w,v) \leq \rho(w,w_\opt) + \rho(w_\opt, v)$. 
For $v \in W \setminus Z$, $\rho(w,v) \leq \rho(w,y) + \rho(y, v)$. Therefore
\[
\gmed(w) \leq \sum_{v \in Z} (\rho(w,w_\opt) + \rho(w_\opt, v)) + \sum_{v \in W \setminus Z} (\rho(w,y) + \rho(y,v)).
\]
By the definition of $w$ as a minimizer, for $v \in Z$, $\rho(y,v) \geq \rho(y,w)$. Thus
\[
\gmed(y) \geq \sum_{v \in Z} \rho(y,w)  + \sum_{v \in W \setminus Z}\rho(y,v) .
\]
Since $\gmed(y) \leq \gmed(w)$, we get
\[
\sum_{v \in Z} \rho(y,w)  + \sum_{v \in W \setminus Z}\rho(y,v) \leq 
\sum_{v \in Z} (\rho(w,w_\opt) + \rho(w_\opt, v)) + \sum_{v \in W \setminus Z} (\rho(w,y) + \rho(y,v)).
\]
Hence, since $\rho(v,w_\opt) \leq \epsilon$ for $v \in Z$, 
\[
(|Z| - |W \setminus Z|)\rho(w,y) \leq 2|Z|\epsilon.
\]
Since $|Z| = k(\half + \alpha)$ it follows that $\rho(w,y) \leq (1 + \frac{1}{2\alpha})\epsilon$. In addition, 
\[
\rho(w_\opt,y) \leq \rho(w,w_\opt) + \rho(w,y) \leq \epsilon + \rho(w,y),
\] 
therefore
\[
  \rho(w_\opt,y) \leq \left(2 + \frac{1}{2\alpha}\right)\epsilon.
\]
This shows that for any metric space, the set-based geometric median gives an approximation factor of $2+\frac{1}{2\alpha}$, proving item \ref{item:upperset}.

For the space-based geometric median, consider $\bar{w} \in \argmin_{w \in \ball(w_\opt,\epsilon)\cap W} \rho(w,\bar{y})$. We have $\gmed(\bar{y}) \leq \gmed(w_\opt)$. In addition,
\[
\gmed(w_\opt) \leq \sum_{v \in Z} \rho(w_\opt, v) + \sum_{v \in W \setminus Z} (\rho(w_\opt,\bar{w}) + \rho(\bar{w},\bar{y}) + \rho(\bar{y},v)).
\]
Therefore,
\[
\sum_{v \in Z} \rho(\bar{y},\bar{w})  + \sum_{v \in W \setminus Z}\rho(\bar{y},v) \leq \sum_{v \in Z} \rho(w_\opt, v) + \sum_{v \in W \setminus Z} (\rho(w_\opt,w) + \rho(w,\bar{y}) + \rho(\bar{y},v)).
\]
Since $\rho(w_\opt,v) \leq \epsilon$ for $v \in Z$, and $\rho(w_\opt,\bar{w}) \leq \epsilon$, it follows 
\[
(|Z| - |W \setminus Z|)\rho(\bar{w},\bar{y}) \leq k\epsilon.
\]
Therefore $\rho(\bar{w},\bar{y}) \leq \frac{1}{2\alpha}\epsilon$, hence 
\[
\rho(w_\opt,\bar{y}) \leq \rho(w_\opt, \bar{w}) + \rho(\bar{w},\bar{y}) \leq
\left(1+\frac{1}{2\alpha}\right)\epsilon.
\]
This gives an approximation factor of $1+\frac{1}{2\alpha}$ for space-based geometric median, proving item \ref{item:upperspace}.

To see that both of these bounds are tight, let $n = k(\half + \alpha)$, and let $\X = W_+ = \{v_1,\ldots, v_n, y_1,\ldots, y_{k-n}, w_\opt\}$. Define $\rho(\cdot,\cdot)$ as follows (for all pairs $i \neq j$, $l \neq t$):
\begin{align*}
&\rho(w_\opt, v_i) = \epsilon\\
&\rho(w_\opt, y_l) = \beta\\
&\rho(v_i, v_j) = 2\epsilon\\
&\rho(v_i, y_t) = \beta - \epsilon\\
&\rho(y_t, y_l) = 0.
\end{align*}
One can verify that for any $\beta \leq (2 + \frac{1}{2\alpha} - \frac{1}{k\alpha})\epsilon$, $\gmed(y_l) \leq \gmed(v_i)$ for all $l,i$. 
Therefore, the approximation factor for set-based geometric median in a general metric space is lower-bounded by $2+\frac{1}{2\alpha}$ for general $k$. 
This holds also for Banach spaces as well, Since any metric space can be embedded into a Banach space \citep{Kuratowski35}. This proves item \ref{item:lowerset}.

For space-based geometric median, note that if $\beta \leq (1+\frac{1}{2\alpha})\epsilon$, then $\gmed(w_\opt) \geq \gmed(y_l)$. Therefore the space-based upper bound is tight for a general metric space. This proves item \ref{item:lowerspace}.
\end{proof}

Since $\alpha \in (0,\half)$, the guarantee for the geometric median in these settings is always worse than the guarantee for minimizing the median distance. Factoring in the dependence on $\alpha$, the difference is even more pronounced. The full comparison is given in \secref{compmedian} below.

\subsection{Optimal Approximation Factor}
In this section we give lower bounds that hold for any robust distance
approximation procedure. A lower bound of $C > 0$ for a category of metric spaces and a type of procedure indicates that if a procedure of this type guarantees a distance approximation $C_\alpha$ for all metric spaces of the given category, then necessarily $C_\alpha \geq C$. As shown below, in many cases the lower bounds provided here match the
upper bounds obtained by either the median distance or the geometric
median. 

The following theorem gives a lower bound of $3$ for the achievable approximation factor of set-based procedures in Banach spaces (and so, also in general metric spaces). This factor is achieved by the median distance minimizer, as shown in \thmref{meddist}. 
\begin{theorem}\label{thm:setmetric}
Consider set-based robust distance approximation procedures. For any $\alpha \in (0,\half)$, and for any such procedure, there exists a problem in a Banach space for which the approximation factor of the procedure is at least $3$. 
\end{theorem}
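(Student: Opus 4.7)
The plan, for any set-based procedure $A$ and any $\alpha \in (0,\half)$, is to construct a specific input $(W, w_\opt)$ in a Banach space whose approximation ratio is exactly $3$. The core idea: build a finite metric whose ``candidate'' points $v_1,\dotsc,v_m$ are the only elements of $W$, but in which $m$ auxiliary ``phantom targets'' $w_1,\dotsc,w_m$ are simultaneously valid choices of $w_\opt$, arranged so that any set-based output $A(W)=v_{i^\star}$ is exactly $3\epsilon$ away from $w_{i^\star}$, while $\omed_W(w_{i^\star},\alpha)=\epsilon$.

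First I would pick an integer $m > 2/(1-2\alpha)$, set $k=m$, and define a metric $\rho$ on the $2m$ points $\{v_1,\dotsc,v_m\}\cup\{w_1,\dotsc,w_m\}$ by
\[
\rho(v_i,v_j) = \rho(w_i,w_j) = 2\epsilon \ (i\neq j), \qquad \rho(v_i,w_j) = \epsilon \ (i\neq j), \qquad \rho(v_i,w_i) = 3\epsilon.
\]
A routine case analysis over all triples verifies the triangle inequality; the relations become tight precisely at $\rho(v_i,w_i) = \rho(v_i,v_j) + \rho(v_j,w_i)$ and $\rho(v_j,v_l) = \rho(v_j,w_k) + \rho(w_k,v_l)$ for distinct indices, which is the structural reason the factor $3$ appears. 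Since every finite metric embeds isometrically into $\ell_\infty$ via the Fr\'echet map $x \mapsto (\rho(x,x_1),\dotsc,\rho(x,x_{2m}))$ \citep{Kuratowski35}, the construction is realized in the Banach space $\ell_\infty^{2m}$.

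Next, I would take $W := \{v_1,\dotsc,v_m\}$ with each $v_i$ of multiplicity $1$. For every $i\in[m]$, the ball $\ball_\rho(w_i,\epsilon)\cap W$ is exactly $\{v_j: j\neq i\}$, which has size $m-1$. By the choice $m>2/(1-2\alpha)$, $m-1 > m(\half+\alpha)$, hence $\omed_W(w_i,\alpha)\leq\epsilon$; in other words, each $w_i$ is a legitimate choice of $w_\opt$. Because $A$ is set-based, its output is some $v_{i^\star}\in W$, and the adversary responds by choosing $w_\opt:=w_{i^\star}$, yielding
\[
\rho(A(W),w_\opt) \;=\; \rho(v_{i^\star},w_{i^\star}) \;=\; 3\epsilon \;=\; 3\cdot\omed_W(w_\opt,\alpha).
\]

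The main obstacle is recognizing that a purely one-dimensional construction on the real line does not suffice: there, a set-based procedure can always output a ``safe'' middle point equidistant from all plausible $w_\opt$'s, yielding ratio only $1$. Inflating the metric with the phantom targets $w_1,\dotsc,w_m$ (which lie outside $W$) and realizing this in a higher-dimensional Banach space is exactly what blocks that escape. The number $m$ of candidates must grow like $(1-2\alpha)^{-1}$, which is both what the moment bound $(m-1)/m > \half+\alpha$ forces and what makes $\alpha\to\half$ the hardest regime.
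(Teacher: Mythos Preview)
Your proof is correct and takes essentially the same approach as the paper: the same $2m$-point metric with distances $2\epsilon$, $\epsilon$, $3\epsilon$ (the paper uses $a_i,b_i$ in place of your $w_i,v_i$ and sets $\epsilon=1$), the same symmetry argument forcing any set-based choice to be $3\epsilon$ from some admissible $w_\opt$, and the same Kuratowski/Fr\'echet embedding into a Banach space. The only cosmetic difference is that the paper allows general $k$ by placing $k/n$ copies at each $b_i$, whereas you take $k=m$ with multiplicity one; this does not affect the argument.
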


\begin{proof}
Fix $\alpha$, and let $n = \ceil{\frac{1}{\half - \alpha}}$. 
Define the metric space $\X = \{a_1,\ldots,a_n,b_1,\ldots,b_n\}$ with
the metric $\rho(\cdot,\cdot)$ defined as follows: For all $i \neq j$, $\rho(a_i,a_j) = 2$,
$\rho(a_i, b_j) = 1$, $\rho(b_i,b_j) = 2$. For all $i$, $\rho(a_i,b_i) = 3$. See \figref{graph} for illustration.

\begin{figure}[h]
\begin{center}
\begin{tikzpicture}[scale = 3]
\draw (-0.5,0) -- (0.5,0) -- (0,0.866) -- cycle;
\fill (-0.5,0) circle(0.5pt) node[anchor=north] {$a_1$};
\fill (0.5,0) circle(0.5pt) node[anchor=west] {$a_2$};
\fill (0,0.866) circle(0.5pt) node[anchor=west] {$a_3$};
\fill (-0.25,0.433) circle(0.5pt)node[anchor=west] {$b_2$};
\fill (0.25,0.433) circle(0.5pt)node[anchor=west] {$b_1$};
\fill (0,0) circle(0.5pt)node[anchor=north] {$b_3$};
\end{tikzpicture}
\caption{The metric defined in \thmref{setmetric} for $n=3$. The distances are shortest paths on the underlying undirected graph, where all edges are the same length.}
\label{fig:graph}
\end{center}
\end{figure}
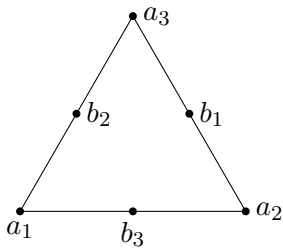

Consider the multi-set $W$ with $k/n$ elements at every $b_i$. It is easy to check that for every $a_i$, $\omed_W(a_i,\alpha)\leq  \omed_W(a_i,1/2-1/n) = 1$. On the other hand, since the problem is symmetric for permutations of the indices $1,\ldots,n$, 
no procedure can distinguish the cases $w_\opt = a_i$ for different $i \in [n]$.
For any choice $y = b_i \in W$, if $w_\opt = a_i$ then $\rho(w_\opt,y) = 3$. Therefore the approximation factor of any procedure is at least $3$. 
Since any metric space can be embedded into a Banach space \citep{Kuratowski35} this result holds also for Banach spaces.
\end{proof}

Next, we give a lower bound of $2$ for space-based procedures over general metric spaces. \thmref{meddist} shows that this factor is also achieved by minimizing the median distance.
\begin{theorem}
Consider robust space-based distance approximation procedures. For any $\alpha \in (0,\half)$, and for any such procedure, there exists a problem for which the approximation factor of the procedure is at least $2$. 
\end{theorem}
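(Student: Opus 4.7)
The plan is to reuse the metric space $\X$ and multi-set $W$ from the proof of \thmref{setmetric}, and to observe that even when the procedure may output any $y \in \X$ (not just $y \in W$), the geometry of the construction still forces an approximation ratio of at least $2$. Specifically, set $n := \ceil{1/(\half-\alpha)}$, which is at least $3$ because $\alpha > 0$, let $\X = \{a_1,\ldots,a_n,b_1,\ldots,b_n\}$ be equipped with the shortest-path metric on the bipartite graph with edges $\{(a_i,b_j) : i \neq j\}$, so that $\rho(a_i,a_j) = \rho(b_i,b_j) = 2$ for $i \neq j$, $\rho(a_i,b_j) = 1$ for $i \neq j$, and $\rho(a_i,b_i) = 3$, and let $W$ contain $k/n$ copies of each $b_i$. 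As already established in the proof of \thmref{setmetric}, $\omed_W(a_i,\alpha) \leq 1$ for every $i \in [n]$.

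Now fix any space-based procedure and let $y \in \X$ be its output on this instance. Two cases exhaust the possibilities. If $y = a_i$ for some $i$, the adversary chooses $w_\opt := a_{i'}$ for any $i' \in [n] \setminus \{i\}$, which exists since $n \geq 3$; then $\rho(y,w_\opt) = 2$ while $\omed_W(w_\opt,\alpha) \leq 1$, yielding ratio at least $2$. If instead $y = b_i$ for some $i$, the adversary chooses $w_\opt := a_i$, giving $\rho(y,w_\opt) = 3$ and $\omed_W(w_\opt,\alpha) \leq 1$, hence ratio at least $3$. Since $\X$ consists only of the $a_i$'s and the $b_i$'s, one of these two cases necessarily applies, so the worst-case ratio on this single instance is at least $2$, and the procedure's approximation factor must be at least $2$.

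There is essentially no technical obstacle here---the construction from \thmref{setmetric} already contains the necessary separation among the $a_i$'s. The one ingredient to double-check is that $\rho(a_i,a_j) = 2$ (and not less), which holds because there is no direct $a_i$-$a_j$ edge and, for $n \geq 3$, any shortest $a_i$-$a_j$ path uses a common neighbor $b_k$ with $k \notin \{i,j\}$. The key contrast with the set-based lower bound of $3$ in \thmref{setmetric} is that the new outputs $y = a_i$ available to a space-based procedure do reduce the adversary's best ratio from $3$ down to $2$ (by forcing $w_\opt = a_{i'}$ with $i' \neq i$ rather than matching an output $y = b_i$ with $w_\opt = a_i$), but no further, because the $a_i$'s are pairwise at distance exactly twice the radius $1$ that certifies each of them as a legitimate target.
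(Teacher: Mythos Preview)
Your proof is correct, and it takes a genuinely different route from the paper's own argument. The paper does \emph{not} reuse the construction from \thmref{setmetric}; instead it builds a modified metric space on the same vertex set $\{a_1,\dotsc,a_n,b_1,\dotsc,b_n\}$ with $\rho(b_i,b_j)=1$ and $\rho(a_i,b_i)=2$ (rather than $2$ and $3$, respectively). In the paper's space every point $y\in\X$ is at distance exactly $2$ from some $a_i$, so a single uniform statement (``any $y$ has $\rho(y,a_i)=2$ for some $i$'') yields the bound directly, without case analysis.

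Your approach is more economical in that it recycles an already-verified construction and simply adds a two-case analysis for the new output options $y=a_i$. The paper's approach buys a slightly cleaner picture---every output achieves ratio exactly $2$, so the construction is tight uniformly over $y$---whereas in your version the outputs $y=b_i$ actually give ratio $3$, and the bound of $2$ arises only from the ``best'' outputs $y=a_i$. Neither point affects correctness; both establish the claimed lower bound of $2$.

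One minor remark: your verification that $\rho(a_i,a_j)=2$ and $\rho(a_i,b_i)=3$ in the shortest-path metric tacitly uses $n\ge3$, which you correctly note follows from $\alpha>0$. The same boundary issue with $n=\lceil 1/(\tfrac12-\alpha)\rceil$ and the strict inequality in the definition of $\omed_W$ (when $1/(\tfrac12-\alpha)$ happens to be an integer) is present in the paper's own constructions as well, so it is not a defect specific to your argument.
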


\begin{proof}
Fix $\alpha$, and let $n = \ceil{\frac{1}{\half - \alpha}}$. 
Define the metric space $\X = \{a_1,\ldots,a_n,b_1,\ldots,b_n\}$ with
the metric $\rho(\cdot,\cdot)$ defined as follows: For all $i \neq j$, $\rho(a_i,a_j) = 2$,
$\rho(a_i, b_j) = 1$, $\rho(b_i,b_j) = 1$. For all $i$, $\rho(a_i,b_i) = 2$. See \figref{graph2} for illustration.

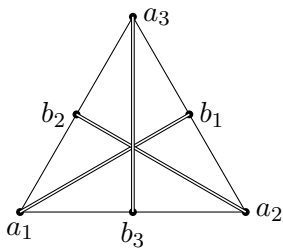
\begin{figure}[h]
\begin{center}
\begin{tikzpicture}[scale = 3]
\draw (-0.5,0) -- (0.5,0) -- (0,0.866) -- cycle;
\fill (-0.5,0) circle(0.5pt) node[anchor=north] {$a_1$};
\fill (0.5,0) circle(0.5pt) node[anchor=west] {$a_2$};
\fill (0,0.866) circle(0.5pt) node[anchor=west] {$a_3$};
\fill (-0.25,0.433) circle(0.5pt)node[anchor=east] {$b_2$};
\fill (0.25,0.433) circle(0.5pt)node[anchor=west] {$b_1$};
\fill (0,0) circle(0.5pt)node[anchor=north] {$b_3$};
\draw[double] (-0.5,0) -- (0.25,0.433);
\draw[double] (0.5,0) -- (-0.25,0.433);
\draw[double] (0,0.866) -- (0,0);
\end{tikzpicture}
\caption{The metric defined in \thmref{setmetric} for $n=3$. The distances are shortest paths on the underlying undirected graph. The full lines are edges of length $1$, the double lines from $a_i$ to $b_i$ are edges of length $2$.}
\label{fig:graph2}
\end{center}
\end{figure}

Consider the multi-set $W$ with $k/n$ points at every $b_i$. It is easy to check that for every $a_i$, $\omed_W(a_i,\alpha)\leq  \omed_W(a_i,1/2-1/n) = 1$. On the other hand, since the problem is symmetric for permutations of the indices $1,\ldots,n$, 
no procedure can distinguish the cases $w_\opt = a_i$ for different $i \in [n]$. Moreover, any point $y$ in the space has $\rho(a_i,y) = 2$ for at least one $i \in [n]$. Therefore the approximation factor of any procedure is at least $2$. 
\end{proof}

For lower bounds on Hilbert spaces and Banach spaces, we require the following lemma, which gives the radius of the ball inscribing the regular simplex in a $p$-normed space.
\begin{lemma}\label{lem:tetra}
Consider $\reals^n$ with the $p$-norm for $p > 1$. Let $e_1,\ldots,e_n$ be the standard basis vectors, and let $r_{n,p}$ be the minimal number for which there exists an $x \in \reals^n$ such that $B(x,r) \supseteq \{e_1,\ldots,e_n\}$. 
Then $r_{n,p} = ((1+(n-1)^{-1/(p-1)})^{-p} + (n-1)(1+(n-1)^{1/(p-1)})^{-p})^{1/p}$.
This radius is obtained with the center $x$ such that for all $i$, $x_i = (1+(n-1)^{1/(p-1)})^{-1}$.
\end{lemma}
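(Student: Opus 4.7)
The claim is a minimax statement: $r_{n,p} = \min_{x \in \reals^n} \max_{i \in [n]} \|x - e_i\|_p$, so I would prove it by (i) reducing to a symmetric center using convexity, and (ii) carrying out a one-dimensional calculus minimization.

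First I would observe that $f(x) := \max_i \|x - e_i\|_p$ is convex in $x$ (as a maximum of convex functions) and permutation-invariant in the sense that $f(\sigma x) = f(x)$ for every coordinate permutation $\sigma \in S_n$, because $\sigma$ merely permutes the basis vectors $\{e_1,\dotsc,e_n\}$. Hence if $x^\star$ is any minimizer of $f$, then by Jensen's inequality the averaged point $\bar x := \frac{1}{n!}\sum_{\sigma \in S_n}\sigma x^\star$ satisfies $f(\bar x) \leq f(x^\star)$, so $\bar x$ is also a minimizer. But $\bar x$ is fixed by every $\sigma$, which forces all its coordinates to be equal. So without loss of generality the optimal center has the form $x = t\cdot (1,\dotsc,1)$ for some scalar $t$, and by symmetry $\|x - e_i\|_p$ is the same for every $i$.

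Next I would reduce to the one-variable problem of minimizing
\[
g(t) \;:=\; |1-t|^p + (n-1)|t|^p
\]
over $t \in \reals$. For $p > 1$ the function $g$ is strictly convex, and inspection shows the minimum must lie in $(0,1)$: negative $t$ strictly increases both $|1-t|$ and $|t|$ relative to $t=0$, and $t>1$ strictly increases both relative to $t=1$. On $(0,1)$, setting
\[
g'(t) \;=\; -p(1-t)^{p-1} + p(n-1)\, t^{p-1} \;=\; 0
\]
gives $(1-t)^{p-1} = (n-1)t^{p-1}$. Taking $(p-1)$-th roots of the positive quantities yields $1 - t = (n-1)^{1/(p-1)}\, t$, hence the unique critical point
\[
t^\star \;=\; \bigl(1 + (n-1)^{1/(p-1)}\bigr)^{-1},
\]
as claimed for the center.

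Finally I would plug $t^\star$ back into $g$. From $1 - t^\star = (n-1)^{1/(p-1)}/(1+(n-1)^{1/(p-1)}) = 1/(1+(n-1)^{-1/(p-1)})$, one obtains
\[
(1-t^\star)^p \;=\; \bigl(1+(n-1)^{-1/(p-1)}\bigr)^{-p}, \qquad (n-1)(t^\star)^p \;=\; (n-1)\bigl(1+(n-1)^{1/(p-1)}\bigr)^{-p},
\]
so $r_{n,p}^p = g(t^\star)$ is precisely the sum on the right-hand side of the claimed formula, and taking $p$-th roots finishes the proof. There is no real obstacle here; the only thing to be careful about is the symmetrization step, where one must invoke both the convexity of $f$ and the permutation-invariance of the target set $\{e_1,\dotsc,e_n\}$ to justify restricting to multiples of the all-ones vector — the rest is routine calculus.
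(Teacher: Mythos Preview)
Your proof is correct and follows essentially the same approach as the paper: reduce by symmetry to a center of the form $x=(t,\dotsc,t)$, then minimize the one-variable function $g(t)=|1-t|^p+(n-1)|t|^p$ by calculus. The paper dispatches the symmetry reduction with a one-line ``due to symmetry,'' whereas you supply the convexity-plus-averaging justification explicitly, which is a welcome bit of rigor but not a different idea.
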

\begin{proof}
It is easy to see that due to symmetry, $x = (a,a,\ldots,a)$ for some real number $a$. Thus $r_{n,p} = \inf_{a \in \reals}\norm{e_1 - (a,\ldots,a)}_p$.
We have $\norm{e_1 - (a,\ldots,a)}^p_p = |1-a|^p + (n-1)|a|^p$.
Minimizing over $a$ gives $a = (1+(n-1)^{1/(p-1)})^{-1}$, and 
\[
r_{n,p}^p = |1-a|^p + (n-1)|a|^p = (1+(n-1)^{-1/(p-1)})^{-p} + (n-1)(1+(n-1)^{1/(p-1)})^{-p}.
\]
\end{proof}

We now prove a lower bound for robust distance approximation in Hilbert spaces. Unlike the previous lower bounds, this lower bound depends on the value of $\alpha$.

\begin{theorem}\label{thm:hilbertlower}
Consider robust distance approximation procedures for $(\X,\rho)$ a Hilbert space. For any $\alpha \in (0,\half)$, the following holds:
\begin{itemize}
\item For any set-based procedure, there exists a problem such that the procedure achieves an approximation factor at least
  \[
    \sqrt{1 + \frac{2}{\Ceil{\frac{1}{\half - \alpha}}-2}} . 
  \]
\item For any space-based procedure, there exists a problem such that the procedure achieves an approximation factor at least
  \[
    \sqrt{1 + \frac1{\Ceil{\frac{1}{\half - \alpha}}^2 - 2\Ceil{\frac{1}{\half - \alpha}}}} .
  \]
\end{itemize}
\end{theorem}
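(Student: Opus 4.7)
My approach mirrors the combinatorial construction used in the earlier lower-bound theorems (\thmref{setmetric} and its space-based counterpart), but now embedded into a Hilbert space using the geometry of the regular simplex. Let $n := \ceil{1/(\half-\alpha)}$, with $n$ increased by one in the degenerate case that $1/(\half-\alpha)$ is integral (this affects only constants in the bound). Work in $\reals^n$ equipped with the Euclidean norm, and set $a_i := \ve_i$ and $b_i := \frac{1}{n-1}\sum_{j\ne i}\ve_j$, so each $b_i$ is the centroid of the face of the simplex $\{a_1,\ldots,a_n\}$ opposite to $a_i$. Let $W$ be the multiset with $k/n$ copies of each $b_i$ (assume $n\mid k$ for convenience).

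A short coordinate calculation gives $\|a_i-b_i\|^2 = n/(n-1)$ and $\|a_i-b_j\|^2 = (n-2)/(n-1)$ for $j\ne i$. Setting $\veps := \sqrt{(n-2)/(n-1)}$, the $k(n-1)/n$ copies of $b_j$ with $j\ne i$ all lie within distance $\veps$ of $a_i$, and the inequality $(n-1)/n > \half + \alpha$ (guaranteed by the choice of $n$) yields $\omed_W(a_i,\alpha) \le \veps$ for every $i \in [n]$.

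The core step is a symmetry argument: the action of $S_n$ on $\reals^n$ by coordinate permutations preserves the metric and fixes $W$ setwise, while acting transitively on $\{a_1,\ldots,a_n\}$. Hence no deterministic procedure run on $W$ can distinguish which $a_i$ plays the role of $w_\opt$, and a Yao-type averaging extends this to randomized procedures. Thus any valid output $\hat y$ must satisfy $\rho(a_i, \hat y) \le C_\alpha\veps$ for \emph{every} $i\in[n]$. For a set-based procedure $\hat y = b_j$ for some $j$, and picking $w_\opt = a_j$ forces $\rho(w_\opt, \hat y) = \sqrt{n/(n-1)}$, so $C_\alpha \ge \sqrt{n/(n-1)}/\veps = \sqrt{n/(n-2)} = \sqrt{1 + 2/(n-2)}$.

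For a space-based procedure, $\hat y$ ranges over all of $\reals^n$, but $\max_i \|a_i - \hat y\|$ is bounded below by the inscribing radius of the simplex $\{\ve_1,\ldots,\ve_n\}$, which by \lemref{tetra} with $p=2$ equals $\sqrt{(n-1)/n}$. The adversary therefore finds $i$ with $\rho(a_i,\hat y) \ge \sqrt{(n-1)/n}$, yielding $C_\alpha \ge \sqrt{(n-1)/n}/\veps = (n-1)/\sqrt{n(n-2)} = \sqrt{1 + 1/(n^2-2n)}$. The main obstacle I expect is the strict-versus-non-strict boundary in the definition of $\omed_W$ (handled by bumping $n$ up by one when $1/(\half - \alpha) \in \nats$, which costs only a constant); apart from that, both bounds reduce to the simplex-geometry calculation above combined with the inscribing-radius identity from \lemref{tetra}.
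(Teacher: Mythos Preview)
Your proposal is correct and follows essentially the same approach as the paper: the regular simplex construction in $\reals^n$ with $n = \ceil{1/(\half-\alpha)}$, the face-centroids $b_i$ as the multiset $W$, the symmetry argument over permutations of $\{a_1,\dotsc,a_n\}$, and the invocation of \lemref{tetra} for the inscribing radius in the space-based case. Your handling of the strict inequality in the definition of $\omed_W$ (bumping $n$ when $1/(\half-\alpha)$ is integral) is more careful than the paper's own argument, which glosses over this edge case; note though that bumping $n$ yields a slightly weaker constant than the stated bound in that degenerate case, so the theorem as literally stated is not quite recovered there---but this is a defect shared with the paper's proof.
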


The space-based bound given in \thmref{hilbertlower} is tight for $\alpha \rightarrow 1/2$. This can be seen by noting that the limit of the space-based lower bound for $\alpha \rightarrow 1/2$ is $(\half+\alpha)/\sqrt{2\alpha}$, which is exactly the guarantee provided in \cite{Minsker13} for the space-based geometric median procedure. For smaller $\alpha$, there is a gap between the guarantee of Minsker for the geometric median and our lower bound. 
\vspace{1em}
\begin{proof}
Fix $\alpha$, and let $n = \ceil{\frac{1}{\half - \alpha}}$. Consider the Euclidean space $\reals^{n}$ with $\rho(x,y) = \norm{x -y}$. Let $e_1,\ldots,e_n$ be the standard basis vectors. These are the vertices of a regular simplex with side length $\norm{e_i-e_j} = \sqrt{2}$. Let $b_1,\ldots,b_n$ such that $b_i$ is the center of the hyperface of the simplex opposing $e_i$. Then $\norm{b_i - e_j} = r_{n-1,2}$ for all $j \neq i$, where $r_{n,2} = \sqrt{\frac{n-1}{n}}$ is as defined in \lemref{tetra}.
(see \figref{simplex}). 

Consider $W$ with $k/n$ points at each of $b_1,\ldots,b_n$. Then $\omed_W(e_i,\alpha) \leq \omed_W(e_i,1-\frac{1}{n}) = \norm{e_i-b_j} = r_{n-1,2}$ for any $j \neq i$. Any set-based procedure must select $b_i$ for some $i$. if $w_\opt = e_i$, the resulting approximation factor is $\norm{e_i-b_i}/r_{n-1,2} = \sqrt{\frac{n-2}{n-1}}\norm{e_i-b_i}$. 
For $\norm{b_i - e_i}$, consider for instance $b_1$ and $e_1$. We have $b_1 = (0, \frac{1}{n-1},\ldots,\frac{1}{n-1})$, therefore $\norm{b_1 - e_1} =  \sqrt{\frac{n}{n-1}}$. The approximation factor of the procedure is thus at least $\sqrt{\frac{n}{n-2}}$. 

For a set-based procedure,
whatever $y$ it returns, there exists at least one $i$ such that $\norm{y-a_i} \geq r_{n,2}$. Therefore the approximation factor is at least $r_{n,2}/r_{n-1,2} = \sqrt{\frac{n-1}{n}}/\sqrt{\frac{n-2}{n-1}} = \sqrt{1+\frac{1}{n^2-2n}}$.

\begin{figure}[h]
\begin{center}
\tdplotsetmaincoords{70}{40}
\begin{tikzpicture}[tdplot_main_coords,scale = 3]
\draw[dashed] (0, 0, 0.61)-- (-0.288, -0.5, -0.204) ;
\draw[dashed] (-0.288, 0.5, -0.204) -- (-0.288, -0.5, -0.204) node[anchor=north] {$a_1$};
\draw[dashed] (-0.288, -0.5, -0.204) -- (0.577, 0, -0.204) node[anchor=west] {$a_2$};
\draw[dashed] (0.577, 0, -0.204) -- (-0.288, 0.5, -0.204) node[anchor=west] {$a_3$};
\draw[dashed] (0.577, 0, -0.204) -- (0, 0, 0.61) node[anchor=west] {$a_4$};
\draw[dashed] (-0.288, 0.5, -0.204) -- (0, 0, 0.61);
\fill (0, 0, 0.61)  circle(0.5pt);
\fill (-0.288, 0.5, -0.204)  circle(0.5pt);
\fill (0.577, 0, -0.204)  circle(0.5pt);
\fill (-0.288, -0.5, -0.204)  circle(0.5pt);

\fill  (-0.19,0,0.068) circle(0.5pt) node[anchor=west] {$b_2$};
\end{tikzpicture}
\caption{The regular simplex in $\reals^3$, $n = 4$. $a_i$ is a vertex, $b_i$ is the center of the face opposite $a_i$.}
\label{fig:simplex}
\end{center}
\end{figure}
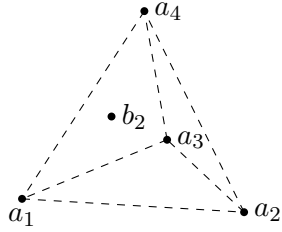

\end{proof}

For space-based procedures, we have seen that while there exists a
lower bound of $2$ for general metric spaces, in a Hilbert space
better approximation factors can be achieved. Is it possible that in
Banach spaces the same approximation factor can also be achieved? The
following theorem shows that the answer is no. 

\begin{theorem}\label{thm:spacebanach}
Let $\alpha = 1/6$. There exists a Banach space for which an approximation factor of $(\half + \alpha)/\sqrt{2\alpha}$ cannot be achieved. 
\end{theorem}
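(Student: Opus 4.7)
The plan is to construct an explicit $\ell_p$ space with $p \neq 2$ in which the ``simplex'' configuration used earlier for Hilbert spaces yields a strictly larger lower bound than the Minsker guarantee $(\half+\alpha)/\sqrt{2\alpha} = 2/\sqrt{3}$. Concretely, I will take $\X := (\reals^3, \|\cdot\|_{3/2})$, let $a_1, a_2, a_3$ be the standard basis vectors, and let $b_i := \frac12(a_j + a_k)$ for $\{i,j,k\} = \{1,2,3\}$. Let $W$ be the multi-set containing $k/3$ copies of each $b_i$, for any $k$ divisible by $3$. The group $S_3$ acts on $\X$ by permuting coordinates, and both the multi-set $W$ and the candidate set $\{a_1,a_2,a_3\}$ are invariant under this action.

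A direct calculation of $\|a_i - b_j\|_{3/2}$ shows that for $j \neq i$, $\|a_i - b_j\|_{3/2} = \|\tfrac12(e_j - e_k)\|_{3/2} = 2^{-1/3}$, while $\|a_i - b_i\|_{3/2} = (1 + \sqrt{2})^{2/3}$ which is strictly larger. Hence for each $i$, the ball of radius $2^{-1/3}$ around $a_i$ contains exactly the $2k/3$ copies of the two $b_j$ with $j \neq i$. Taking $\alpha = 1/6$, the definition of $\omed$ yields $\omed_W(a_i,1/6) = 2^{-1/3}$; the boundary case (strict ``$>$'' vs.\ exact $2k/3$) is handled exactly as in the proofs of \thmref{setmetric} and \thmref{hilbertlower}, e.g.\ by passing to $\alpha = 1/6 - \epsilon$ and letting $\epsilon \to 0^+$, or by adjoining one auxiliary point to $W$ at distance $\leq 2^{-1/3}$ from each $a_i$ (the symmetrized collection remains invariant).

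Next, \lemref{tetra} with $n = 3$ and $p = 3/2$ (so $\tau = 2^{1/(p-1)} = 4$) gives the circumradius of $\{a_1,a_2,a_3\}$ in $\ell_{3/2}$:
\[
r_{3,3/2}^{\,3/2} \;=\; \frac{\tau^p + 2}{(\tau+1)^p} \;=\; \frac{4^{3/2} + 2}{5^{3/2}} \;=\; \frac{10}{5^{3/2}} \;=\; \frac{2}{\sqrt{5}},
\]
so $r_{3,3/2} = (2/\sqrt{5})^{2/3}$. The standard symmetrization argument then shows that any (possibly randomized) space-based procedure, after averaging over the $S_3$-symmetry of the problem, produces an output whose distribution can be assumed invariant under $S_3$; by convexity of $\|\cdot\|_{3/2}$ and Jensen's inequality, its worst-case expected distance to $a_i$ is minimized by a deterministic point on the diagonal $\{(t,t,t)\}$, which by \lemref{tetra} gives $\min_y\max_i\|a_i - y\|_{3/2} = r_{3,3/2}$.

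Combining, the approximation factor of any space-based procedure on this instance is at least
\[
\frac{r_{3,3/2}}{\omed_W(a_i,1/6)} \;=\; \frac{(2/\sqrt{5})^{2/3}}{2^{-1/3}} \;=\; \frac{2}{5^{1/3}}.
\]
The proof concludes by the elementary numerical verification $2/5^{1/3} > 2/\sqrt{3}$, equivalently $5 < 3\sqrt{3}$, i.e.\ $25 < 27$, which holds. The main obstacle is the care needed at two points: (i) justifying that $\omed_W(a_i,1/6) = 2^{-1/3}$ despite the strict ``$>$'' appearing in the definition at the exact boundary $2k/3$, which is the same subtlety present in the earlier proofs of this section and is handled by the same perturbation argument; and (ii) reducing arbitrary randomized procedures to the symmetric deterministic case via the $S_3$-averaging together with convexity of the norm.
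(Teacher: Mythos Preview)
Your approach is essentially identical to the paper's: the same space $(\reals^3,\|\cdot\|_{3/2})$, the same simplex configuration borrowed from \thmref{hilbertlower}, and the same key computation $r_{3,3/2}/r_{2,3/2}=2/5^{1/3}>2/\sqrt{3}$. Two small remarks: (i) $\|a_i-b_i\|_{3/2}=(1+2^{-1/2})^{2/3}$, not $(1+\sqrt{2})^{2/3}$, though this is immaterial to the argument; (ii) your alternative fix of adjoining a single auxiliary point at distance $\leq 2^{-1/3}$ from every $a_i$ cannot work, since $\min_y\max_i\|y-a_i\|_{3/2}=r_{3,3/2}>2^{-1/3}$ by definition---only the $\alpha$-perturbation route (which is also what the paper implicitly relies on in its other lower-bound proofs) handles the strict-inequality boundary.
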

\begin{proof}
Consider the space $\reals^n$ with the distance defined by a $p$-norm. Let $n = 1/(\half - \alpha) = 3$. Construct $W$ as in the proof of \thmref{hilbertlower}, with $k/n$ points in each of $b_1,\ldots,b_n$, where $b_i$ is the center (in the $p$-norm) of the hyperface opposing the basis vector $e_i$. As in the proof of \thmref{hilbertlower}, the approximation factor for any space-based procedure for this problem is at least $r_{n,p}/r_{n-1,p}$. For $p = 3/2$, we have $r_{n,p}/r_{n-1,p} = \frac{2}{5^{1/3}} > \frac{2}{\sqrt{3}} = \frac{\half + \alpha}{\sqrt{2\alpha}}$. 
\end{proof}

\begin{table}[p!]
  \begin{center}
\begin{tabular}{l|l|l|l}
& General Metric & Banach & Hilbert \\
\hline
\hline
Set-based &&& \\
\cline{1-1}
Optimal  & $=3$ & $=3$ & \parbox{10em}{
\begin{align*}
  &\geq \sqrt{1 + \frac{2}{\Ceil{\frac{1}{\half - \alpha}}-2}} \\
&\xrightarrow{\alpha \rightarrow 1/2} 1/\sqrt{2\alpha}
\end{align*}
} \\
Median distance & $=3$ & $=3$ & $=3$ \\
Geometric median & $= 2 + 1/(2\alpha)$ & $= 2 + 1/(2\alpha)$ & Open  \\
\hline
\hline
Space-based &&& \\
\cline{1-1}
Optimal  & $= 2$ & \parbox{7em}{Strictly larger than for Hilbert spaces} & \parbox{10em}{
\begin{align*}
  &\geq \sqrt{1 + \frac1{\Ceil{\frac{1}{\half - \alpha}}^2 -
2\Ceil{\frac{1}{\half - \alpha}}}} \\ 
&\quad\xrightarrow{\alpha \rightarrow 1/2}
\frac{\half + \alpha}{\sqrt{2\alpha}}
\end{align*}}\\
Median distance & $= 2$ & $= 2$ & $= 2$ \\
Geometric median & $= 1 + 1/(2\alpha)$ & $\leq  1 + 1/(2\alpha)$
($\star$) & $\leq (\half + \alpha)/\sqrt{2\alpha}$ ($\star$) \\
\hline
\end{tabular}
\end{center}
\caption{Approximation factors for $\alpha \in (0,1/2)$, based on type
of procedure and type of space. Results marked with ($\star$) are due to \cite{Minsker13}. Equality indicates matching upper and lower bounds.}
\label{table:comparison}
\end{table}

\begin{table}[p!]
\begin{center}
\begin{tabular}{l|l|l|l}
& General Metric & Banach & Hilbert \\
\hline
\hline
Set-based &&& \\
\cline{1-1}
Optimal & $=6$ & $=6$ & $\geq 3.46$ \\ 
Median distance & $=6$ & $=6$ & $=6$ \\
Geometric median & $= 14.92$ & $= 14.92$ & Open  \\
\hline
\hline
Space-based &&& \\
\cline{1-1}
Optimal & $=4$ & Open &$\geq 2.31$\\ 
Median distance & $=4$ & $=4$ & $=4$ \\
Geometric median & $= 11.65$ & $\leq 11.65$ & $\leq 3.33$\\
\hline
\end{tabular}
\caption{Optimal normalized approximation factors based on the values of $C_\alpha$ given in \tabref{comparison}. The value in each case is $\inf_{\alpha \in (0,\half)} \frac{C_\alpha}{(\half-\alpha)}$ for the corresponding $C_\alpha$. All non-integers are rounded to 2 decimal places.}
\label{table:normcomp}
\end{center}
\end{table}

\subsection{Comparison of Selection Procedures}\label{sec:compmedian}

The results provided above are summarized in \tabref{comparison}. 
When comparing different procedures for different values of $\alpha$, it is useful to compare not only the respective approximation factors but also the upper bound that can be obtained for $\omed_W(w_\opt,\alpha)$. Typically, as in the proof of \propref{alg}, this upper bound will stem from first bounding $\E[\rho(w_\opt, w)] \leq \epsilon$, where the expectation is taken over random i.i.d.\ draws of $w$, and then applying Markov's inequality to obtain $\P[\rho(w_\opt,w) \leq \frac{\epsilon}{\half - \alpha}] \geq \half + \alpha$. In the final step Hoeffding's inequality guarantees that if $k$ is large enough, $|\ball(w_\opt, \epsilon/(\half-\alpha)) \cap W|$ approaches $k(\half+\alpha)$. Therefore, for a large $k$ and a procedure for $\alpha$ with an approximation factor $C_\alpha$, the guarantee approaches $\rho(y,w_\opt) \leq \frac{C_\alpha}{(\half-\alpha)}\cdot \epsilon$. For a procedure with an approximation factor $C_\alpha$, we call $\frac{C_\alpha}{(\half-\alpha)}$ the \emph{normalized} approximation factor of the procedure. This is the approximation factor with respect to $\E[\rho(w_\opt,\alpha)]$. When the procedure supports a range of $\alpha$, the optimal normalized factor can be found by minimizing $\frac{C_\alpha}{(\half-\alpha)}$ over $\alpha \in (0,\half)$.
If $C_\alpha =C$ is a constant, the optimal normalized approximation factor is $2C$, achieved when $\alpha = 0$. The optimal normalized approximation factors, based on the known approximation factors as a function of $\alpha$, are given in \tabref{normcomp}.

We observe that for set-based procedures, the median distance is superior to the geometric median for general metric spaces as well as for general Banach spaces. It is an open question whether better results can be achieved for Hilbert spaces using set-based procedures.

For space-based procedures, the median distance is again superior,
except in the case of a Hilbert space, where the geometric median is
superior. The case of a Hilbert space is arguably the most useful in
common applications such as linear regression.
Nevertheless, gaps still remain and it would be interesting to develop
optimal methods.

Implementing the geometric median procedure in a space-based formulation is computationally efficient for Hilbert spaces when accurate distances are available \cite{Minsker13}. However, it is unknown whether and how the procedure can be implemented when only unreliable distance estimations are available, as in \secref{randomdist}. A useful implementation should be both computationally feasible and statistically efficient, while degrading the approximation factors as little as possible.

\section{Predicting Without a Metric on Predictors}\label{sec:output}
The core technique presented above allows selecting a good candidate out of a set that
includes mostly good candidates, in the presence of a metric between candidates. If the final goal is prediction of a scalar label, 
good prediction can still be achieved without access to a metric between candidates, using the following simple procedure:
For every input data point, calculate the prediction of every candidate, and output the median of the predictions. 
This is a straight-forward generalization of voting techniques for
classification such as when using bagging~\citep{Breiman96}.\footnote{Note, however, that the usual implementation of bagging for regression involves averaging over the outputs of the classifiers, and not taking the median.} The following lemma shows that this approach leads to guarantees similar to those achieved by 
\propref{alg}. 

\begin{lemma}
Let $D$, $\ell: \cZ \times \X \rightarrow \reals_+$ and $L:\X \rightarrow \reals_+$ be defined as in \secref{approx}. Assume that $\cZ = \cX \times \cY$,
and there are functions $f:\cX \times \X \rightarrow \reals$ (the prediction function) and $g:\reals \times \reals$ (the link function) such that 
$\ell((\vx,y),\vw) = g(f(\vx,\vw),y)$. Assume that $g$ is convex its first argument.
Suppose that we have $k$ predictors $w_1,\ldots,w_k$ such that for at least $(\half + \gamma)k$ of them,
$L(\vw) \leq \bar{\ell}$. 
For $x \in \cX, y\in \cY$, let $\hat{y}(\vx)$ be the median of $f(\vx,\vw_1),\ldots,f(\vx,\vw_k)$, 
and let $\hat{\ell}(\vx,y) = g(\hat{y}(\vx),y)$. Let $\hat{L} := \E[\hat{\ell}(\hat{y}(\vx))]$. 
Then 
\[
\hat{L} \leq \left(\frac{1}{2\gamma} + 1\right)\bar{\ell}.
\]
\end{lemma}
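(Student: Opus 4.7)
The plan is to analyze the loss pointwise for each $(\vx,y)$, exploit the one-dimensional structure of convex functions, and then take expectations. Fix $(\vx,y)\in\cX\times\cY$ and write $h(z) := g(z,y)$, which is a convex non-negative function of a single real variable; write $z_i := f(\vx,\vw_i)$ and $\hat{z} := \hat{y}(\vx) = \median(z_1,\ldots,z_k)$, so that $\hat\ell(\vx,y) = h(\hat{z})$ and $\ell((\vx,y),\vw_i)=h(z_i)$. Let $G\subseteq[k]$ be the set of ``good'' indices with $L(\vw_i)\leq\bar\ell$, and fix any subset $G'\subseteq G$ of size $\lceil(\half+\gamma)k\rceil$ (for simplicity I will treat this as exactly $(\half+\gamma)k$; the ceiling introduces only a negligible $O(1/k)$ slack).

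The main idea is a geometric observation about convex functions on $\reals$. Any convex $h\colon\reals\to\reals_+$ admits a minimizer $z^*\in[-\infty,\infty]$ such that $h$ is non-increasing on $(-\infty,z^*]$ and non-decreasing on $[z^*,\infty)$. Define $G^+ := \{j\in[k]: z_j\geq\hat{z}\}$ and $G^- := \{j\in[k]: z_j\leq\hat{z}\}$, both of size $\geq k/2$ by definition of the median. If $\hat{z}\geq z^*$, then for every $j\in G^+$ we have $z_j\geq\hat{z}\geq z^*$, so monotonicity gives $h(z_j)\geq h(\hat{z})$; set $J:=G^+$. Otherwise set $J:=G^-$, for which the symmetric argument shows $h(z_j)\geq h(\hat z)$ for all $j\in J$. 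In either case $|J|\geq k/2$, and since $|[k]\setminus G'|\leq (\half-\gamma)k$, the intersection satisfies $|J\cap G'|\geq |J|+|G'|-k\geq\gamma k$.

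Consequently $h(\hat{z})\leq h(z_j)$ for every $j\in J\cap G'$, so averaging and using non-negativity of $h$,
\begin{equation*}
h(\hat{z}) \;\leq\; \frac{1}{|J\cap G'|}\sum_{j\in J\cap G'} h(z_j) \;\leq\; \frac{1}{\gamma k}\sum_{j\in G'} h(z_j).
\end{equation*}
This bound holds pointwise for every $(\vx,y)$ (with the set $J$ depending on $(\vx,y)$, but $G'$ fixed). Taking expectations and applying linearity together with $\E[h(z_j)] = L(\vw_j)\leq\bar\ell$ for $j\in G'\subseteq G$ yields
\begin{equation*}
\hat{L} \;\leq\; \frac{1}{\gamma k}\sum_{j\in G'} L(\vw_j) \;\leq\; \frac{|G'|}{\gamma k}\,\bar\ell \;=\; \frac{\half+\gamma}{\gamma}\,\bar\ell \;=\; \Bigl(\frac{1}{2\gamma}+1\Bigr)\bar\ell.
\end{equation*}

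The main subtlety is the case split driven by the location of $z^*$ relative to $\hat{z}$: without the monotonicity observation, one would naively be forced to bound $h(\hat z)$ in terms of \emph{both} a left and a right neighbor (via $h(\hat z)\leq h(z_i)+h(z_j)$ for some sandwich $z_i\leq\hat z\leq z_j$), which costs an extra factor of $2$ and produces only $(2/\gamma)\bar\ell$. The crucial gain of the convex-monotonicity argument is that one only needs $h$ to be controlled on one side of $\hat z$, which is exactly enough to match the geometric-median approximation factor $1+1/(2\gamma)$ from Section~\ref{sec:geometric}. The only other care needed is trimming $G$ down to $G'$ of size $(\half+\gamma)k$, since otherwise a larger $|G|$ would spuriously inflate the bound.
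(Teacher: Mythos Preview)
Your proof is correct and follows essentially the same approach as the paper's: both fix $(\vx,y)$, use convexity of $z\mapsto g(z,y)$ to argue that $h$ is monotone on one side of the median (you via the minimizer $z^*$, the paper via comparing $h(z_t)$ to its immediate neighbors $h(z_{t\pm 1})$), intersect that half with the good set to get at least $\gamma k$ indices dominating $h(\hat z)$, and take expectations. Your explicit trimming of $G$ down to $G'$ of size exactly $(\half+\gamma)k$ and the uniform bound $|J\cap G'|\geq\gamma k$ make the expectation step cleaner than the paper's version, which writes $\hat L\leq\frac{|I|}{|I\cap[t]|}\bar\ell$ without spelling out that this ratio is uniformly bounded over $(\vx,y)$.
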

\begin{proof}
Let $I = \{ i : L(\vw_i) \leq \bar{\ell}\}$. 
Assume without loss of generality that for $i \in [k-1]$, $f(\vx,\vw_i) \leq f(\vx,\vw_{i+1})$.
Let $t \in [k]$ such that $\hat{y}(\vx) = f(\vx,\vw_t)$.
By the convexity of $g$, at least one of $g(f(\vx,\vw_{t}),y) \leq g(f(\vx,\vw_{t-1}, y))$ and $g(f(\vx,\vw_{t}),y) \leq g(f(\vx,\vw_{t+1}, y))$ holds.
assume without loss of generality that the first inequality holds. It follows that for all $i \in [t]$, $g(f(\vx,\vw_i),y) \geq g(f(\vx,\vw_{t}, y))$. 
Therefore,
\begin{align*}
\hat{\ell}(\vx,y) &= g(f(\vx,\vw_{t}), y)) \leq \frac{1}{|I \cap [t]|} \sum_{i \in I \cap [t]} g(f(\vx,\vw_i),y) \\
&\leq \frac{1}{|I \cap [t]|} \sum_{i \in I} g(f(\vx,\vw_i),y) = \frac{1}{|I \cap [t]|} \sum_{i \in I} \ell((\vx,y),\vw_i).
\end{align*}
Taking expectation over $(\vx,y)$, 
\[
\hat{L} \leq \frac{1}{|I \cap [t]|} \sum_{i \in I} L(\vw_i) \leq \frac{|I|}{|I \cap [t]|}\bar{\ell} \leq \frac{\half+\gamma}{\gamma}\bar{\ell},
\]
where the last inequality follows from the assumption that $|I| \geq (\half + \gamma)k$.
\end{proof}

A downside of this approach is that each prediction requires many applications of a predictor. 
If there is also access to unlimited unlabeled data, a possible approach to circumvent this issue is to generate
predictions for a large set of random unlabeled data points based on the aggregate predictor, and then use the resulting labeled pairs as a training set to find a single predictor with a loss that approaches the loss of the aggregate predictor. A similar approach for derandomizing randomized classifiers was suggested by \cite{kaariainen}. 

\section{Conclusion}

In this paper we show several applications of a generalized median-of-means approach to estimation.
In particular, for linear regression we establish convergence rates for heavy-tailed distributions that 
match the min-max rates up to logarithmic factors. We further show conditions that allow parameter estimation
using the Lasso under heavy-tailed noise, and cases under which low-rank covariance matrix approximation is possible
for heavy-tailed distributions. 

The core technique is based on performing independent estimates on separate random samples, and then
combining these estimates. Other works have considered approaches which resemble this general scheme but
provide other types of guarantees.
For instance, in \cite{zhangDuWa13}, faster parallel kernel ridge regression is achieved by performing loss minimizations on independent samples and then averaging the resulting estimators. In \cite{RakhlinSrTs13}, faster rates of convergence for regression for some classes of estimators are achieved, using linear combinations of risk minimizers over subsets of the class of estimators. These works, together with ours, demonstrate that empirical risk minimization can be used as a black box to generate new algorithms with improved statistical performance.

\acks{%
  Part of this work was completed while the authors were at Microsoft Research New England.
  Daniel Hsu was supported by a Yahoo Academic Career Enhancement Award.
  Sivan Sabato is supported by the Lynne and William Frankel Center for Computer Science.}

\appendix

\section{Proof of \thmref{lasso}}\label{app:proofs}

From the definition of $\hat{\vw}$ as a minimizer we have
\begin{equation}\label{eq:psi}
\|\Psi(\vw_\opt - \hat{\vw})\|_2^2 + 2 \lambda \|\hat{\vw}\|_1
\leq 2 \lambda \|\vw_\opt\|_1 + 2\veps^\t \Psi(\hat{\vw} - \vw_\opt) . 
\end{equation}
By H\"older's inequality the assumptions of the theorem,
$2\veps^\t \Psi(\hat{\vw} - \vw_\opt)
\leq 2\|\veps^\t \Psi\|_\infty \|\hat{\vw} - \vw_\opt\|_1
\leq \lambda \|\hat{\vw} - \vw_\opt\|_1$.
Combining this with \eqref{psi} gives
\[
\|\Psi(\vw_\opt - \hat{\vw})\|_2^2
\leq
2 \lambda \|\vw_\opt\|_1
- 2 \lambda \| \hat{\vw}\|_1
+ \lambda \|\hat{\vw} - \vw_\opt\|_1
.
\]
Adding $\lambda  \|(\hat{\vw} - \vw)\|_1$ to both sides we get
\begin{align*}
\|\Psi(\vw_\opt - \hat{\vw})\|_2^2 + \lambda   \| \hat{\vw} - \vw_\opt\|_1
& \leq 2 \lambda
\Bigl( \| \hat{\vw} - \vw_\opt\|_1 + \| \vw_\opt\|_1 - \| \hat{\vw}\|_1 \Bigr) \\
& = 2 \lambda \sum_{j=1}^d \Bigl(|\hat{\vw}[j] - \vw_\opt[j]| + |\vw_\opt[j]| - |\hat{\vw}[j]| \Bigr) \\
& = 2 \lambda \sum_{j\in\supp(\vw)}
\Bigl( |\hat{\vw}[j] - \vw_\opt[j]| + |\vw_\opt[j]| - |\hat{\vw}[j]| \Bigr) \\
& \leq 4 \lambda \sum_{j\in\supp(\vw)} |\hat{\vw}[j] - \vw_\opt[j]| \\
& = 4 \lambda \| [\hat{\vw} - \vw_\opt]_{\supp(\vw)}\|_1.
\end{align*}
It follows that
\[
\norm{[\hat{\vw} - \vw_\opt]_{\supp(\vw_\opt)^C}}_1 \leq 3\norm{[\hat{\vw}-\vw_\opt]_{\supp(\vw_\opt)}},
\]
therefore $\hat{\vw}-\vw_\opt \in E_s$. Denote $\vec{\delta} = \hat{\vw} - \vw$.
The above derivation also implies 
\[
\norm{\Psi \vec{\delta}}_2^2 \leq 3\lambda \norm{[\vec{\delta}]_{\supp(\vw_\opt)}}_1 \leq 
3\lambda \norm{\vec{\delta}_{[s]}}_1 \leq 3\lambda\sqrt{s} \norm{\vec{\delta}_{[s]}}_2.
\]
Denote for brevity $\gamma = \gamma(\Psi,s)$.
From the definition of $\gamma$,
\[
\norm{\vec{\delta}_{[s]}}^2_2 \leq \frac{1}{\gamma^2}\norm{\Psi \vec{\delta}}_2^2 \leq \frac{3\lambda\sqrt{s} \norm{\vec{\delta}_{[s]}}_2}{\gamma^2} ,
\]
Therefore
$\norm{\vec{\delta}_{[s]}}_2 \leq \frac{3\lambda\sqrt{s}}{\gamma^2}$.
Now, 
\[
\norm{\vec{\delta}}_2 = \norm{\vec{\delta}_{[s]^C}}_2 + \norm{\vec{\delta}_{[s]}}_2 \leq  \sqrt{\norm{\vec{\delta}_{[s]^C}}_1\norm{\vec{\delta}_{[s]^C}}_\infty} + \norm{\vec{\delta}_{[s]}}_2.
\]

From $\vec{\delta} \in E_s$ we get $\norm{\vec{\delta}_{[s]^C}}_1 \leq 3\norm{\vec{\delta}_{[s]}}_1$.
In addition, since $\vec{\delta}_{[s]}$ spans the largest coordinates of $\vec{\delta}$ in absolute value, $\norm{\vec{\delta}_{[s]^C}}_\infty \leq \norm{\vec{\delta}_{[s]}}_1/s$. Combining these with the inequality above we get
\[
\norm{\vec{\delta}}_2 \leq 3\norm{\vec{\delta}_{[s]}}_1/\sqrt{s} + \norm{\vec{\delta}_{[s]}}_2 \leq 4\norm{\vec{\delta}_{[s]}}_2 \leq \frac{12\lambda \sqrt{s}}{\gamma^2}.
\]
\hfill\BlackBox

\bibliography{bib}

\begin{thebibliography}{45}
\providecommand{\natexlab}[1]{#1}
\providecommand{\url}[1]{\texttt{#1}}
\expandafter\ifx\csname urlstyle\endcsname\relax
  \providecommand{\doi}[1]{doi: #1}\else
  \providecommand{\doi}{doi: \begingroup \urlstyle{rm}\Url}\fi

\bibitem[Alon et~al.(1999)Alon, Matias, and Szegedy]{alon99}
Noga Alon, Yossi Matias, and Mario Szegedy.
\newblock The space complexity of approximating the frequency moments.
\newblock \emph{Journal of Computer and System Sciences}, 58:\penalty0
  137--147, 1999.

\bibitem[Audibert and Catoni(2011)]{AC11}
Jean-Yves Audibert and Olivier Catoni.
\newblock Robust linear least squares regression.
\newblock \emph{Ann. Stat.}, 39\penalty0 (5):\penalty0 2766--2794, 2011.

\bibitem[Belloni and Chernozhukov(2011)]{belloni2011l1}
Alexandre Belloni and Victor Chernozhukov.
\newblock l1-penalized quantile regression in high-dimensional sparse models.
\newblock \emph{The Annals of Statistics}, 39\penalty0 (1):\penalty0 82--130,
  2011.

\bibitem[Bickel et~al.(2009)Bickel, Ritov, and Tsybakov]{bickel09}
Peter~J Bickel, Ya'acov Ritov, and Alexandre~B Tsybakov.
\newblock Simultaneous analysis of lasso and dantzig selector.
\newblock \emph{The Annals of Statistics}, 37\penalty0 (4):\penalty0
  1705--1732, 2009.

\bibitem[Breiman(1996)]{Breiman96}
Leo Breiman.
\newblock Bagging predictors.
\newblock \emph{Machine learning}, 24\penalty0 (2):\penalty0 123--140, 1996.

\bibitem[{Brownlees} et~al.(2014){Brownlees}, {Joly}, and {Lugosi}]{lugosi}
C.~{Brownlees}, E.~{Joly}, and G.~{Lugosi}.
\newblock {Empirical risk minimization for heavy-tailed losses}.
\newblock \emph{ArXiv e-prints}, June 2014.

\bibitem[Bubeck et~al.(2013)Bubeck, Cesa-Bianchi, and Lugosi]{BubeckCeLu13}
S.~Bubeck, N.~Cesa-Bianchi, and G.~Lugosi.
\newblock Bandits with heavy tail.
\newblock \emph{IEEE Transactions on Information Theory}, 59:\penalty0
  7711--7717, 2013.

\bibitem[Catoni(2012)]{catoni}
Olivier Catoni.
\newblock Challenging the empirical mean and empirical variance: a deviation
  study.
\newblock \emph{Ann. Inst. H. Poincaré Probab. Statist.}, 48\penalty0
  (4):\penalty0 1148--1185, 2012.

\bibitem[Chatterjee and Lahiri(2013)]{chatterjee2013rates}
A~Chatterjee and SN~Lahiri.
\newblock Rates of convergence of the adaptive lasso estimators to the oracle
  distribution and higher order refinements by the bootstrap.
\newblock \emph{The Annals of Statistics}, 41\penalty0 (3):\penalty0
  1232--1259, 2013.

\bibitem[Chatterjee and Hadi(1986)]{chatterjee1986influential}
Samprit Chatterjee and Ali~S Hadi.
\newblock Influential observations, high leverage points, and outliers in
  linear regression.
\newblock \emph{Statistical Science}, 1\penalty0 (3):\penalty0 379--393, 1986.

\bibitem[Efron(1979)]{Efron79}
Bradley Efron.
\newblock Bootstrap methods: another look at the jackknife.
\newblock \emph{The annals of Statistics}, pages 1--26, 1979.

\bibitem[Fan et~al.(2012)Fan, Fan, and Barut]{fan2012adaptive}
Jianqing Fan, Yingying Fan, and Emre Barut.
\newblock Adaptive robust variable selection.
\newblock \emph{arXiv preprint arXiv:1205.4795}, 2012.

\bibitem[Hsu and Sabato(2013)]{HS13-heavy}
Daniel Hsu and Sivan Sabato.
\newblock Approximate loss minimization with heavy tails.
\newblock \emph{CoRR}, abs/1307.1827, 2013.
\newblock URL \url{http://arxiv.org/abs/1307.1827}.

\bibitem[Hsu and Sabato(2014)]{HsuSabato14}
Daniel Hsu and Sivan Sabato.
\newblock Heavy-tailed regression with a generalized median-of-means.
\newblock In \emph{Thirty-First International Conference on Machine Learning},
  2014.

\bibitem[Hsu et~al.(2014)Hsu, Kakade, and Zhang]{HKZ12}
Daniel Hsu, Sham~M. Kakade, and Tong Zhang.
\newblock Random design analysis of ridge regression.
\newblock \emph{Foundations of Computational Mathematics}, 14\penalty0
  (3):\penalty0 569--600, 2014.

\bibitem[Huber(1981)]{Huber81}
P.~J. Huber.
\newblock \emph{Robust Statistics}.
\newblock Wiley, 1981.

\bibitem[Juditsky and Nemirovski(2008)]{JN08}
Anatoli Juditsky and Arkadii~S. Nemirovski.
\newblock Large deviations of vector-valued martingales in 2-smooth normed
  spaces.
\newblock \emph{ArXiv e-prints}, 0809.0813, 2008.

\bibitem[K{\"a}{\"a}ri{\"a}inen(2005)]{kaariainen}
Matti K{\"a}{\"a}ri{\"a}inen.
\newblock Generalization error bounds using unlabeled data.
\newblock In \emph{Learning Theory}, pages 127--142. Springer, 2005.

\bibitem[Koltchinskii et~al.(2011)Koltchinskii, Lounici, and Tsybakov]{KLT11}
V.~Koltchinskii, K.~Lounici, and A.~B. Tsybakov.
\newblock Nuclear norm penalization and optimal rates for noisy low rank matrix
  completion.
\newblock \emph{Annals of Statistics}, 39\penalty0 (5):\penalty0 2302--2329,
  2011.

\bibitem[Kuratowski(1935)]{Kuratowski35}
Casimir Kuratowski.
\newblock Quelques probl\`{e}mes concernant les espaces m\'{e}triques
  non-s\'{e}parables.
\newblock \emph{Fundamenta Mathematicae}, 25\penalty0 (1):\penalty0 534--545,
  1935.

\bibitem[Lepski(1991)]{lepski}
O.~V. Lepski.
\newblock Asymptotically minimax adaptive estimation {I}: Upper bounds.
  optimally adaptive estimates.
\newblock \emph{Theory Probab. Appl.}, 36\penalty0 (4):\penalty0 682--697,
  1991.

\bibitem[{Lerasle} and {Oliveira}(2011)]{LO11}
M.~{Lerasle} and R.~I. {Oliveira}.
\newblock {Robust empirical mean Estimators}.
\newblock \emph{ArXiv e-prints}, December 2011.

\bibitem[Levin(2005)]{Levin-notes}
Leonid~A. Levin.
\newblock Notes for miscellaneous lectures.
\newblock \emph{CoRR}, abs/cs/0503039, 2005.

\bibitem[Litvak et~al.(2005)Litvak, Pajor, Rudelson, and
  Tomczak-Jaegermann]{LPRTJ05}
Alexander~E. Litvak, Alain Pajor, Mark Rudelson, and Nicole Tomczak-Jaegermann.
\newblock Smallest singular value of random matrices and geometry of random
  polytopes.
\newblock \emph{Adv. Math.}, 195\penalty0 (2):\penalty0 491--523, 2005.
\newblock ISSN 0001-8708.
\newblock \doi{10.1016/j.aim.2004.08.004}.
\newblock URL \url{http://dx.doi.org/10.1016/j.aim.2004.08.004}.

\bibitem[Mahdavi and Jin(2013)]{MJ13}
Mehrdad Mahdavi and Rong Jin.
\newblock Passive learning with target risk.
\newblock In \emph{Twenty-Sixth Conference on Learning Theory}, 2013.

\bibitem[{Mendelson}(2014)]{Mendelson14}
S.~{Mendelson}.
\newblock {Learning without Concentration}.
\newblock \emph{ArXiv e-prints}, January 2014.

\bibitem[Minsker(2013)]{Minsker13}
Stanislav Minsker.
\newblock Geometric median and robust estimation in banach spaces.
\newblock \emph{arXiv preprint arXiv:1308.1334}, 2013.

\bibitem[Nemirovsky and Yudin(1983)]{NY}
A.~S. Nemirovsky and D.~B. Yudin.
\newblock \emph{Problem Complexity and Method Efficiency in Optimization}.
\newblock Wiley-Interscience, 1983.

\bibitem[Nussbaum(1999)]{Nussbaum99}
M.~Nussbaum.
\newblock Minimax risk: Pinsker bound.
\newblock In S.~Kotz, editor, \emph{Encyclopedia of Statistical Sciences,
  Update Volume 3}, pages 451--460. Wiley, New York, 1999.

\bibitem[Oliveira(2010)]{Oliveria10}
Roberto Oliveira.
\newblock Sums of random {H}ermitian matrices and an inequality by {R}udelson.
\newblock \emph{Electron. Commun. Probab.}, 15\penalty0 (19):\penalty0
  203--212, 2010.

\bibitem[Rakhlin et~al.(2013)Rakhlin, Sridharan, and Tsybakov]{RakhlinSrTs13}
Alexander Rakhlin, Karthik Sridharan, and Alexandre~B. Tsybakov.
\newblock Empirical entropy, minimax regret and minimax risk.
\newblock \emph{arXiv preprint arXiv:1308.1147}, 2013.

\bibitem[{Shamir}(2014)]{ohad}
O.~{Shamir}.
\newblock {The Sample Complexity of Learning Linear Predictors with the Squared
  Loss}.
\newblock \emph{ArXiv e-prints}, June 2014.

\bibitem[Srebro et~al.(2010)Srebro, Sridharan, and Tewari]{smooth-loss}
Nathan Srebro, Karthik Sridharan, and Ambuj Tewari.
\newblock Smoothness, low noise and fast rates.
\newblock In \emph{Advances in Neural Information Processing Systems 23}, 2010.

\bibitem[Srivastava and Vershynin(2013)]{SV}
N.~Srivastava and R.~Vershynin.
\newblock Covariance estimation for distributions with $2+\epsilon$ moments.
\newblock \emph{Annals of Probability}, 41:\penalty0 3081--3111, 2013.

\bibitem[Tibshirani(1996)]{tibshirani96}
Robert Tibshirani.
\newblock Regression shrinkage and selection via the lasso.
\newblock \emph{Journal of the Royal Statistical Society. Series B
  (Methodological)}, pages 267--288, 1996.

\bibitem[van~de Geer and M{\"u}ller(2012)]{van2012quasi}
Sara van~de Geer and Patric M{\"u}ller.
\newblock Quasi-likelihood and/or robust estimation in high dimensions.
\newblock \emph{Statistical Science}, 27\penalty0 (4):\penalty0 469--480, 2012.

\bibitem[Wang et~al.(2007)Wang, Li, and Jiang]{wang2007robust}
Hansheng Wang, Guodong Li, and Guohua Jiang.
\newblock Robust regression shrinkage and consistent variable selection through
  the lad-lasso.
\newblock \emph{Journal of Business \& Economic Statistics}, 25\penalty0
  (3):\penalty0 347--355, 2007.

\bibitem[Wang(2013)]{wang2013l1}
Lie Wang.
\newblock L1 penalized lad estimator for high dimensional linear regression.
\newblock \emph{Journal of Multivariate Analysis}, 2013.

\bibitem[Wolfowitz(1950)]{wolfowitz50}
J.~Wolfowitz.
\newblock Minimax estimates of the mean of a normal distribution with known
  variance.
\newblock \emph{The Annals of Mathematical Statistics}, 21:\penalty0 218--230,
  1950.

\bibitem[Wu and Liu(2009)]{wu2009variable}
Yichao Wu and Yufeng Liu.
\newblock Variable selection in quantile regression.
\newblock \emph{Statistica Sinica}, 19\penalty0 (2):\penalty0 801, 2009.

\bibitem[Zhang(2009)]{zhang09}
Tong Zhang.
\newblock Some sharp performance bounds for least squares regression with l1
  regularization.
\newblock \emph{The Annals of Statistics}, 37\penalty0 (5A):\penalty0
  2109--2144, 2009.

\bibitem[Zhang et~al.(2013)Zhang, Duchi, and Wainwright]{zhangDuWa13}
Yuchen Zhang, John~C Duchi, and Martin~J Wainwright.
\newblock Divide and conquer kernel ridge regression: A distributed algorithm
  with minimax optimal rates.
\newblock \emph{arXiv preprint arXiv:1305.5029}, 2013.

\bibitem[Zhou(2009)]{Zhou09}
Shuheng Zhou.
\newblock Restricted eigenvalue conditions on subgaussian random matrices.
\newblock \emph{arXiv preprint arXiv:0912.4045}, 2009.

\bibitem[Zou(2006)]{zou2006adaptive}
Hui Zou.
\newblock The adaptive lasso and its oracle properties.
\newblock \emph{Journal of the American statistical association}, 101\penalty0
  (476):\penalty0 1418--1429, 2006.

\bibitem[Zou and Yuan(2008)]{zou2008composite}
Hui Zou and Ming Yuan.
\newblock Composite quantile regression and the oracle model selection theory.
\newblock \emph{The Annals of Statistics}, 36\penalty0 (3):\penalty0
  1108--1126, 2008.

\end{thebibliography}

\end{document}